\DeclareMathOperator*{\argmax}{arg\,max}
\DeclareMathOperator*{\argmin}{arg\,min}
\DeclareMathOperator{\action}{\mathbf{a}}
\DeclareMathOperator{\baction}{\mathbf{b}}
\DeclareMathOperator{\state}{\mathbf{s}}
\DeclareMathOperator{\E}{\mathbb{E}} %
\newtheorem{proposition}{Proposition}
\newtheorem{assumption}{Assumption}
\newtheorem{lemma}[proposition]{Lemma}
\newtheorem{theorem}[proposition]{Theorem}
\newtheorem{defn}{Definition}
\newtheorem{remark}{Remark}
\def\E{{\mathbb{E}}}
\def\eqdef{\stackrel{\text{def}}{=}}
\def\optPolicyPR{\pi^*_{P,\alpha}}
\def\optPolicyNR{\pi^*_{N,\alpha}}
\def\ValuePR{v_{P,\alpha}}
\def\ValueNR{v_{N,\alpha}}
\definecolor{yellowY}{HTML}{00F9DE}
\icmltitlerunning{Action Robust Reinforcement Learning and Applications in Continuous Control}
\begin{document}

\twocolumn[
\icmltitle{Action Robust Reinforcement Learning and Applications in Continuous Control}

% It is OKAY to include author information, even for blind
% submissions: the style file will automatically remove it for you
% unless you've provided the [accepted] option to the icml2019
% package.

% List of affiliations: The first argument should be a (short)
% identifier you will use later to specify author affiliations
% Academic affiliations should list Department, University, City, Region, Country
% Industry affiliations should list Company, City, Region, Country

% You can specify symbols, otherwise they are numbered in order.
% Ideally, you should not use this facility. Affiliations will be numbered
% in order of appearance and this is the preferred way.
\icmlsetsymbol{equal}{*}

\begin{icmlauthorlist}
\icmlauthor{Chen Tessler}{equal,to}
\icmlauthor{Yonathan Efroni}{equal,to}
\icmlauthor{Shie Mannor}{to}
\end{icmlauthorlist}

\icmlaffiliation{to}{Department of Electrical Engineering, Technion Institute of Technology, Haifa, Israel}

\icmlcorrespondingauthor{Chen Tessler}{chen.tessler@campus.technion.ac.il}
\icmlcorrespondingauthor{Yonathan Efroni}{jonathan.e@campus.technion.ac.il}

% You may provide any keywords that you
% find helpful for describing your paper; these are used to populate
% the "keywords" metadata in the PDF but will not be shown in the document
\icmlkeywords{Machine Learning, ICML}

\vskip 0.3in
]

% this must go after the closing bracket ] following \twocolumn[ ...

% This command actually creates the footnote in the first column
% listing the affiliations and the copyright notice.
% The command takes one argument, which is text to display at the start of the footnote.
% The \icmlEqualContribution command is standard text for equal contribution.
% Remove it (just {}) if you do not need this facility.

%\printAffiliationsAndNotice{}  % leave blank if no need to mention equal contribution

\printAffiliationsAndNotice{\icmlEqualContribution} % otherwise use the standard text.

% \begin{bibunit}

\begin{abstract}
A policy is said to be robust if it maximizes the reward while considering a bad, or even adversarial, model. In this work we formalize two new criteria of robustness to action uncertainty. 
Specifically, we consider two scenarios in which the agent attempts to perform an action $\action$, and (i) with probability $\alpha$, an alternative adversarial action $\bar \action$ is taken, or (ii) an adversary adds a perturbation to the selected action in the case of continuous action space. We show that our criteria are related to common forms of uncertainty in robotics domains, such as the occurrence of abrupt forces, and suggest algorithms in the tabular case. Building on the suggested algorithms, we generalize our approach to deep reinforcement learning (DRL) and provide extensive experiments in the various MuJoCo domains. 
Our experiments show that not only does our approach produce robust policies, but it also improves the performance in the absence of perturbations. 
This generalization indicates that action-robustness can be thought of as implicit regularization in RL problems.

\end{abstract}

\section{Introduction}
Recent advances in Reinforcement Learning (RL) have demonstrated its potential in real-world deployment. However, since in RL it is normally assumed that the train and test domains are identical, it is not clear how a learned policy would generalize under small perturbations. For example, consider the task of robotic manipulation in which the task is to navigate towards a goal. As the policy is trained on a specific parameter set (mass, friction, etc...), it is not clear what would happen when these parameters change, e.g., if the robot is slightly lighter/heavier. %\comY{Delete (it belongs to next paragraph): Robust MDPs \cite{wiesemann2013robust} are a framework in RL which help cope with such problems.}

%\comY{delete (We discussed on robotics in previous paragraph, no need to push it too much; it is just a motivation, there are motivations from different types of domains such as medicine, automatic drive and more):such as robotic manipulation}. Whether the policy is trained in simulation (Sim2Real, \citet{peng2018sim}) or directly on a robot \cite{levine2016end}, it is unsafe to assume the model is perfect - as there almost surely exists a mismatch between the parameters encountered during training and those during the deployment (evaluation) phase.
The advantage of robust policies is highlighted when considering imperfect models, a common scenario in real world tasks such as autonomous vehicles. Even if the model is trained in the real world, certain variables such as traction, tire pressure, humidity, vehicle mass and road conditions may vary over time. These changes affect the dynamics of our model, a property which should be considered during the optimization process. Robust MDPs \cite{nilim2005robust,iyengar2005robust,wiesemann2013robust} tackle this issue by solving a max-min optimization problem over a set of possible model parameters, an uncertainty set, e.g., the range of values which the vehicle's mass may take - the goal is thus to maximize the reward, with respect to (w.r.t.) the worst possible outcome.
Previously, Robust MDPs have been analyzed extensively in the theoretical community, in the tabular case \cite{nilim2005robust,iyengar2005robust,xu2007robustness,mannor2012lightning,wiesemann2013robust} and under linear function approximation \cite{tamar2013scaling}. However, as these works analyze uncertainty in the transition probabilities: (i) it is not clear how to obtain these uncertainty sets, and (ii) it is not clear if and how these approaches may be extended to non-linear function approximation schemes, e.g., neural networks. Recently, this problem has been tackled, empirically, by the Deep RL community \cite{pinto2017robust,peng2018sim}. While these approaches seem to work well in practice, they require access and control of a simulator and are not backed by theoretical guarantees - a well known problem in adversarial training \cite{barnett2018convergence}.

%Previous works have tackled the problem of learning robust policies with varying success. (i) \citet{nilim2005robust} consider uncertainty in the transition probabilities, however their approach is only applicable to tabular RL; (ii) \citet{pinto2017robust} tackle robustness in high dimensional spaces by training an adversary to apply external forces, yet it is not clear how these forces should be modeled and how to train such a model in the absence of a simulator. Additionally, while their approach works well in practice, it is not clear if their approach should converge - a well known problem in adversarial training \cite{barnett2018convergence}; (iii) \citet{peng2018sim} trains a policy on various models sampled from the uncertainty set, however this training method assumes a prior over the uncertainty set and does not optimize for the worst-case, an approach deemed Soft-Robust RL \cite{derman2018soft}.

Our approach tackles these problems by introducing a natural way to define robustness - robustness w.r.t. action perturbations - a scenario in which the agent attempts to perform an action and due to disturbances, such as noise or model uncertainty, acts differently than expected. In this work, we consider two distinct robustness criteria: given an action provided by the policy (i) the \textit{Probabilistic Action Robust MDP (PR-MDP, Section \ref{sec: probabilistic AR})} criterion considers the case in which, with probability $\alpha$, a different possibly adversarial action is taken; and (ii) the \textit{Noisy Action Robust MDP (NR-MDP, Section \ref{sec: noisy AR})} criterion, in which a perturbation is added to the action itself. These two criteria are strongly correlated to real world uncertainty; the former correlates to abrupt interruptions such as a sudden push and the latter correlates to a constant interrupting force. For instance, if the robot is heavier, this may be seen as an adversary applying force in the opposite direction \cite{bacsar2008h}.

In Section \ref{sec: experiments}, we extend our approach to Deep RL, perform extensive evaluation across several MuJoCo \cite{todorov2012mujoco} environments and show the ability of our approach to produce robust policies. We empirically analyze the differences between the PR-MDP and NR-MDP approaches, and demonstrate their ability to produce robust policies under abrupt perturbations and mass uncertainty. Surprisingly, we observe that even in the absence of perturbations, solving for the action robust criteria results in improved performance\footnote{Our code can be found in the following repository: \href{https://github.com/tesslerc/ActionRobustRL}{https://github.com/tesslerc/ActionRobustRL}}.

%\comY{I think that the summary is not necessary (in all papers)}Our paper is structured as follows: In Sections~\ref{sec: probabilistic AR}-\ref{sec: noisy AR} we present our approach, formulate the corresponding robust max-min optimization criterion and discuss the relations to previous robust policy criteria. Furthermore, we establish a general approach to solving the optimization problem in policy space. In Section~\ref{sec: experiments}, building on the formulated approach, we generalize existing Deep RL (DRL) algorithms, specifically Deep Deterministic Policy Gradients (DDPG, \citet{lillicrap2015continuous}), to solve the offered robust action criteria, and empirically analyze their performance. We demonstrate their robustness in the presence of various perturbations. Surprisingly, we find that training with the formulated robust criteria leads to significantly better performance when compared to the non-robust counterpart - even in the absence of perturbations.

\section{Preliminaries}

\subsection{Markov Decision Process}\label{sec:mdp}
We consider the framework of infinite-horizon discounted Markov Decision Process (MDP) with continuous action space. An MDP is defined as the 5-tuple $(\mathcal{S}, \mathcal{A},P,R,\gamma)$ \cite{puterman1994markov}, where ${\mathcal S}$ is a finite state space, ${\mathcal A}$ is a compact and convex action metric space. We assume $P \equiv P(\state'|\state,\action)$ is a transition kernel and is weakly continuous in $\action$, $R \equiv r(\state,\action)$ is a reward function continuous in $a$, and $\gamma\in(0,1)$. Let $\pi: \mathcal{S}\rightarrow \mathcal{P}(\mathcal{A})$ be a stationary policy, where $\mathcal{P}(\mathcal{A})$ is the set of probability measures on the Borel sets of $\mathcal{A}$. We denote $\Pi$ as the set of stationary deterministic policies on $\mathcal{A}$, i.e., if $\pi\in\Pi$ then $\pi: \mathcal{S}\rightarrow \mathcal{A}$, and $\mathcal{P}(\Pi)$ as the set of stationary stochastic policies.  Let $v^\pi \in \mathbb{R}^{|\mathcal{S}|}$ be the value of a policy $\pi,$ defined in state $\state$ as $v^\pi(\state) \equiv \E^\pi[\sum_{t=0}^\infty\gamma^tr(\state_t,\action_t)\mid \state_0=\state]$, where $\action_t\sim \pi(\state_t)$ is a random-variable, $\E^\pi$ denotes expectation w.r.t. the distribution induced by $\pi$ and conditioned on the event $\{\state_0=\state\}.$ 
% For brevity, we respectively denote the reward and value at time $t$ by $r_t\equiv r(\state_t,\pi_t(\state_t))$ and $v_t\equiv v(\state_t).$  

%It is known that
%\begin{align*}
%v^\pi=\sum_{t=0}^\infty \gamma^t (P^\pi)^t r^\pi=(I-\gamma P^\pi)^{-1}r^\pi,
%\end{align*}
%with the component-wise values $[P^\pi]_{s,s'}  \triangleq P(s'\mid s, \pi(s))$ and $[r^\pi]_s \triangleq  r(s,\pi(s))$. 

The goal is to find a policy $\pi^*,$ yielding the optimal value $v^*$, i.e., for all $\state\in \mathcal{S}$, ${\pi^*(\state) \in \argmax_{\pi'\in \mathcal{P}(\Pi)} \E^{\pi'}[\sum_{t=0}^\infty\gamma^tr(\state_t,\action_t)\mid \state_0=\state],}$  
and the optimal value is $v^{*}(\state) = v^{\pi^*}(\state)$. It is known, and quite surprising, that there always exists an optimal policy which is stationary and deterministic, meaning $\pi^*\in \Pi$, e.g., \cite{puterman1994markov}[Theorem 6.2.10].

We note that in all following results we assume continuity of the dynamics and reward in actions. For the exact definitions see Appendix~\ref{supp sec: pre MG}
, Assumption~\ref{assumptions: MG}.

% \subsection{Markov Decision Process (MDP)}
% \com{
% I have the following MDP subsection, I think it is shorter, we can use it to save some space.
% A Markov Decision Processes $\mathcal{M}$ is defined by the tuple $(S, A, R, P, \gamma)$ \citep{sutton1998reinforcement} where $S$ is the set of states, $A$ the available actions, $R : S \times A \times S \mapsto \mathbb{R}$ is the bounded reward function, $P : S \times A \times S \mapsto [0, 1]$ is the transition matrix, where $P(\state'|\state,\action)$ is the probability of transitioning from state $\state$ to $\state'$ assuming action $\action$ was taken and $\gamma \in [0, 1)$ is the discount factor for future rewards. A policy $\pi : S \mapsto \Delta_A$ is a probability distribution over actions and $\pi (a | s)$ denotes the probability of taking action $\action$ at state $\state$. The value of a policy $\pi$, starting from state $\state$ is defined as $v^\pi (\state) \def \E^\pi [\sum_{t=0}^\infty \gamma^t r(\state_t, \action_t) \mid \state_0 = \state]$. The goal is to find the optimal policy $\pi^*$,
% \begin{align*}
%     \pi^* (\state) \in \argmax_{\pi \in \Pi} \mathbb{E}^\pi \left[\sum_t \gamma^t r (\state_t, \action_t) \mid \state_0 = \state\right] \enspace .
% \end{align*}
% }

\subsection{Zero-Sum Games}
As opposed to the standard MDP framework, in a two player zero-sum game, the reward function and transition kernels are functions of both players $\action\in \mathcal{A}$ and $\bar \action\in \bar{\mathcal{A}}$, where $\mathcal{A},\bar{\mathcal{A}}$ are compact sets. Assuming the policy of player 1 is $\pi$ and $\bar \pi$ of player 2, the value of the game is defined $\forall s\in \mathcal{S},\  v^{\pi,\bar{\pi}}(\state) \equiv \E^{\pi, \bar \pi}[\sum_{t=0}^\infty\gamma^tr(s_t,a_t,\bar a_t)\mid s_0=s]$. \citet{maitra1970stochastic} generalized result of \citet{shapley1953stochastic} and established that, under proper conditions, the zero sum game has value for any $s \in \mathcal{S}$, i.e.,
\begin{align*}
    v^{*} (\state) &= \max_{\pi \in  \mathcal{P}(\Pi)} \min_{\bar \pi \in \Pi} \E^{\pi, \bar \pi}[\sum_{t=0}^\infty\gamma^tr(s_t,a_t,\bar a_t)\mid s_0=s], \\
    &= \min_{\bar \pi \in \mathcal{P}(\Pi)} \max_{\pi \in \Pi} \E^{\pi, \bar \pi}[\sum_{t=0}^\infty\gamma^tr(s_t,a_t,\bar a_t)\mid s_0=s].
\end{align*}
Note that, in the general case, the optimal maximizing policy is selected from the set of stochastic policies. Policies which attain this value, $\pi^*$ and $\bar{\pi}^*$ for the maximizer and minimizer players, respectively, are said to be in Nash-Equilibrium. In such a scenario, neither player may improve it's outcome further, e.g., $\forall \pi, \bar\pi \in \mathcal{P}(\Pi)$, ${v^{\pi, \bar \pi^*} \leq v^{*} \leq v^{\pi^*, \bar\pi}}.$

%Our goal is to find a policy $\pi^*$ yielding the optimal value $v^*$ such that
%\begin{equation}
%v^* = \max_\pi (I-\gamma P^\pi)^{-1} r^\pi = (I-\gamma P^{\pi^*})^{-1} r^{\pi^*}.\label{mdp}
%%= \lim_{j \to \infty} T^j 0/ \label{mdp}
%\end{equation}
%This goal can be achieved using the three classical operators (with equalities holding component-wise):  
%\begin{align}
%\forall v,\pi,~  T^\pi v & =  r^\pi +\gamma P^\pi v, \label{def: Tpi} \\
%\forall v,~  T v & =  \max_\pi T^\pi v, \\
%%\forall v,\pi,~ \pi \in \G(v) & \Leftrightarrow  T^\pi v = T v.
%\forall v,~\G(v)&= \{\pi : T^\pi v = T v\}, \label{def: greedy policy}
%\end{align}
%where $T^\pi$ is a linear operator, $T$ is the optimal Bellman operator and both $T^\pi$ and $T$ are $\gamma$-contraction mappings w.r.t. the max norm. It is known that the unique fixed points of $T^\pi$ and $T$ are $v^\pi$ and $v^*$, respectively. The set $\G(v)$ is the standard set of 1-step greedy policies w.r.t. $v$. 
% Furthermore, given $v^*$, the set $\G(v^*)$ coincides with that of stationary optimal policies. In other words, every policy that is 1-step greedy w.r.t. $v^*$ is optimal and vice versa.

\section{Probabilistic Action Robust MDP} \label{sec: probabilistic AR}
In this section we introduce the Probabilistic Action Robust MDP (PR-MDP), which can be viewed as a zero-sum game between an agent and an adversary. We refer to the optimal policy of the max-agent in PR-MDP as the optimal probabilistic robust policy. Furthermore, we establish that the game has a well defined value and analyze some properties of this criterion. Lastly, we formulate Policy Iteration (PI) schemes that solve the PR-MDP, and show that they inherit properties corresponding to single agent PI schemes.

\begin{restatable}{defn}{defPRMDP}
    \label{def: PR-MDP}Let $\alpha\in [0,1]$. A Probabilistic Action Robust MDP is defined by the 5-tuple of an MDP (see Section \ref{sec:mdp}). Let $\pi,\bar{\pi}$ be policies of an agent an adversary. We define their probabilistic joint policy $\pi_{P,\alpha}^{\mathrm{mix}}(\pi,\bar{\pi})$ as $\forall s \in \mathcal{S},\ \pi_{P,\alpha}^{\mathrm{mix}}(\action \mid \state) \equiv(1-\alpha)\pi(\action \mid \state)+\alpha \bar{\pi}(\action \mid \state).$
    
    Let $\pi$ be an agent policy. As opposed to standard MDPs, the value of the policy is defined by
    ${\ValuePR^\pi = \min_{\bar{\pi}\in \Pi} \mathbb{E}^{\pi_{P,\alpha}^{\mathrm{mix}}(\pi,\bar{\pi})}[\sum_t \gamma^t r(\state_t,\action_t)]},
    $ where ${\action_t \sim \pi_{P,\alpha}^{\mathrm{mix}}(\pi(\state_t),\bar{\pi}(\state_t))}$. The optimal probabilistic robust policy is the optimal policy of the PR-MDP
    \begin{align}
    \optPolicyPR \in \argmax_{\pi\in \mathcal{P}(\Pi)} \min_{\bar{\pi} \in \Pi} \mathbb{E}^{\pi_{P,\alpha}^{\mathrm{mix}}(\pi,\bar{\pi})}[\sum_t \gamma^t r(\state_t,\action_t)].\label{eq:PR-MDP}
    \end{align}
    The optimal probabilistic robust value is $\ValuePR^*=\ValuePR^{\optPolicyPR}$.
\end{restatable}

Simply put, an optimal probabilistic robust policy is optimal w.r.t. a scenario in which, with probability $\alpha$, an adversary takes control and performs the worst possible action. This approach formalizes a possible inability to control the system and to perform the wanted actions.

In-order to obtain the optimal probabilistic robust policy, one needs to solve the zero-sum game as defined in \eqref{eq:PR-MDP} (see Appendix~\ref{supp: PR-MDP and MG} 
for a formal mapping). It is well known \cite{straffin1993game} that any zero-sum game has a well defined value on the set of stochastic policies, but not always on the set of deterministic policies. Interestingly, and similarly to regular MDPs, the optimal policy of the PR-MDP is a deterministic one as the following proposition asserts (see proof in Appendix~\ref{supp: proof PR-MDP and determinstic optimal policy}).

% As in all zero-sum games, an immediate question which \del{raises}\add{should be asked} is whether \add{or not} the game has a  well defined value\com{what does a well defined value mean? either cite the definition or add it to the paper somewhere.}. The following (\com{add citation}quote Patek) proposition shows that the value of the game is well defined\com{why does strong duality lead to a well defined value?}. We denote this value by $v^*_\alpha$.
\begin{proposition}\label{prop:PR-MDP and determinstic optimal policy} For PR-MDP, there exists an optimal policy which is stationary and deterministic, and strong duality holds in $\Pi$,
\begin{align*}
\ValuePR^* &= \max_{\pi\in \Pi} \min_{\bar{\pi} \in \Pi} \mathbb{E}^{\pi_{P,\alpha}^{\mathrm{mix}}(\pi,\bar{\pi})}[\sum_t \gamma^t r(\state_t,\action_t)]\\
& = \min_{\bar{\pi} \in \Pi} \max_{\pi\in \Pi} \mathbb{E}^{\pi_{P,\alpha}^{\mathrm{mix}}(\pi,\bar{\pi})}[\sum_t \gamma^t r(\state_t,\action_t)].
\end{align*}
\end{proposition}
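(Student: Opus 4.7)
My plan is to exploit the crucial structural feature of the PR-MDP: the one-step reward and transition induced by $\pi_{P,\alpha}^{\mathrm{mix}}(\pi,\bar\pi)$ are \emph{additively separable} in the agent and adversary actions. This collapses the $\max\min$ in the PR-Bellman optimality equation into a sum of a decoupled $\max$ and $\min$ over single actions, so deterministic optimal policies and strong duality on $\Pi$ follow without having to invoke any minimax theorem for stochastic strategies.

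Concretely, I would introduce the candidate PR-Bellman operator
\begin{align*}
T_\alpha v(\state) &= \max_{\action\in\mathcal{A}}\min_{\bar\action\in\mathcal{A}}\bigl\{(1-\alpha)Q_v(\state,\action)+\alpha Q_v(\state,\bar\action)\bigr\}\\
&= (1-\alpha)\max_{\action\in\mathcal{A}} Q_v(\state,\action) + \alpha\min_{\bar\action\in\mathcal{A}} Q_v(\state,\bar\action),
\end{align*}
where $Q_v(\state,\action):=r(\state,\action)+\gamma\sum_{\state'}P(\state'|\state,\action)v(\state')$ and the second line is the key separability observation. By the continuity-in-actions assumption (Assumption~\ref{assumptions: MG}) together with the compactness of $\mathcal{A}$, both extrema are attained and admit measurable deterministic selectors $\pi_v,\bar\pi_v\in\Pi$; $T_\alpha$ is monotone and a $\gamma$-contraction in $\|\cdot\|_\infty$, so it has a unique fixed point $v^\dagger$. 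Writing $v^{\pi,\bar\pi}$ for the value of the joint policy $\pi_{P,\alpha}^{\mathrm{mix}}(\pi,\bar\pi)$ and using that for each fixed $\bar\pi\in\Pi$ (resp.\ $\pi\in\Pi$) the agent (resp.\ adversary) faces a standard MDP whose optimum is attained in $\Pi$, the separability identity applied at $v^\dagger$ shows that the deterministic pair $\pi^\dagger:=\pi_{v^\dagger}$, $\bar\pi^\dagger:=\bar\pi_{v^\dagger}$ is a saddle on $\Pi\times\Pi$ with value $v^{\pi^\dagger,\bar\pi^\dagger}=v^\dagger$, giving strong duality on $\Pi$.

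The only remaining and genuinely subtle step is closing the gap with $\optValuePR$ of Definition~\ref{def: PR-MDP}, whose outer $\max$ ranges over the larger set $\mathcal{P}(\Pi)$. For any $\pi\in\mathcal{P}(\Pi)$, the adversary's best-response Bellman operator
\begin{align*}
U^\pi v(\state) = (1-\alpha)\,\mathbb{E}_{\action\sim\pi(\state)}\bigl[Q_v(\state,\action)\bigr] + \alpha\min_{\bar\action\in\mathcal{A}} Q_v(\state,\bar\action)
\end{align*}
satisfies $U^\pi v\leq T_\alpha v$ pointwise because expectations are bounded above by maxima. Both operators are monotone $\gamma$-contractions, so iterating $U^\pi$ starting from $v^\dagger$ and using $U^\pi v^\dagger\leq v^\dagger$ shows that the fixed point of $U^\pi$, which equals $\ValuePR^\pi$, is bounded above by $v^\dagger$. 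Equality is attained at the deterministic $\pi^\dagger\in\Pi\subset\mathcal{P}(\Pi)$, so $\optValuePR=v^\dagger$ is attained at a stationary deterministic policy; combined with the strong-duality identity on $\Pi\times\Pi$ established above, this yields the proposition.
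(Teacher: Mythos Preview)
Your proposal is correct and hinges on the same key observation as the paper: the PR-Bellman operator decouples additively, so that
\[
T_\alpha v(\state) = (1-\alpha)\max_{\action}Q_v(\state,\action) + \alpha\min_{\bar\action}Q_v(\state,\bar\action),
\]
from which deterministic maximizers/minimizers at the fixed point yield the saddle.

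The paper's route differs slightly in packaging. It first embeds the PR-MDP into the general zero-sum Markov-game framework of Appendix~\ref{supp: zero sum MG}, where Sion's theorem (Lemma~\ref{lemma: min max is max min}) and Proposition~\ref{prop: supp minmax and equilibrium value} already deliver the existence of the value and the saddle over \emph{stochastic} strategies $P_A\times P_B$; then it applies the separability identity at $v^*$ to conclude that $\mu^*\in\argmax_{\mu}r^\mu+\gamma P^\mu v^*$ can be taken deterministic. So the paper gets ``$\mathcal{P}(\Pi)$ is no better than $\Pi$'' for free from the general MG machinery it has already built. Your argument is more self-contained: you never invoke a minimax theorem, instead closing the $\mathcal{P}(\Pi)$-versus-$\Pi$ gap directly via the pointwise inequality $U^\pi v\le T_\alpha v$ and monotone iteration. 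This buys independence from the MG appendix at the modest cost of that extra fixed-point comparison step; the paper's version buys brevity by leaning on its general theory.
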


\subsection{Probabilistic Action Robust and Robust MDPs}\label{sec: PR MDP and RMDPs}
Although the approach of PR-MDP might seem orthogonal to the that of Robust MDPs, the former is a specific case of the latter. By using the PR-MDP criterion, a class of models is implicitly defined, and the probabilistic robust policy is optimal w.r.t. the worst possible model in this class.  

To see the equivalence, define the following class of models,
\begin{align*}
&\mathcal{P}_\alpha = \{(1-\alpha)P+\alpha P^{\pi}:  \mathcal{P}(\Pi) \}\\
&\mathcal{R}_\alpha = \{(1-\alpha)r+\alpha r^{\pi}: \pi\in \mathcal{P}(\Pi) \}.
\end{align*} 
A probabilistic robust policy, which solves \eqref{eq:PR-MDP}, is also the solution to the following RMDP (see Appendix~\ref{supp: PR-MDP and RMDP}),
\begin{align*}
\optPolicyPR \in \argmax_{\pi'\in \Pi} \min_{P \in P_\alpha,r \in \mathcal{R}_\alpha} \mathbb{E}_P^{\pi'}[\sum_t \gamma^t r(\state_t,\action_t)],
\end{align*}
where $\mathbb{E}^\pi_P$ is the expectation of policy $\pi$ when the dynamics are given by $P$. This relation explicitly shows that $\pi^*_{P,\alpha}$ is also optimal w.r.t. the worst model in the class $\mathcal{P}_\alpha, \mathcal{R}_\alpha$, which is convex and rectangular uncertainty set \cite{epstein2003recursive}, and formalizes the fact that PR-MDP is a specific instance of RMDP.

\subsection{Policy Iteration Schemes for PR-MDP}\label{sec: PI PR-MDP}
In this section, we analyze Policy Iteration (PI) schemes that solve \eqref{eq:PR-MDP}. Although a Value-Iteration procedure can be easily derived, we focus on the possible PI schemes. PI schemes are central to the currently used actor-critic approaches in continuous control, which we focus on in our experiments. We present two algorithms, Probabilistic Robust PI (Algorithm~\ref{alg:alpha robust PI}) and Soft Probabilistic Robust PI (Algorithm~\ref{alg: soft alpha robust PI}), and discuss the relation between the two.

%Central to this section, is the Soft Robust PI, an algorithm which requires access to gradients and an MDP solver. The empirically tested algorithms in Section~\ref{sec: experiments} are motivated by this algorithm.

\begin{figure*}
\begin{center}

\begin{minipage}[t]{0.4\textwidth}
\centering
 \begin{algorithm}[H]
	\caption{Probabilistic Robust PI }
	\label{alg:alpha robust PI}
	\begin{algorithmic}
		\STATE {\bf Initialize:} $\alpha,\bar{\pi}_0,k=0$
		\WHILE{not changing}
		\STATE $\pi_{k} \in \argmax_{\pi'} v^{\pi_{P,\alpha}^{\mathrm{mix}}(\pi',\bar{\pi}_{k})}$
		\STATE $\bar{\pi}_{k+1} \in \argmin_{\bar{\pi}}r^{\bar{\pi}} +\gamma P^{\bar{\pi}} v^{\pi_{P,\alpha}^{\mathrm{mix}}(\pi_{k},\bar{\pi}_k)}$
		\STATE $k \gets k+1$
		\ENDWHILE
		\STATE {\bf Return $\pi_{k-1} $}
		\STATE 
		\vspace{0.18cm}
	\end{algorithmic}
\end{algorithm}
\end{minipage}
\hspace{0.5cm}
\begin{minipage}[t]{0.4\textwidth}
\centering
 \begin{algorithm}[H]
	\caption{Soft Probabilistic Robust PI }
	\label{alg: soft alpha robust PI}
	\begin{algorithmic}
		\STATE {\bf Initialize:} $\alpha,\eta,\bar{\pi}_0,k=0$
		\WHILE{criterion is not satisfied}
		\STATE $\pi_{k} \in \argmax_{\pi'} v^{\pi_{P,\alpha}^{\mathrm{mix}}(\pi',\bar{\pi}_{k})}$
		\STATE $\bar{\pi}\in \argmin_{\bar{\pi}'}\Big \langle \bar{\pi}',\nabla_{\bar{\pi}} v^{\pi_{P,\alpha}^{\mathrm{mix}}(\pi_{k},\bar{\pi})}\mid_{\bar{\pi}=\bar{\pi}_k} \Big \rangle$
		\STATE $\bar{\pi}_{k+1} = (1-\eta)\bar{\pi}_{k} + \eta  \bar{\pi}$
		\STATE $k \gets k+1$
		\ENDWHILE
		\STATE {\bf Return $\pi_{k-1} $}
	\end{algorithmic}
\end{algorithm}
\end{minipage}
\end{center}
\end{figure*}

The Probabilistic Robust PI (PR-PI, Algorithm \ref{alg:alpha robust PI}) is a two player PI scheme adjusted to solving a PR-MDP (e.g., \citet{rao1973algorithms,hansen2013strategy}). PR-PI repeats two stages, (i)~given a fixed adversary strategy, it calculates the optimal counter strategy, and (ii)~it solves the 1-step greedy policy w.r.t. the value of the agent and adversary mixture policy. As suggested in \citet{shani2018revisiting}, Section 3.1, stage (i) may be performed by any MDP solver.
%Thus, $\alpha$-Robust PI requires access to solve calculate 1-step greedy policies, and an access to an MDP solver\com{last sentence not very clear}.

The Soft Probabilistic Robust PI (Soft PR-PI, Algorithm \ref{alg: soft alpha robust PI}) is updated using gradient information, unlike the PR-PI. Instead of updating the adversary policy using a 1-step greedy update, the adversary policy is updated using a Frank-Wolfe update \cite{frank1956algorithm}. The Franke-Wolfe update, similar to the gradient-projection method, finds a policy which is within the set of feasible policies; as, for instance, the gradient may produce policies out of the simplex. It works by finding the valid policy with the highest correlation, i.e., inner product, with the direction of gradient descent and performs a step towards it. As a convex mixture of two policies is a valid policy, the new policy is ensured to be a valid one.
%\del{It updates by the policy aligned with the gradient descent direction. The projection step assures us the update is done by a valid policy, since the gradient, $\nabla_\pi v^\pi$, is not necessarily a valid policy. }

%I.e., instead of updating the adversary using the 1-step greedy policy, it uses an $\eta$-soft update w.r.t. the same 1-step-greedy policy. Thus, the two algorithms become equivalent for $\eta=1$.
Although the two algorithms might seem disparate, Soft PR-PI merely generalizes the `hard' updates of PR-PI to `soft' ones. This statement is formalized in the following proposition, which is a direct consequence of Theorem 1 in~\citet{scherrer2014local}, see proof in Appendix~\ref{supp: proof  1 step greedy gradients}.
\begin{proposition}\label{prop: 1 step greedy gradients} Let $\pi,\bar{\pi}$ be general policies. Then,
\begin{align*}
    &\argmin_{\bar{\pi}'\in \Pi}r^{\bar{\pi}'} +\gamma P^{\bar{\pi}'} v^{\pi_{P,\alpha}^{\mathrm{mix}}(\pi,\bar{\pi})} \\
&= \argmin_{\bar{\pi}'\in \Pi}\Big \langle \bar{\pi}',\nabla_{\bar{\pi}} v^{\pi_{P,\alpha}^{\mathrm{mix}}(\pi,\tilde{\pi})}\mid_{\tilde{\pi}=\bar{\pi}} \Big \rangle.
\end{align*}

\end{proposition}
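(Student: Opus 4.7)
The plan is to reduce the statement to the standard fact (Theorem 1 in \citet{scherrer2014local}) that, for a single-agent MDP, the Frank--Wolfe linearization direction of $v^{\mu}$ at a base policy $\mu_0$ coincides, as an $\argmin$ over $\Pi$, with the one-step greedy policy $\argmin_{\mu'\in\Pi} r^{\mu'}+\gamma P^{\mu'} v^{\mu_0}$. The only real work is to pass from the mixture parameterization in $\bar\pi$ to this standard form.

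First I would apply the chain rule. Since $\pi_{P,\alpha}^{\mathrm{mix}}(\pi,\tilde\pi)=(1-\alpha)\pi+\alpha\tilde\pi$ is affine in $\tilde\pi$ with derivative $\alpha\,\mathrm{Id}$, writing $\mu(\tilde\pi)\eqdef \pi_{P,\alpha}^{\mathrm{mix}}(\pi,\tilde\pi)$ and viewing $v^{\mu}$ as a function of $\mu$ alone,
\begin{align*}
\nabla_{\bar\pi}\, v^{\pi_{P,\alpha}^{\mathrm{mix}}(\pi,\tilde\pi)}\Big|_{\tilde\pi=\bar\pi}
\;=\;\alpha\,\nabla_{\mu}\, v^{\mu}\Big|_{\mu=\pi_{P,\alpha}^{\mathrm{mix}}(\pi,\bar\pi)}.
\end{align*}
Because $\alpha>0$, the scalar factor does not affect the $\argmin$ on the right-hand side of the proposition.

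Next I would invoke Scherrer's result with base policy $\mu_0=\pi_{P,\alpha}^{\mathrm{mix}}(\pi,\bar\pi)$. By the policy gradient theorem, $\nabla_{\mu} v^{\mu}|_{\mu=\mu_0}$ factors, state-by-state, as a nonnegative discounted occupancy measure times the action-value $Q^{\mu_0}$. Since any $\bar\pi'\in\Pi$ is deterministic and can be optimized state by state, the nonnegative occupancy weights are immaterial to the $\argmin$, and Scherrer's theorem yields
\begin{align*}
\argmin_{\bar\pi'\in\Pi}\Big\langle \pi_{P,\alpha}^{\mathrm{mix}}(\pi,\bar\pi'),\,\nabla_{\mu} v^{\mu}\big|_{\mu=\mu_0}\Big\rangle
&= \argmin_{\bar\pi'\in\Pi}\; r^{\pi_{P,\alpha}^{\mathrm{mix}}(\pi,\bar\pi')}+\gamma P^{\pi_{P,\alpha}^{\mathrm{mix}}(\pi,\bar\pi')} v^{\mu_0}.
\end{align*}

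Finally I would strip off the fixed $\pi$-component of the mixture. Using linearity of $r^{\cdot}$ and $P^{\cdot}$ in the policy,
\begin{align*}
r^{\pi_{P,\alpha}^{\mathrm{mix}}(\pi,\bar\pi')}+\gamma P^{\pi_{P,\alpha}^{\mathrm{mix}}(\pi,\bar\pi')} v^{\mu_0}
= (1-\alpha)\bigl(r^{\pi}+\gamma P^{\pi} v^{\mu_0}\bigr) + \alpha\bigl(r^{\bar\pi'}+\gamma P^{\bar\pi'} v^{\mu_0}\bigr),
\end{align*}
and the first summand is independent of $\bar\pi'$; the same cancellation takes place on the gradient side because $\pi_{P,\alpha}^{\mathrm{mix}}(\pi,\bar\pi')=(1-\alpha)\pi+\alpha\bar\pi'$ contributes an additive $(1-\alpha)\pi$ term that is $\bar\pi'$-free in the inner product. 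Combining these identities identifies the two $\argmin$ sets and proves the proposition.

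The main obstacle is the middle step: making rigorous the passage from the inner product against $\nabla_{\mu} v^{\mu}$ to the one-step Bellman form. This is exactly Scherrer's theorem, and the only conceptual point is to justify that the nonnegative state-occupancy weights in the policy gradient do not change the minimizer over the state-wise-decomposable set $\Pi$.
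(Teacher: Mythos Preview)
Your argument is correct and follows essentially the same route as the paper: both reduce the claim to the policy-gradient identity $\langle h,\nabla_\mu v^\mu|_{\mu_0}\rangle=\sum_s d^{\mu_0}(s)\,q^{\mu_0}(s,h(s))$ (Scherrer's Theorem~1), after which state-by-state minimization over deterministic $h$ gives the one-step greedy policy. Your detour through $\pi_{P,\alpha}^{\mathrm{mix}}(\pi,\bar\pi')$ in the inner product and the subsequent cancellation in step~3 are unnecessary---once the chain rule produces the harmless factor $\alpha$, applying Scherrer's identity directly to $\langle \bar\pi',\nabla_\mu v^\mu|_{\mu_0}\rangle$ already yields $\argmin_{\bar\pi'} r^{\bar\pi'}+\gamma P^{\bar\pi'} v^{\mu_0}$, which is exactly the paper's (shorter) path.
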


Notice that the first single agent, 1-step improvement, has a solution in the set of deterministic policies (since the action space is a compact set and the argument is continuous in the action). Thus, $\bar{\pi}$ in Algorithm~\ref{alg: soft alpha robust PI} is exactly the 1-step greedy policy used in Algorithm~\ref{alg:alpha robust PI}. This suggests that for $\eta=1$  Algorithm~\ref{alg: soft alpha robust PI} is completely equivalent to Algorithm~\ref{alg:alpha robust PI}.

Generally, in two-player PI, the improvement stage amounts to solving a max-min, 1-step, decision problem. In PR-PI it is clearly not the case; in the improvement stage, a single agent, 1-step-greedy policy, is solved. Solving the latter is easier than solving the former, and it is a result of the specific structure of PR-MDP which does not generally hold, as will be demonstrated in Section \ref{sec: noisy AR}.

The following result shows that in both algorithms the value converges to the unique optimal value of the Nash-Equilibrium (see proof in Appendix~\ref{supp: PR-MDP contraction}).
\begin{theorem}\label{theorem: PR-MDP contraction}
    Denote by $v_k \eqdef v^{\pi_{P,\alpha}^{\mathrm{mix}}(\pi_{k},\bar{\pi}_k)}$. Then, for any $\eta \in (0,1]$, in Algorithm~\ref{alg: soft alpha robust PI}, $v_k$ contracts toward $\ValuePR^*$ with coefficient $(1-\eta+\gamma\eta)$, i.e., 
    \begin{equation*}
        {||v_k-\ValuePR^* ||_\infty \leq  (1-\eta+\gamma\eta)||v_{k-1}-\ValuePR^* ||_\infty} \enspace .
    \end{equation*}
\end{theorem}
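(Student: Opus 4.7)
The plan rests on two structural features of the PR-MDP: that $\pi_k$ being a best response to $\bar{\pi}_k$ makes $v_k$ a fixed point of a state-wise Bellman operator, and that the adversary's contribution enters the mixture linearly and independently of the agent's action, so the Frank--Wolfe convex combination of adversary policies lifts to a convex combination at the level of Bellman operators. I would first introduce three operators: the standard optimality operator $Tv(s)=\max_{a}\{r(s,a)+\gamma P^{a}v(s)\}$, its adversarial counterpart $\underline{T}v(s)=\min_{\bar a}\{r(s,\bar a)+\gamma P^{\bar a}v(s)\}$, and the best-response operator $\mathcal{T}_{\bar\pi}v=(1-\alpha)Tv+\alpha T_{\bar\pi}v$, where $T_{\bar\pi}v(s)=r(s,\bar\pi(s))+\gamma P^{\bar\pi(s)}v(s)$ (this decomposition holds because the $\alpha$-term is $a$-independent). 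The same decomposition yields the PR-MDP Bellman operator $T^{*}v=(1-\alpha)Tv+\alpha \underline{T}v$. Standard arguments give monotonicity and $\gamma$-contraction of both $\mathcal{T}_{\bar\pi}$ and $T^{*}$, with respective fixed points $v^{\bar\pi}$ and $v^*_{P,\alpha}$.

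The crux is establishing the identity $\mathcal{T}_{\bar\pi_{k+1}}v_k=(1-\eta)v_k+\eta T^{*}v_k$. Since $\pi_k$ is a best response to $\bar\pi_k$, $v_k$ is the fixed point of $\mathcal{T}_{\bar\pi_k}$, i.e.\ $v_k=(1-\alpha)Tv_k+\alpha T_{\bar\pi_k}v_k$. By Proposition~\ref{prop: 1 step greedy gradients}, the Frank--Wolfe direction $\bar\pi$ is greedy with respect to $v_k$, so $T_{\bar\pi}v_k=\underline{T}v_k$. Linearity of $T_{(\cdot)}v_k$ in its argument then gives $T_{\bar\pi_{k+1}}v_k=(1-\eta)T_{\bar\pi_k}v_k+\eta\underline{T}v_k$; substituting into $\mathcal{T}_{\bar\pi_{k+1}}v_k=(1-\alpha)Tv_k+\alpha T_{\bar\pi_{k+1}}v_k$ and regrouping the $(1-\eta)$ and $\eta$ pieces yields the identity.

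The contraction then follows from monotonicity bookkeeping. Because $v_k=\max_{\pi}v^{\pi_{P,\alpha}^{\mathrm{mix}}(\pi,\bar\pi_k)}$ always dominates the Nash value $v^*_{P,\alpha}$, and analogously for $v_{k+1}$, we have $v_{k+1},v_k\geq v^*_{P,\alpha}$. Monotonicity of $T^{*}$ together with $T^{*}v^*_{P,\alpha}=v^*_{P,\alpha}$ gives $v^*_{P,\alpha}\leq T^{*}v_k\leq v_k$ (the upper bound uses $\underline{T}v_k\leq T_{\bar\pi_k}v_k$), and the $\gamma$-shift property gives $T^{*}v_k-v^*_{P,\alpha}\leq\gamma\|v_k-v^*_{P,\alpha}\|_\infty\mathbf{1}$. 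Feeding these into the identity produces $v^*_{P,\alpha}\leq\mathcal{T}_{\bar\pi_{k+1}}v_k\leq v_k$ and
\[
\mathcal{T}_{\bar\pi_{k+1}}v_k-v^*_{P,\alpha}\leq(1-\eta+\gamma\eta)\,\|v_k-v^*_{P,\alpha}\|_\infty\,\mathbf{1}.
\]
Since $\mathcal{T}_{\bar\pi_{k+1}}v_k\leq v_k$ and $\mathcal{T}_{\bar\pi_{k+1}}$ is monotone, iterating forces its fixed point $v_{k+1}$ to satisfy $v_{k+1}\leq\mathcal{T}_{\bar\pi_{k+1}}v_k$; combined with $v_{k+1}\geq v^*_{P,\alpha}$, this closes the contraction.

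The main obstacle is isolating the operator identity, which is only available because of the specific linear-in-$\bar\pi$ mixture structure of the PR-MDP; once it is in place, the contraction reduces to short monotonicity bookkeeping around the $\gamma$-contraction $T^{*}$. This structural feature does not carry over to the NR-MDP of Section~\ref{sec: noisy AR}, which is why a separate treatment is required there.
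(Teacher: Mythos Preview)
Your proof is correct and the underlying mechanics---the sandwich $v^{*}_{P,\alpha}\le v_{k+1}\le v_k$, linearity of the fixed-policy Bellman operator in the adversary's mixture, and the $\gamma$-contraction of the PR-MDP Bellman operator---are exactly those used in the paper. The organization, however, differs. The paper first proves a general contraction result for a Soft Zero-Sum Markov-Game PI (Algorithm~\ref{alg supp: general MG soft PI}, Theorem~\ref{theorem: supp general soft PI Markov Games}), in which the adversary update is $\nu'\in\argmin_\nu \bar T^{\nu} v_k$ (a one-step $\min$--$\max$), and then specializes to PR-MDP by observing that, thanks to the decoupling $r_{P,\alpha}(s,a,b)=(1-\alpha)r(s,a)+\alpha r(s,b)$, this $\min$--$\max$ collapses to the single-agent $\argmin_\nu (r^\nu+\gamma P^\nu v_k)$, which by Proposition~\ref{prop: 1 step greedy gradients} is precisely the Frank--Wolfe direction. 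You instead work directly inside the PR-MDP, using the decoupling from the outset to derive the clean operator identity $\mathcal{T}_{\bar\pi_{k+1}}v_k=(1-\eta)v_k+\eta T^{*}v_k$, from which the contraction drops out in two lines. Your route is shorter and makes the role of the PR-specific structure very transparent; the paper's route costs an extra layer of abstraction but yields the general zero-sum PI contraction as a byproduct, which is what justifies the remark that Algorithms~\ref{alg:alpha robust PI} and~\ref{alg: soft alpha robust PI} are instances of a broader scheme.
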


Due to the equivalence of Algorithms \ref{alg:alpha robust PI} and \ref{alg: soft alpha robust PI} (when $\eta=1$), we get as a corollary that PR-PI converges toward the unique Nash-Equilibrium.

\begin{remark}
The solution method of the $\argmax$ and $\argmin$ in both Algorithms~\ref{alg:alpha robust PI} and \ref{alg: soft alpha robust PI} can be swapped and the convergence guarantees remain, e.g., $\bar \pi$ is the optimal solution to the MDP given $\pi$, whereas $\pi$ is updated using the 1-step greedy approach w.r.t. $\bar \pi$.
\end{remark}

\begin{remark}
Although Soft PR-PI converges slower than the non-soft version, it is reasonable to assume the former will be less sensitive to errors than the latter. Soft PR-PI can be seen as a generalization of Conservative PI (CPI) to solving PR-MDPs. CPI is known to be less sensitive to errors than other PI schemes \cite{scherrer2014local}. Nonetheless, the error analysis for Soft PR-PI is substantially different than the one CPI \cite{kakade2002approximately,scherrer2014approximate}. In Soft PR-PI, small changes in the adversarial policy may result in dramatic changes in the agent's policy. Thus, the $\gamma$-weighted state occupancy under a measure $\nu$, $d_\nu^{\pi_{P,\alpha}^{\mathrm{mix}}(\pi_k,\bar{\pi}_k)} = \sum_t \gamma^t \nu P^{\pi_{P,\alpha}^{\mathrm{mix}}(\pi_k,\bar{\pi}_k)}$, may change dramatically between iterations, whereas in CPI the change is smooth. We leave the error analysis for future work.
\end{remark}

\section{Noisy Action Robust MDP} \label{sec: noisy AR}
In this section we consider an alternative definition for action robustness. Instead of a stochastic perturbation in the policy space, as in Section~\ref{sec: probabilistic AR}, we consider a perturbation in the action space. To formally study such a perturbation we define the Noisy Action Robust MDP (NR-MDP), which, similarly to the PR-MDP, can be viewed as a zero-sum game (see Appendix~\ref{supp: NR-MDP and MG} 
for a formal mapping). We continue by establishing some properties of this MDP while highlighting important differences relative to the approach of PR-MDP.

\begin{restatable}{defn}{defNRMDP}
\label{def: NR-MDP}
Let $\alpha\in [0,1]$.  A Noisy Action Robust MDP is defined by the 5-tuple of an MDP (see Section \ref{sec:mdp}).  Let $\pi,\bar{\pi}$ be policies of an agent and an adversary. We define their noisy joint policy $\pi_{N,\alpha}^{\mathrm{mix}}(\pi,\bar{\pi})$ as 
\begin{align*}
  \forall s \in \mathcal{S}, \action \in \mathcal{A},\ &\pi_{N,\alpha}^{\mathrm{mix}}(\action \mid \state) \equiv  \E_{\substack{\baction\sim \pi(\cdot\mid s)\\
  \bar{\baction}\sim \bar{\pi}(\cdot\mid s)}} [ \mathds{1}_{\action = (1-\alpha)\baction+\alpha \bar{\baction}}],
% &=\int_\mathcal{A} \pi(\baction \mid \state) \bar{\pi}(\alpha^{-1}(\action-(1-\alpha)\baction) \mid \state) d\baction, 
\end{align*}
the relation is obtained by the fact that $\action\sim \pi,\bar{\action}\sim \bar{\pi}$.

Let $\pi$ be an agent policy. For NR-MDP, its value is defined by
${\ValueNR^\pi = \min_{\bar{\pi}\in \Pi} \mathbb{E}^{\pi_{N,\alpha}^{\mathrm{mix}}(\pi,\bar{\pi})}[\sum_t \gamma^t r(\state_t,\action_t)]},
$ where $\action_t\sim \pi_{N,\alpha}^{\mathrm{mix}}(\pi(\state_t),\bar{\pi}(\state_t))$. 
%The reward function and dynamics are given by \comY{(TODO: No reason to define the reward and dynamics.. just to map to two player game. Also in PR-MDP no definition.}
% \begin{align*}
%     r^{\pi_N^{mix}(\pi,\bar{\pi})} (s) &= \E_{\baction\sim \pi,\bar \baction\sim \bar \pi} [r(s, (1 - \alpha) \baction + \alpha \bar \baction)],\\
%     p^{\pi_N^{mix}(\pi,\bar{\pi})}(s'\mid s) &= \E_{\baction\sim \pi,\bar \baction \sim \bar \pi}[p(s'\mid s, (1-\alpha)\baction+\alpha \bar{\baction})].
% \end{align*}
% $r^{\pi_N^{mix}(\pi,\bar{\pi})} (s) = \E_{\baction\sim \pi,\bar \baction\sim \bar \pi} [r(s, (1 - \alpha) \baction + \alpha \bar \baction)] \enspace ,$
% $p^{\pi_N^{mix}(\pi,\bar{\pi})}(s'\mid s) = \E_{\baction\sim \pi,\bar \baction \sim \bar \pi}[p(s'\mid s, (1-\alpha)\baction+\alpha \bar{\baction})] \enspace .$
The optimal $\alpha$-noisy robust policy is the optimal policy of the NR-MDP
\begin{align}
\optPolicyNR \in \argmax_{\pi\in \mathcal{P}(\Pi)} \min_{\bar{\pi} \in \Pi} \mathbb{E}^{\pi_{N,\alpha}^{\mathrm{mix}}(\pi,\bar{\pi})}[\sum_t \gamma^t r(\state_t,\action_t)].\label{eq:NR-MDP}
\end{align}
The optimal noisy robust value is $\ValueNR^*=\ValueNR^{\optPolicyNR}$.
\end{restatable}

%In layman's terms; an optimal noisy robust policy is optimal w.r.t. a scenario, in which an adversary may change the agent's actions by adding bounded perturbations. The adversary's ability to add perturbations is controlled through the parameter $\alpha$. When $\alpha=0$ the adversary is unable to affect the system and the decision problem collapses to the standard MDP formulation.

In simple terms; an optimal noisy robust policy is optimal w.r.t. a scenario, in which an adversary may change the agent's actions by adding bounded perturbations; the action performed on the system is $(1-\alpha)\action+\alpha\bar{\action}$, where $\bar{\action}$ is an action drawn from possibly adverserial distribution $\bar \pi$. The adversary's ability to add perturbations is controlled through the parameter $\alpha$. Each value of $\alpha$ defines a new continuous-action NR-MDP, where for $\alpha=0$ the adversary is unable to affect the system and the decision problem collapses to the standard, non-robust, MDP formulation.

The assumption on the structure of $\mathcal{A}$ is required, in order to ensure that the $\alpha$-mixture actions are valid actions, an assumption which holds naturally in the domain of continuous control. This approach formalizes a specific meaning for perturbation in the action space.
%The assumption on the structure of $\mathcal{A}$ is due to the need to define addition and multiplication by a scalar. This formalizes a specific meaning for perturbation in the action space. In continuous control problems, this assumption is a natural one to make. There, it is natural to assume $\mathcal{A}$ is a compact set of $\mathbb{R}^d$. If the action set is also convex, then $(1-\alpha)\action_t+\alpha \bar{\action}_t)\in \mathcal{A}$.

Although the approach of PR-MDP (Section \ref{sec: probabilistic AR}) and NR-MDP are closely related, they are not equivalent and important differences exist between the two. Unlike PR-MDP, for which a \emph{deterministic} stationary optimal policy exists, generally, for NR-MDP it is not the case. The optimal noisy robust policy, in the general case, is a \emph{stochastic policy} (see proof in Appendix~\ref{supp: noisy policy is stochastic}).

\begin{proposition}\label{prop: noisy policy is stochastic}
There exists an NR-MDP such that,
\begin{align*}
&\max_{\pi\in \Pi} \min_{\bar{\pi} \in \Pi} \mathbb{E}^{\pi_{N,\alpha}^{\mathrm{mix}}(\pi,\bar{\pi})}[\sum_t \gamma^t r(\state_t,\action_t)]\\
& < \max_{\pi\in \mathcal{P}(\Pi)} \min_{\bar{\pi} \in \Pi} \mathbb{E}^{\pi_{N,\alpha}^{\mathrm{mix}}(\pi,\bar{\pi})}[\sum_t \gamma^t r(\state_t,\action_t))].
\end{align*}
Furthermore, strong duality does not necessarily hold on the class of deterministic policies, $\Pi$.
\end{proposition}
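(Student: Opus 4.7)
The plan is to exhibit a concrete counterexample---a bandit (single-state MDP) simple enough for closed-form analysis yet rich enough to expose the gap between deterministic and stochastic optima. Take $|\mathcal{S}|=1$, $\mathcal{A}=[-1,1]$, and the convex reward $r(a)=a^2$. Then the discounted value of any stationary policy collapses (up to the factor $1/(1-\gamma)$) to the one-shot expected payoff $\mathbb{E}[((1-\alpha)\baction+\alpha\bar{\baction})^2]$ under the joint distribution induced by $\pi,\bar\pi$, so it suffices to compare this quantity for any fixed $\alpha\in(0,1/2)$.

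First I would compute both deterministic sides of the minimax. Fix $b\in[-1,1]$; since $(1-\alpha)>\alpha$, the adversary can drive the effective action to zero only when $|b|\le \alpha/(1-\alpha)$, and otherwise opposes the sign of $b$, so $\min_{\bar b\in[-1,1]}((1-\alpha)b+\alpha\bar b)^2=\max(0,(1-\alpha)|b|-\alpha)^2$. Maximizing over $b$ gives $\max_{\pi\in\Pi}\min_{\bar\pi\in\Pi}=(1-2\alpha)^2$, attained at $b=\pm1$. For the reversed order, fixing $\bar b$, the best agent response is $b=\mathrm{sign}(\bar b)$, yielding $\max_b((1-\alpha)b+\alpha\bar b)^2=(1-\alpha+\alpha|\bar b|)^2$; the adversary then picks $\bar b=0$, so $\min_{\bar\pi\in\Pi}\max_{\pi\in\Pi}=(1-\alpha)^2$. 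Since $(1-\alpha)^2>(1-2\alpha)^2$ for $\alpha\in(0,1/2)$, strong duality fails on $\Pi$, establishing the second part of the proposition.

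Next I would show the stochastic max-min strictly exceeds the deterministic one by exhibiting a single witness. The canonical choice is $\pi$ uniform on $\{-1,+1\}$: under this policy $\mathbb{E}[\baction]=0$ and $\mathbb{E}[\baction^2]=1$, so expanding the square gives
\begin{equation*}
\mathbb{E}_{\baction\sim\pi}\bigl[((1-\alpha)\baction+\alpha\bar b)^2\bigr]=(1-\alpha)^2+\alpha^2\bar b^2,
\end{equation*}
which the adversary minimizes at $\bar b=0$ to obtain $(1-\alpha)^2$. This already dominates the deterministic max-min value $(1-2\alpha)^2$, proving the first claim.

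The main obstacle is being sure the example respects the assumptions of Section~\ref{sec:mdp} (compact, convex $\mathcal{A}$; reward and dynamics continuous in $\action$) and the NR-MDP mixing rule---all trivially satisfied here, which is why the bandit with a quadratic reward is the right choice. The only subtlety is lifting the single-shot computation to the infinite-horizon discounted objective; with a single state the Bellman equation reduces to $v=r+\gamma v$ with $r$ the one-shot expected reward under the chosen joint policy, so optimizing the value is equivalent (up to the positive factor $1/(1-\gamma)$) to optimizing the one-shot payoff, and the strict inequalities above immediately transfer to $\ValueNR$.
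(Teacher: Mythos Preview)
Your proposal is correct and follows essentially the same route as the paper: the single-state bandit with $\mathcal{A}=[-1,1]$, convex reward $r(a)=a^2$, the computation of the deterministic max--min $(1-2\alpha)^2$ and min--max $(1-\alpha)^2$, and the witness stochastic policy uniform on $\{-1,+1\}$ achieving $(1-\alpha)^2$ are exactly the ingredients of the paper's proof. Your version is in fact slightly more careful, since you explicitly justify the reduction from the discounted infinite-horizon objective to the one-shot payoff and verify the standing assumptions, both of which the paper leaves implicit.
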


The above proposition tells us that while it is often easier to focus on deterministic strategies (policies), when considering the NR-MDP scenario the optimal strategy may be stochastic. A similar notion has been shown to hold in non-cooperative matrix games \cite{nash1951non}, in which the optimal strategy is stochastic.

\subsection{Policy Iteration for NR-MDPs}\label{sec: noisy AR PI}
% Write in-line the update of the algorithm and mention that the projected gradient version given our analysis will not generally work.

In section \ref{sec: PI PR-MDP}, we formulated PI schemes to solve PR-MDPs. Unlike two-player zero-sum PI \cite{rao1973algorithms,hansen2013strategy}, in PR-PI (Algorithm~\ref{alg:alpha robust PI}) a \emph{single} agent decision problem is solved, when the adversary policy $\bar \pi_{k+1}$ is updated. This structure is indeed unique to the PR-MDP, and does not hold when generalizing two-player zero-sum PI to solve NR-MDP.

Specifically, consider the two-player zero-sum PI that repeats the following two stages:
\begin{align*}
    &1.  \pi_k \in \argmax_{\pi\in \Pi} v^{\pi_{N,\alpha}^{\mathrm{mix}}(\pi,\bar{\pi}_k)}, \\
    &2.  \pi_k \in \argmin_{\bar{\pi} \in \mathcal{P}(\Pi)}\max_{\pi\in\Pi} r^{\pi_{N,\alpha}^{\mathrm{mix}}(\pi,\bar{\pi})}+P^{\pi_{N,\alpha}^{\mathrm{mix}}(\pi,\bar{\pi})}v^{\pi_{N,\alpha}^{\mathrm{mix}}(\pi_k,\bar{\pi}_k)}.
\end{align*}
$v^{\pi_{N,\alpha}^{\mathrm{mix}}(\pi,\bar{\pi}_k)}$ is the value of the joint policy $\pi_{N,\alpha}^{\mathrm{mix}}(\pi,\bar{\pi}_k)$, ${r^{\pi_{N,\alpha}^{\mathrm{mix}}(\pi,\bar{\pi})}(s) = \E_{\action\sim\pi,\bar \action\sim \bar \pi}[r(s,(1-\alpha)\action+\alpha \bar{\action})]}$, and $ P^{\pi_{N,\alpha}^{\mathrm{mix}}(\pi,\bar{\pi})}(s,s') = \E_{\action\sim\pi,\bar \action\sim \bar \pi}[P(s\mid s,(1-\alpha)\action+\alpha \bar{\action})]$ are the induced reward and dynamics from by $\pi_{N,\alpha}^{\mathrm{mix}}(\pi,\bar \pi)$. Following similar lines of proof as in \citet{hansen2013strategy} or as in Theorem \ref{theorem: PR-MDP contraction}, a similar $\gamma$-contraction result may be achieved for the NR-MDP, e.g., ${||v_k- \ValueNR^*||_\infty \leq \gamma||v_{k-1}-\ValueNR^* ||_\infty}$.

In such an algorithm, stage (1) is performed by solving an MDP, as in PR-PI. However, stage (2) requires solving a 1-step min-max problem. For general reward and transition probabilities it cannot be solved by solving a single-agent decision problem, as in the second stage of PR-PI (Algorithm~\ref{alg:alpha robust PI}). Furthermore, the solution of stage (2) cannot be achieved by a single-call to a gradient oracle as in Proposition \ref{prop: 1 step greedy gradients} (we elaborate the discussion in Appendix~\ref{supp: PI for NR-MDP}). 
%To solve (2) both the adversary's and agent's policy should be taken into account in the min-max optimization.

% NR-MDP PI schemes have different properties than the PR-MDP PI schemes, and are expected to be computationally more demanding. Even in the exact, error-free case; it is not clear what kind of guarantees can be formulated for an approach such as in $\alpha$-Soft Robust PI (Alg. \ref{alg: soft alpha robust PI}). Nevertheless, in Section \ref{sec: experiments}, we will use the approach of $\alpha$-Soft Robust PI and offer DRL algorithms to solve both PR-MDPs and NR-MDPs. These should be understood as heuristics which are based on Alg. \ref{alg: soft alpha robust PI}.\com{rephrase this paragraph, I would write it less 'aggressive'}

Regardless of these differences, in Section \ref{sec: experiments}, we will use the approach of Soft PR-PI and offer DRL algorithms to solve both the PR and NR MDPs. While the approach we consider in Section \ref{sec: experiments} should be understood as a heuristic for solving NR-MDP, it is based on Algorithm~\ref{alg: soft alpha robust PI}, which guarantees convergence for PR-MDP in the error-free case.

% \section{Constrained Optimization Criterion}
% %This section holds for both NR and PR MDPs.

% Alternative optimization criteria for \eqref{eq:PR-MDP}-\eqref{eq:NR-MDP} can be considered, in which $\alpha'$ may take any value in $[0,\alpha]$. While this might seem as a harder and more robust criterion; the following proposition suggests that, in fact, the action robust policy is also the robust policy w.r.t. such a scenario. In this sense, the solutions of the equality and inequality constraints are equivalent (see proof in Appendix~\ref{supp: constrained optimization criterion}).
% \begin{restatable}{proposition}{propequivalencetoinequality}
% \label{prop: equivalence to inequality}
% Let $\optPolicyPR,\ \optPolicyNR$ be the solution of \eqref{eq:PR-MDP}-\eqref{eq:NR-MDP}, respectively. Then, 
% \begin{align*}
% &\optPolicyPR \!\in\! \argmax_{\pi\in\Pi}  \!\! \min_{\alpha'\in [0,\alpha],\bar\pi\in \Pi} \!\! \mathbb{E}^{\pi_{P,\alpha}^{\mathrm{mix}}(\pi,\bar{\pi})}[\sum_t \gamma^t r(\state_t,\action_t)],\\
% &\optPolicyNR \! \in \! \argmax_{\pi\in \mathcal{P}(\Pi)} \!\!  \min_{\alpha'\in [0,\alpha],\bar{\pi} \in \Pi} \!\! \mathbb{E}^{\pi_{N,\alpha}^{\mathrm{mix}}(\pi,\bar{\pi})}[\sum_t\! \gamma^t r(\state_t,\action_t)].
% \end{align*}
% \end{restatable}
% Thus, there is no need to consider the entire set $\alpha' \in [0,\alpha]$ of PR/NR-MDP, and it is sufficient to fix $\alpha$ to its maximal allowed value. The proposition formalizes a very intuitive fact; as an adversary is given more control, it can further decrease the value, i.e., degrade the performance.

\section{Related Work} \label{sec: related work}

\textbf{Robust RL:} Traditional works in RL, such as \citet{nilim2005robust} and \citet{iyengar2005robust} have provided efficient algorithms for solving Robust MDPs, with uncertainty in the transition probabilities. \citet{mannor2012lightning} extended their approach to non-rectangular uncertainty sets, e.g., coupled uncertainty sets. However, these approaches are limited to solutions in the tabular case. Additionally, a connection between robustness and generalization has been suggested \cite{xu2009robustness,xu2012robustness}, while it is not clear how this holds in RL, we believe that there lies a similar yet complex connection between the two concepts.

\textbf{Control:} Obtaining robust policies in continuous control problems has been extensively investigated in the past. Most closely related to our work, are  max-min Robust Control approaches (e.g., \citet{bemporad2003min,kerrigan2004feedback,de2006feedback}. In this line of work, a control policy which is robust w.r.t. deterministic perturbations is calculated. There, the max-min problem is solved via Linear program, Quadratic program or by an explicit tree-search. Here, we focus on PI, and gradient based, schemes to solve a more specific problem; action robust policies. Furthermore, and to the best of our knowledge, in this line of works,  discussion on the existence of strong-duality does not exists (i.e., as Proposition \ref{prop:PR-MDP and determinstic optimal policy} and \ref{prop: noisy policy is stochastic} assert for PR- and NR-MDPs).

%The original works assumed linear dynamics and quadratic reward functions, also known as LQR \cite{zhou1996robust}. 

% More recent works have tackled the application to more complex dynamics using a reinforcement learning approach \cite{levine2016end,levine2018learning,riedmiller2018learning,quillen2018deep}.

\textbf{Robust Supervised Learning:} Similar to the Robust MDPs framework, robustness to adversarial examples/attacks \cite{szegedy2013intriguing} is a measure of robustness in supervised learning. A method of learning robust classifiers is through Generative Adversarial Networks \cite{goodfellow2014generative}. Similar to our approach, when using GANs for robustness, an adversary learns to create small perturbations in the input data in an attempt to cause a mis-classification \cite{xiao2018generating,samangouei2018defense,kurakin2018adversarial}. While these methods work well in practice, they generally lack convergence proofs and should thus be treated as heuristics.

%adversarial attacks \cite{goodfellow2014generative} can also be seen as a zero-sum game. While adversarial attacks have been thoroughly analyzed, empirically, in the context of Supervised Learning \cite{madry2017towards}. Recently, \citet{heusel2017gans} presented a two-timescale approach with convergence guarantees, this can be seen as a similar approach to Algorithm~\ref{alg: soft alpha robust PI} in which $\bar \pi$ can be seen as moving on a slower timescale w.r.t. $\pi$.

%While many of these approaches work well in practice, they lack the theoretical convergence guarantees and thus should be regarded as a heuristic \cite{salimans2016improved,balduzzi2018mechanics,mescheder2018training}.

% \comY{How Adversarial Attacks are related to Littman's stuff? if we wish to focus on Adversarial Attacks we need to discuss about the supervised learning literature on this subject.} Robustness can be seen as a max-min game between an actor and an adversary. Such zero-sum games have been studied extensively in the form of value based RL \cite{littman1997generalized} and state-space (input based) attacks \cite{madry2017towards,pattanaik2018robust}. However, as shown by \citet{balduzzi2018mechanics} and \citet{mescheder2018training}, convergence of such approaches isn't straightforward and the optimization process should be handled with care.

\section{Experiments} \label{sec: experiments}
\subsection{Method}
Our approach adapts the Soft PR-PI algorithm to the high dimensional scenario. While in the tabular case we may use an MDP solver, which produces the optimal policy; when considering parametrized policies, e.g., neural networks, a dual-gradient approach is taken. In this approach, both the Actor and the Adversary are trained using gradient descent; as it is hard to measure convergence - we train the actor for $N$ gradient steps followed by a single adversary step.

We focus on a robust variant of DDPG which we call Action-Robust DDPG (AR-DDPG, see Appendix~\ref{apndx: gradients proof}, 
Algorithm~\ref{alg:robust_ddpg}). 
DDPG \cite{lillicrap2015continuous} trains an actor to predict an action for each state $\mu_\theta:\mathcal{S}\rightarrow \mathcal{A}$ (i.e., a deterministic policy). In AR-DDPG we train two networks, deterministic policies, the actor and adversary, denoted by $\mu_\theta$ and $\bar \mu_{\bar \theta}$. Similarly to DDPG, a critic is trained to estimate the $q$-function of the joint-policy. For PR-MDP (Definition \ref{def: PR-MDP}), the joint policy is 
\begin{equation}\label{eq: PR joint policy determinstic}
\pi^{\mathrm{mix}}_{P,\alpha}(u\!\mid\! s; \theta,\bar \theta) \!=\! (1-\alpha)\delta(u-\mu_\theta(s))+\alpha\delta(u - \bar \mu_{\bar \theta}(s)),
\end{equation}
whereas for NR-MDP (Definition \ref{def: NR-MDP}), the joint policy is,
\begin{equation}\label{eq: NR joint policy determinstic}
\pi^{\mathrm{mix}}_{N,\alpha}(u\!\mid\! s; \theta,\bar \theta)\! = \!\delta(u-((1-\alpha)\mu_\theta(s) + \alpha \bar \mu_{\bar\theta}(s))) ,
\end{equation}
where $\delta(\cdot)$ is the Dirac delta function.

The following result generalizes DPG \cite{silver2014deterministic} for both PR and NR-MDPs. i.e., it establishes how to update $\theta$ and $ \bar\theta$ using a deterministic gradient based method.

\begin{proposition}\label{prop: experiments proof}
    Let $\mu_\theta, \bar\mu_{\bar \theta}$ be the agent's and adversary's deterministic policies, respectively. Let $\pi(\mu_\theta, \bar\mu_{\bar \theta})$ be the joint policy given the agent and adversary policies. i.e., for PR-MDP $\pi = \pi^{\mathrm{mix}}_{P,\alpha}$ \eqref{eq: PR joint policy determinstic}, and for NR-MDP $\pi = \pi^{\mathrm{mix}}_{N,\alpha}$ \eqref{eq: NR joint policy determinstic}.
    
    Let ${J(\pi(\mu_\theta, \bar\mu_{\bar \theta})) = \mathbb{E}_{\state \sim \rho^{\pi}} [v^{\pi}(s)]}$ be the performance objective. The gradient of the actor and adversary parameters, for both PR- and NR-MDP is:
    \begin{align*}
        \nabla_\theta J(\pi(\mu_\theta, \bar\mu_{\bar \theta})) &= (1\! -\! \alpha) \mathbb{E}_{\state \sim \rho^{\pi}} [\nabla_\theta \mu_\theta (\state) \nabla_{\action} Q^{\pi} (\state, \action)] \enspace ,\\
        \nabla_{\bar \theta} J (\pi(\mu_\theta, \bar\mu_{\bar \theta})) &= \alpha \mathbb{E}_{\state \sim \rho^{\pi}} [\nabla_{\bar \theta} \bar \mu_{\bar{\theta}} (\state) \nabla_{\bar{\action}} Q^{\pi} (\state, \bar \action)] \enspace.
    \end{align*}
    
    where for the PR-MDP we have $\action = \mu_\theta (\state)$ and $\bar \action = \bar \mu_{\bar \theta} (\state)$, and for the NR-MDP $\action = \bar\action = (1 - \alpha) \mu_\theta (\state) + \alpha \bar \mu_{\bar \theta} (\state)$.
% For NR-MDP, let the performance objective be ${J_{N,\alpha}(\pi_{N,\alpha}(\mu_\theta,\bar\mu_{\bar \theta}))  =\mathbb{E}_{\state \sim \rho^{\pi_{N,\alpha}}} [v^{\pi_{N,\alpha}}(s)]}$. The gradient update for the actor and adversary, under the NR-MDP criterion is:
%     \begin{align*}
%         \nabla_\theta J_{N, \alpha} &= (1\! -\! \alpha) \mathbb{E}_{\state \sim \rho^{\pi_{N,\alpha}}} [\nabla_\theta \mu_\theta (\state) \nabla_{\action} Q^{\pi_{N,\alpha}} (\state, \action_{N,\alpha})] \enspace , \\
%         \nabla_{\bar \theta} J_{N, \alpha}  &= \alpha \mathbb{E}_{\state \sim \rho^{\pi_{N,\alpha}}} [\nabla_{\bar \theta} \bar \mu_{\bar \theta} (\state) \nabla_{\action} Q^{\pi_{N,\alpha}} (\state, \action_{N,\alpha})] \enspace .
%     \end{align*}
    
\end{proposition}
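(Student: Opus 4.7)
The plan is to mimic the standard Deterministic Policy Gradient (DPG) derivation of \citet{silver2014deterministic}, but applied to the value function of the joint policy $\pi(\mu_\theta,\bar\mu_{\bar\theta})$ rather than a single actor's policy. The key observation is that in both the PR and NR cases the joint policy is deterministic (a Dirac measure), so the value function can be written as a closed-form composition of $Q^\pi$ with $\mu_\theta$ and $\bar\mu_{\bar\theta}$, at which point the chain rule and a telescoping argument reduce the problem to a standard policy-gradient calculation. The continuity assumptions on $r$ and $P$ from Section~\ref{sec:mdp} (and Appendix~\ref{supp sec: pre MG}) will justify the interchange of gradient and integral needed to carry out the unrolling.

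First, for the PR-MDP case, I would write the Bellman equation for $v^\pi$ using \eqref{eq: PR joint policy determinstic}:
\begin{equation*}
v^\pi(s) = (1-\alpha)\,Q^\pi(s,\mu_\theta(s)) + \alpha\,Q^\pi(s,\bar\mu_{\bar\theta}(s)),
\end{equation*}
then differentiate with respect to $\theta$. Since $\bar\mu_{\bar\theta}$ does not depend on $\theta$, only the first term contributes an ``on-policy'' chain-rule piece of size $(1-\alpha)\nabla_\theta\mu_\theta(s)\nabla_a Q^\pi(s,a)|_{a=\mu_\theta(s)}$, plus a recursive piece of the form $\gamma\int P^\pi(s'\mid s)\nabla_\theta v^\pi(s')\,ds'$, where $P^\pi$ is the transition kernel induced by the mixture joint policy. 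Unrolling this recursion against the $\gamma$-discounted state occupancy $\rho^\pi$ yields exactly the claimed expression; by symmetry and an identical argument, differentiating in $\bar\theta$ (where now only the second term contributes directly) produces the $\alpha$-weighted adversary gradient.

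Second, for the NR-MDP case, using \eqref{eq: NR joint policy determinstic} I would write
\begin{equation*}
v^\pi(s) = Q^\pi\!\bigl(s,(1-\alpha)\mu_\theta(s)+\alpha\bar\mu_{\bar\theta}(s)\bigr),
\end{equation*}
and apply the chain rule: the derivative of the composed action with respect to $\theta$ is $(1-\alpha)\nabla_\theta\mu_\theta(s)$, which, combined with $\nabla_a Q^\pi(s,a)$ evaluated at the mixed action, gives the direct term. The recursion through the transition kernel is handled exactly as in the PR case, with $P^\pi(s'\mid s)=P(s'\mid s,(1-\alpha)\mu_\theta(s)+\alpha\bar\mu_{\bar\theta}(s))$. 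Unrolling against $\rho^\pi$ yields the stated formula; the gradient in $\bar\theta$ is identical up to replacing the factor $(1-\alpha)$ by $\alpha$.

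The main obstacle is not algebraic but technical: justifying that $\nabla_a Q^\pi(s,a)$ exists and that the gradient commutes with the infinite-horizon sum/integral, and that one may repeatedly apply the chain rule through the Bellman recursion. This is precisely where the continuity-in-action assumptions on $r$ and $P$ (Assumption~\ref{assumptions: MG} in the appendix) are used, mirroring the regularity conditions in \citet{silver2014deterministic}. Once these are invoked, both cases reduce to the same telescoping argument, and the two parts of the proposition follow in parallel.
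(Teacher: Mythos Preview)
Your proposal is correct and mirrors the paper's own proof essentially step for step: the paper also writes the Bellman identity for $v^\pi$ in each case (the weighted sum of $Q^\pi$ at $\mu_\theta(s)$ and $\bar\mu_{\bar\theta}(s)$ for PR, and $Q^\pi$ at the convex-combined action for NR), applies the chain rule to obtain a direct term plus a recursive $\gamma\!\int P^\pi(s'\mid s)\nabla_\theta v^\pi(s')\,ds'$, unrolls against the discounted occupancy, and then averages over the initial distribution, invoking the same DPG-style regularity assumptions to justify the interchanges. The only cosmetic difference is ordering (the paper treats NR first, PR second) and that the paper spells out explicitly how the two recursive pieces in the PR case combine into the single mixture kernel $P^\pi$, which you state but do not expand.
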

A proof, example algorithm and block diagram are provided in Appendix~\ref{apndx: gradients proof}.

In order to validate our approach, we consider several MuJoCo domains \cite{todorov2012mujoco}. MuJoCo contains several continuous control problems, such as robotic manipulation, in which we may test the ability of our approach to produce robust policies. Intuitively, our Probabilistic operator is correlative to the occurrence of large \textit{abrupt} forces, e.g., someone suddenly pushes the robot, whereas the Noisy operator is correlative to \textit{mass} uncertainty, e.g., the robot is heavier or lighter.

Our evaluation is split into two parts, we begin by comparing the various hyper-parameters and how they affect the performance of both the NR and PR-MDP approaches. This evaluation is performed extensively on a single domain, the Hopper-v2 task, and the figures are provided in the appendix. We then compare the best performing variants across unseen domains. By doing so we test the transferability of these hyper-parameters across domains.

\subsection{Theory versus Practice}
% We take this section to state the major differences between the theoretical approach and that taken in practice.

Our theoretical approach, Soft PR-PI (Algorithm~\ref{alg: soft alpha robust PI}), is proven for the PR-MDP. The algorithm is based on a dynamic programming approach, (i) given a fixed adversary policy, solves the optimal agent's policy, (ii) updates the adversary policy using gradients.%, \comY{especially for the NR approach}.
\vspace{-0.25cm}
\begin{enumerate}
    \item While in theory, for the PR criterion, there exists a deterministic optimal policy - this does not necessarily hold for the NR case (Proposition~\ref{prop: noisy policy is stochastic}). Thus searching over the space of deterministic policies is sub-optimal.
    \item \vspace{-0.2cm}Theoretical approaches in general require exact computation, however, in practice, we use function approximation schemes, e.g., deep neural networks. As such, convergence can not be ensured and the approach should be seen as a heuristic.
\end{enumerate}
Regardless of these differences, we based the empirical approach for both PR and NR-MDPs on Algorithm~\ref{alg: soft alpha robust PI}.

\begin{figure}[t]
\centering
\hfill\mbox{}$\enspace\enspace\enspace$NR-MDP \hfill $\enspace\enspace\enspace\enspace\enspace\enspace\enspace\enspace$ PR-MDP\hfill\mbox{}\\
\begin{subfigure}
    \centering
	\includegraphics[width=39mm]{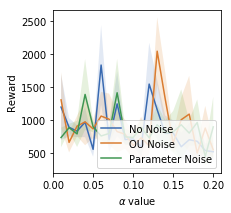}
\end{subfigure}%
\begin{subfigure}
    \centering
	\includegraphics[width=39mm]{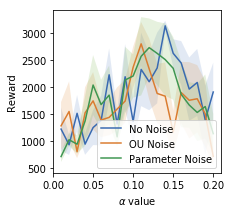}
\end{subfigure}%
\caption{Hopper-v2: Performance of both the NR and PR-MDP criteria as a function of the uncertainty $\alpha$.}
\label{fig:hopper_noisy_ablation}
\end{figure}

\begin{figure*}[t]%[!b]
\begin{tabular}{>{\centering\arraybackslash}m{.15\linewidth} >{\centering\arraybackslash}m{.25\linewidth} >{\centering\arraybackslash}m{.25\linewidth} >{\centering\arraybackslash}m{.25\linewidth}}
  & Baseline & NR-MDP & PR-MDP \\ 
 Hopper & \includegraphics[width=40mm]{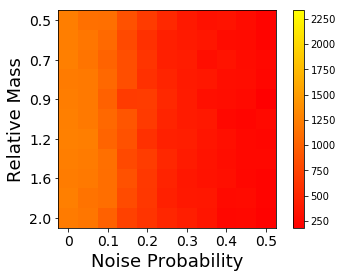} & \includegraphics[width=40mm]{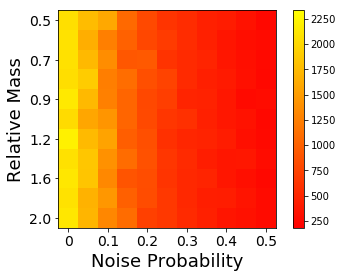} & \includegraphics[width=40mm]{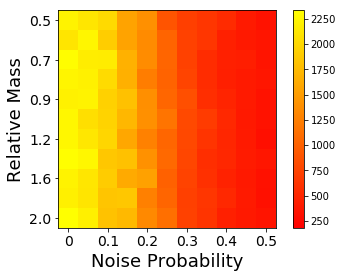} \\
 Walker2d & \includegraphics[width=40mm]{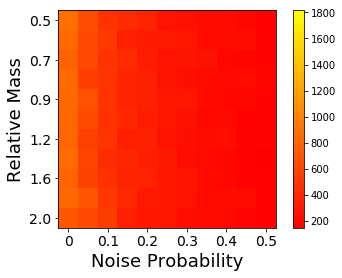} & \includegraphics[width=40mm]{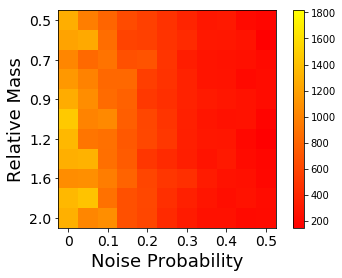} & \includegraphics[width=40mm]{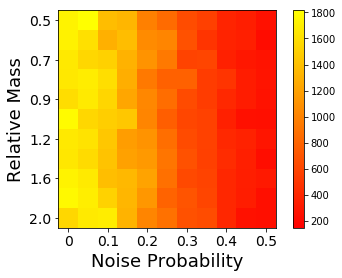} \\
 Humanoid & \includegraphics[width=40mm]{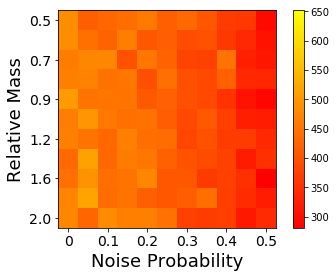} & \includegraphics[width=40mm]{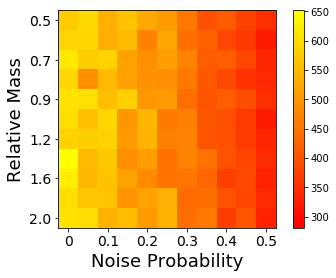} & \includegraphics[width=40mm]{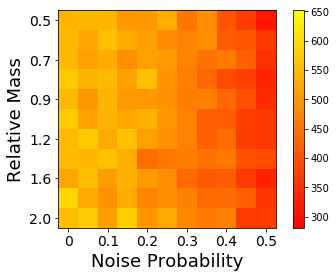} \\
 InvertedPendulum & \includegraphics[width=40mm]{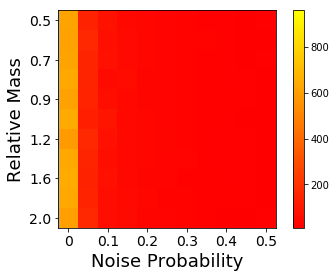} & \includegraphics[width=40mm]{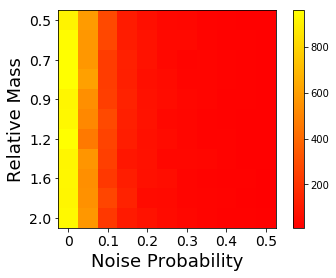} & \includegraphics[width=40mm]{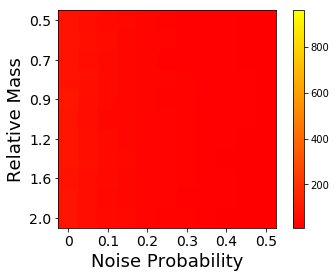}
\end{tabular}
\caption{Robustness to model uncertainty. Noise probability denotes the probability of a randomly sampled noise being played instead of the selected action.}
\label{fig:all_mujoco_model}
\end{figure*}

\subsection{Hyperparameter Ablation}
\begin{table}[h]
\begin{center}
\caption{Hyper-parameters considered.}\label{table:params}
\begin{tabular}{|l|l|}
	\hline
    \\[-1em]
    $\alpha$ values & 0.01, 0.05, 0.1, 0.15 and 0.2\\
    \hline
    \\[-1em]
    Actor update steps $N$ & 2, 5, 10 and 20\\
    \hline
\end{tabular}
\end{center}
\end{table}
The hyper-parameters we consider are shown in Table~\ref{table:params}. In addition, we consider 3 exploration schemes: noiseless (on-policy exploration), Ornstein Ulenbeck (OU, \citet{uhlenbeck1930theory}) and Parameter space noise \cite{plappert2017parameter}. Each configuration, is trained on 5 random seeds and the final policy, once the training is concluded, is evaluated across 100 episodes. The evaluation is performed without adversarial perturbations, on a range of mass values not encountered during training, i.e., we test the ability of the action robust approach to produce policies which are \emph{robust to model uncertainty}. The baseline we compare to, is DDPG with parameter space noise for exploration, which performed best in our experiments.

The extensive comparison is presented in the appendix, however the main conclusion is shown in Figure~\ref{fig:hopper_noisy_ablation}. While there is a clear correlation between the value of $\alpha$ and the performance of the PR-MDP criteria, e.g., an optimal value is attained at $\alpha \in [0.1, 0.15]$ and deviating from this range results in performance deterioration - this is not the case for the NR-MDP. Although the NR-MDP often attains competitive results, it is not clear how the various parameters affect it. We conclude that the for our simple gradient based approach, the PR approach exhibits a more stable behavior than the NR approach.

Specifically, for the PR-MDP we decided to use Parameter space noise with $\alpha=0.1$ and a ratio of 10:1. Even though there are certain configurations under which the OU noise variant outperformed the Parameter space noise, we decided on the latter as it exhibited higher stability and is thus more likely to transfer easily to new domains. Similarly, a large $\alpha$ provides greater control to the adversary, as such we decided on a more conservative value of $0.1$.

For the NR-MDP this selection process is somewhat harder; as slight changes in the hyper-parameters may result in radical changes in the performance. We selected the OU noise combined with $\alpha=0.1$ and a training ratio of 1:1.

An interesting insight is that in the PR-MDP criteria, the adversary induces enough noise for exploration (Figure~\ref{fig:hopper_noisy_ablation}, PR-MDP - No Noise plot). This can be seen when observing the `no noise' experiments, which show that the PR-MDP approach outperforms the baseline even without additional exploration noise.

% These results highlight the differences between the PR and NR-MDP approaches. Not only does each exploration scheme affect them differently, but also the value of $\alpha$. For instance, in the NR-MDP, for a relatively large value of $\alpha$ (0.2) we can see that the policy is unable to gain meaningful performance; however, in the PR-MDP, in this domain, the agent still benefits from performance improvement under relatively large perturbations during training.

% Furthermore, notice that for both PR and NR-MDP, the result with \emph{no exploration noise} (Figure \ref{fig:hopper_noisy_ablation}, label `no noise') are better than the baseline, for which exploration was added. This exemplifies a relation between exploration-type and update rule. The PR and NR-MDP criteria, induce sufficient amount of exploration, and relaxes the need for a manually determined exploration (e.g., by adding Gaussian noise).

\subsection{Testing on various MuJoCo domains}

Figure~\ref{fig:all_mujoco_model} presents our results, on various MuJoCo domains (additional results in Appendix~\ref{apndx: empirical results}). 
It is apparent that while in the Hopper-v2 domain, the PR-MDP outperformed the NR-MDP criterion; this does not hold on all domains. Moreover, in most of the domains, both operators outperform the baseline, both in terms of robustness and in terms of performance in the absence of perturbations. While the optimal parameters may differ across domains; our results show that, in most cases, the parameters transfer across domains and result in improved performance without additional tuning.

\textbf{Failures:} It is also important to acknowledge the scenarios in which our algorithm does not outperform the baseline. Such an example is the InvertedPendulum domain, in which the performance of the PR-MDP was found to be inferior to that of its non-robust counterpart. We find two possible explanations for this phenomenon (i) the parameter tuning is performed on the Hopper domain (as opposed to selecting the optimal hyper-parameters per each domain). As each domain is different, it is plausible that good hyper-parameters in a certain domain would not be good in all domains. (ii) Specifically in the InvertedPendulum domain, where the task is to balance a pole, an adversary which is too strong (large $\alpha$ value) prevents the agent from successfully solving the task.

\begin{figure}[h]
\centering
\hfill\mbox{}$\enspace\enspace\enspace$NR-MDP \hfill $\enspace\enspace\enspace\enspace\enspace\enspace\enspace\enspace$ PR-MDP\hfill\mbox{}\\
\begin{subfigure}
    \centering
	\includegraphics[width=40mm]{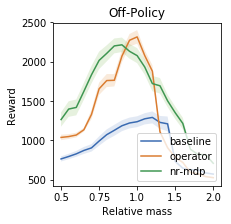}
\end{subfigure}%
\begin{subfigure}
    \centering
	\includegraphics[width=40mm]{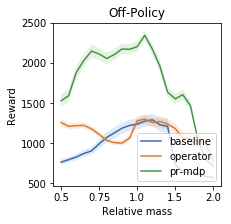}
\end{subfigure}%
\\
\begin{subfigure}
    \centering
	\includegraphics[width=40mm]{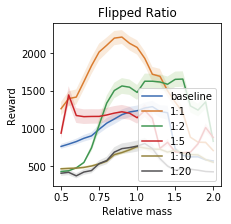}
\end{subfigure}%
\begin{subfigure}
    \centering
	\includegraphics[width=40mm]{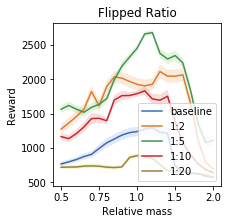}
\end{subfigure}%
\caption{Diving Deeper: (Up) Testing Off-Policy Action-Robustness, and (Down) Solving the MaxMin operator.}
\label{fig:hopper_noisy_additional}
\end{figure}

\subsection{Diving Deeper}\label{sec: diving deeper}
We attempt to analyze the behavior of our criteria (Figure~\ref{fig:hopper_noisy_additional}) by asking two questions: (i) Does the performance increase due to the added perturbations from the adversary, or does the operator itself induce a prior, e.g., regularization, on the policy which leads to improved performance. (ii) How close is the empirical behavior to its theoretical counterpart.
% We compare the theoretical guarantees to the behavior observed in practice. Specifically, we ask the following question: \textbf{Does MaxMin equal MinMax?}

\textbf{Off-Policy Action Robustness:} In previous experiments, during training, the action was drawn from the joint policy of the agent and adversary, where the joint policy is specified in the PR and NR-MDP approaches (see Definition \ref{def: PR-MDP},\ref{def: NR-MDP}).

A natural alternative approach is to \emph{act with the actor's policy}, yet, to acquire an action-robust policy in an off-policy fashion. Meaning, use the same algorithms while obtaining the data without the effect of the adversary. A possible advantage of such an approach is minimizing the number of bad actions (since the adversary does not intervene), while still benefiting from the presence of robust learning.

Figure \ref{fig:hopper_noisy_additional} presents the results of this experiment. For the NR-MDP, it seems that the operator itself, i.e., the training is what results in the performance improvement; whereas the adversarial exploration amount to a small increase in stability. Surprisingly, an opposite effect is observed for the PR-MDP. There, the combination of adversarial exploration and the operator are both required in order to attain the performance increase.

% In our Operator Ablation test, we attempt to separate the contributions of our operators. In order to do so, we perform two tests and compare them both to the baseline and to the full PR/NR-MDP training scheme. In the `operator' test, we perform data collection using the actor alone, e.g., the adversary does not add perturbations during exploration, though we train the actor based on Proposition~\ref{prop: experiments proof}. In the `exploration' test, we perform the opposite, e.g., collect data using the $\alpha$ mixture policy and train using the standard (non-robust) DDPG scheme.

% What is surprising is that for the NR-MDP, the operator test outperforms the default setting; whereas in the PR-MDP, decoupling the process results in a major performance degradation. In both tests, it seems that ignoring the adversary during training results in catastrophic failure.

\textbf{Does MaxMin equal MinMax?} While so far we trained our agent through $N$ actor updates followed by a single adversary gradient update, this corresponds to the MinMax operator, in theory the opposite should result in an identical performance (Proposition~\ref{prop:PR-MDP and determinstic optimal policy}) for the PR-MDP approach, and to deteriorate  the performance for the NR-MDP approach (Proposition~\ref{prop: noisy policy is stochastic}).

Experimentally (Figure~\ref{fig:hopper_noisy_additional}) the results show that as opposed to the theoretical analysis, a `stronger' adversary does result in performance degradation. This could be due to two possible factors: (i) as we trained for the same number of steps for both scenarios, it means that in this case the actor receives less gradient update steps, and/or (ii) it could be that increasing the convergence of the adversary results in faster convergence to a sub-optimal solution (w.r.t. the actor).

%\textbf{Robust operator's contribution:} In order to evaluate the contribution of our operator we perform the following tests - we simultaneously train a robust and non-robust actor using the same replay memory. In the `exploration', we use the robust version for data-collection yet evaluate on the non-robust; in the `operator', we do the exact opposite - e.g., collect data using a non-robust yet train, off-policy, using our AR operators.

%\com{todo add analysis of this experiment}
%From our experiments (Figure~\ref{fig:hopper_noisy_additional}) it seems that the robust version does indeed provide improved exploration (as per the improved result in the `non-robust flipped') but also (in some scenarios, such as Hopper) increases the overall performance (as seen in through the `robust flipped' test).

% \subsection{Conclusion}
% Empirically, the PR-MDP criterion results in overall improved robustness in all scenarios. This result is somewhat surprising, as intuitively the NR-MDP is better suited for the mass-uncertainty problem. Additionally, as both criteria lead to improved performance w.r.t. to the baseline, it suggests that action-robustness may be seen as a form of regularization in RL problems, it induces a prior over the actor to find a safer policy. In certainty tasks, such as those seen in MuJoCo, a safer policy is able to accumulate more reward over time.
% \com{even simple parameter tuning works well}

\section{Summary}
We have presented two new criteria for robustness, the Probabilistic and Noisy action Robust MDP, related each to real world scenarios of uncertainty and discussed the theoretical differences between both approaches. Additionally; we developed the Soft PR-PI (Algorithm~\ref{alg: soft alpha robust PI}), a policy iteration scheme for solving PR-MDPs. Building upon the Soft PR-PI algorithm, we presented a deep reinforcement learning approach, which is capable of solving our criteria. We compared both criteria, analyzed how the various hyper-parameters affect the behavior and how the empirical results correlate (and occasionally contradict) with the theoretical approach. Most importantly, we notice that not only does training with our criteria result in robust policies, but our approach improves performance even in the absence of perturbations.

Lastly, for solving an action-robust policy, there is  \emph{no need in providing an uncertainty set}. The approach requires only a scalar value, namely $\alpha$ (or possibly a state-dependent $\alpha(s)$), which \emph{implicitly} defines an uncertainty set (see Section~\ref{sec: PR MDP and RMDPs}). This is a major advantage compared to standard robust approaches in RL and control, which, to the best of our knowledge, require a distribution over models or perturbations. Of course, this benefit is also a restriction - the Action Robust approach is unable to handle any kind of worst-case perturbations. Yet, due to its simplicity, and its demonstrated performance, it is worthwhile to be considered by an algorithm designer.

\section{Acknowledgements}
The authors would like to thank Bruno Scherrer, Esther Derman and Nadav Merlis for the fruitful discussions and help during the work on this paper.

% \putbib
% \end{bibunit}

\bibliography{action_robust_bib}
\bibliographystyle{icml2019}

%#####################################################################################
%#####################################################################################
%#####################################################################################
%#####################################################################################
%####################################APPENDIX#########################################
%#####################################################################################
%#####################################################################################
%#####################################################################################
%#####################################################################################

\onecolumn
\appendix

% \begin{bibunit}

\section{Discounted Markov Games}\label{supp: zero sum MG}

\subsection{Preliminaries}\label{supp sec: pre MG}
We define the framework of discounted, two-player zero-sum Markov Games (MG) with finite state space and continuous action space. A MG is determined by the 5-tuple $(\mathcal{S},\mathcal{A},\mathcal{B},P,R,\gamma)$ \cite{patek1997stochastic}. Here $\mathcal{S}$ is a finite state space, $\mathcal{A}$ and $\mathcal{B}$ are compact subsets of $ \mathbb{R}^A$, which represent the agent and adversary, respectively. For any $(s,a,b)\in \mathcal{S}\times \mathcal{A}\times \mathcal{B}$ let the dynamics $P=P(\cdot \mid s,a,b)$ be a probability measure on $\mathcal{S}$, and let the reward function $r(s,a,b)$ be a bounded measureable function on $\mathcal{A}\times \mathcal{B}$ for any $s\in \mathcal{S}$. Consider a strategy of the players $\mu,\nu$, where both are probability measures over Borel sets of $\mathcal{A},\mathcal{B}$, respectively. Let $r^{\mu,\nu}\in \mathbb{R}^{|\mathcal{S}|}$ where $r^{\mu,\nu}(s)\eqdef \E_{a\sim \mu,b\sim \nu}[ r(s,\mu,\nu)]$, and the dynamics $P^{\mu,\nu} \in \mathbb{R}^{|\mathcal{S}|\times |\mathcal{S}|}$, where $P^{\mu,\nu}_{i,j}\eqdef \E_{a\sim \mu,b\sim \nu}[ P(s_j\mid s_i,\mu,\pi_B)]$ and is a stochastic matrix. Following notation from \citet{maitra1970stochastic}, we denote $P_A$ and $P_B$  as the set of probability measures on the Borel sets of $\mathcal{A}$ and $\mathcal{B}$, respectively.

\begin{defn}\label{def: optimal values on MG}
The value of fixed strategy $\mu,\nu$ is given by $v^{\mu,\nu} = \sum_{t=0}^\infty \gamma^t (P^{\mu,\nu})^tr^{\mu,\nu}$.  Given a fixed $\nu\in P_B$ the value of the optimal counter strategy of player $A$ is $v^\nu=\sup_{\mu \in P_A} v^{\mu,\nu}$. Accordingly, for a fixed  $\mu\in P_A$ the value of the optimal counter strategy of player $B$ is $v^\mu = \inf_{\nu \in P_B} v^{\mu,\nu}$. Furthermore, if the $\sup$ and $\inf$ are attainable, we refer to $\argmin_{\nu \in P_B} v^{\mu,\nu}$ and $\argmax_{\mu \in P_A} v^{\mu,\nu}$ as optimal counter strategies to $\mu$ and $\nu$, respectively.
\end{defn}

We make the following assumptions on the dynamics and reward functions.
\begin{assumption}\label{assumptions: MG}
\hfill
\begin{itemize}
    \item Both $\mathcal{A},\mathcal{B}$ are compact metric spaces.
    \item For any $s\in \mathcal{S}$ the reward $r$ is continuous and bounded function on $\mathcal{A}\times\mathcal{B}$.
    \item  For any $s\in \mathcal{S}$,   whenever $(a_n,b_n)\rightarrow(a,b)$, where $(a_n,b_n),(a,b)\in \mathcal{A}\times\mathcal{B}$, then $P(\cdot\mid s,a_n,b_n)$ converges weakly to $P(\cdot\mid s,a,b)$.
\end{itemize}
\end{assumption}

In the rest of the section we follow \cite{patek1997stochastic}[Section 2-3] that analyzed zero-sum MG for stochastic shortest paths, while performing minor modifications for the discounted and continuous action-space setup.

Define the following Bellman operators.
\begin{defn}\label{def: supp Bellman operators MG}
Let $P_A$ and $P_B$ be the set of all probability measures on the Borel Sets of $\mathcal{A}$ and  $\mathcal{B}$, respectively, $\mu\in P_A,\nu\in P_B$, and let $v\in \mathbb{R}^{|\mathcal{S}|}$.  The Bellman operator, and Fixed-Policy Bellman operators are according to the following.
\begin{align*}
&T^{\mu,\nu}v = r^{\mu,\nu} + \gamma P^{\mu,\nu}v,\\
T^{\mu}v = \min_{\nu\in P_B} &\left( r^{\mu,\nu}+ \gamma P^{\mu,\nu}v \right),\ \bar{T}^{\nu}v = \max_{\mu\in P_A}\left( r^{\mu,\nu} + \gamma P^{\mu,\nu}v \right)\\
T v = \max_{\mu\in P_A} \min_{\nu\in P_B} & \left( r^{\mu,\nu} + \gamma P^{\mu,\nu}v\right),\ \bar{T} v  = \min_{\nu\in P_B} \max_{\mu\in P_A} \left( r^{\mu,\nu} +  \gamma P^{\mu,\nu}v \right),
\end{align*}
where equality holds component-wise.
\end{defn}

Notice that the $\max$ and $\min$ are attainable since $ P_A,P_B$ are compact sets. Furthermore, by \citet{maitra1970stochastic}[Lemma 2.2] and under Assumption \ref{assumptions: MG}, both the $\max$ and $\min$ are continuous and bounded. Thus, we can replace $\sup\inf$ and $\inf\sup$ by corresponding $\max$ and $\min$. 

We have the following important lemma.
\begin{lemma} \label{lemma: min max is max min}
For any bounded $v\in \mathbb{R}^{|\mathcal{S}|}$, $T v = \bar{T}v$.
\end{lemma}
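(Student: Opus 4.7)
\smallskip
\noindent\textbf{Proof plan.} The statement is a componentwise minimax identity for the one-step Bellman operator, so I would fix an arbitrary state $s\in\mathcal{S}$ and reduce the claim to a single min-max equality on the scalar function
\begin{equation*}
f(\mu,\nu) \;\eqdef\; r^{\mu,\nu}(s) + \gamma \sum_{s'\in\mathcal{S}} P^{\mu,\nu}(s,s')\,v(s'),
\qquad (\mu,\nu)\in P_A\times P_B.
\end{equation*}
Showing $\max_{\mu\in P_A}\min_{\nu\in P_B} f(\mu,\nu) = \min_{\nu\in P_B}\max_{\mu\in P_A} f(\mu,\nu)$ for every $s$ gives $Tv(s)=\bar Tv(s)$ and hence the lemma. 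The approach is to verify the hypotheses of a classical minimax theorem (Sion's, or equivalently the Ky Fan theorem) for $f$ on the convex sets $P_A, P_B$.

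First I would record the structural properties of $f$. Because $r^{\mu,\nu}(s) = \int\!\!\int r(s,a,b)\,\mu(da)\,\nu(db)$ and similarly $P^{\mu,\nu}(s,s') = \int\!\!\int P(s'\mid s,a,b)\,\mu(da)\,\nu(db)$, the map $f(\cdot,\nu)$ is affine (hence concave) on $P_A$ for every fixed $\nu$, and $f(\mu,\cdot)$ is affine (hence convex) on $P_B$ for every fixed $\mu$. Next I would equip $P_A$ and $P_B$ with the weak topology of probability measures; since $\mathcal{A}$ and $\mathcal{B}$ are compact metric spaces by Assumption~\ref{assumptions: MG}, Prokhorov's theorem implies $P_A$ and $P_B$ are (sequentially) compact. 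They are also convex as sets of probability measures.

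The main technical step is establishing joint continuity of $f$ on $P_A\times P_B$ under the product weak topology. For the reward term, boundedness and continuity of $r(s,\cdot,\cdot)$ on the compact $\mathcal{A}\times\mathcal{B}$ (Assumption~\ref{assumptions: MG}) together with weak convergence $\mu_n\otimes\nu_n \Rightarrow \mu\otimes\nu$ give $r^{\mu_n,\nu_n}(s)\to r^{\mu,\nu}(s)$ by the Portmanteau theorem. For the transition term I would argue that, by the weak continuity of $P(\cdot\mid s,\cdot,\cdot)$ assumed in Assumption~\ref{assumptions: MG} and boundedness of $v$, the map $(a,b)\mapsto \sum_{s'} P(s'\mid s,a,b)\,v(s')$ is bounded and continuous on $\mathcal{A}\times\mathcal{B}$, so the same Portmanteau argument applies to the transition contribution. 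Hence $f$ is jointly continuous, and in particular continuous (and so upper/lower semicontinuous) separately in each argument.

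With all these ingredients — convex compact domains, affine-affine $f$, and the requisite semicontinuity — Sion's minimax theorem yields $\max_\mu\min_\nu f(\mu,\nu)=\min_\nu\max_\mu f(\mu,\nu)$; the $\max$ and $\min$ are attained because of compactness plus the continuity noted above (and as already remarked in the paper after Definition~\ref{def: supp Bellman operators MG}). Since $s$ was arbitrary, $Tv=\bar Tv$ componentwise. I expect the only real obstacle to be making the continuity of the transition term rigorous: one must go from the pointwise weak continuity of $P(\cdot\mid s,a,b)$ in $(a,b)$ to joint continuity of $(\mu,\nu)\mapsto\int\!\!\int \langle P(\cdot\mid s,a,b),v\rangle\,\mu(da)\nu(db)$, and this is the place where Assumption~\ref{assumptions: MG} is used most essentially. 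Everything else is a direct application of a standard minimax theorem.
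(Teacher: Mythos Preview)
Your proposal is correct and follows essentially the same approach as the paper: fix a state, verify the hypotheses of Sion's minimax theorem (convexity and weak compactness of $P_A,P_B$, affinity and continuity of the payoff in each argument via Assumption~\ref{assumptions: MG}), and then replace $\sup/\inf$ by $\max/\min$ via compactness and continuity. The paper's own proof is terser—it delegates most of this verification to \citet{maitra1970stochastic} and simply invokes Sion—but the substance is identical to what you spell out.
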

\begin{proof}
Following similar arguments as in \citet{maitra1970stochastic}, Equation 2, and using Sion's minimax theorem \cite{sion1958general}[Theorem 3.4], for any $s\in\mathcal{S}$ we have that,
\begin{align*}
     \sup_{\mu\in P_A} \inf_{\nu\in P_B}  r^{\mu,\nu}(s)+ P^{\mu,\nu}v(s)  =  \inf_{\nu\in P_B} \sup_{\mu\in P_A}  r^{\mu,\nu}(s) + P^{\mu,\nu}v(s).
\end{align*}
Since  $P_A,P_B$ are compact and $r^{\mu,\nu} + P^{\mu,\nu}v$ is bounded and continuous on $\mathcal{A}\times\mathcal{B}$ for any $s\in\mathcal{S}$, the $\sup,\inf$ can be replaced by $\min,\max$ (e.g., by \citet{maitra1970stochastic}[Lemma 2.2]). 
\end{proof}

The analysis in \citet{patek1997stochastic} is based on assumption R, which results in $T v =\bar{T}v$. Since we allow the agents to use mixed-strategies, according to Lemma \ref{lemma: min max is max min}, we obtain $T v =\bar{T}v$ in our setup as well. Furthermore, since we use discounted MG, assumption SSP in \citet{patek1997stochastic} is also satisfied. Every strategy $(\mu,\nu)$ is proper; it terminates with probability one, as the discount factor ($\gamma$) is smaller than 1.

\begin{lemma}\label{lemma: supp everything is gamma contraction}
$T^{\mu,\nu},\ T^{\mu},\ \bar{T}^{\nu}, T$ are $\gamma$ contractions in the sup-norm.
\end{lemma}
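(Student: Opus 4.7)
The plan is to verify the contraction property operator by operator, reusing each bound to establish the next. The heavy lifting is done by the one-step identity $T^{\mu,\nu} v_1 - T^{\mu,\nu} v_2 = \gamma P^{\mu,\nu}(v_1 - v_2)$, combined with the fact that $P^{\mu,\nu}$ is row-stochastic---each row is the probability measure $s \mapsto \int P(\cdot\mid s, a, b)\,\mu(da)\,\nu(db)$ on $\mathcal{S}$---and therefore has induced $\|\cdot\|_\infty \to \|\cdot\|_\infty$ operator norm equal to $1$. Hence
\[
\|T^{\mu,\nu} v_1 - T^{\mu,\nu} v_2\|_\infty \le \gamma \|v_1 - v_2\|_\infty,
\]
which disposes of the joint-policy operator immediately.

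For $T^{\mu}$ I would use the standard minimum-swap trick pointwise. Fix $s \in \mathcal{S}$ and pick $\nu^\star \in P_B$ attaining $(T^{\mu}v_2)(s)$; attainability is guaranteed by the compactness of $P_B$ together with the continuity hypotheses in Assumption~\ref{assumptions: MG}, exactly the ingredient already invoked just before Lemma~\ref{lemma: min max is max min}. Then
\[
(T^{\mu} v_1)(s) - (T^{\mu} v_2)(s) \;\le\; (T^{\mu,\nu^\star} v_1)(s) - (T^{\mu,\nu^\star} v_2)(s) \;\le\; \gamma\|v_1 - v_2\|_\infty,
\]
and the reverse inequality follows by exchanging $v_1$ and $v_2$ and choosing the minimizer relative to $v_1$ instead. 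The symmetric argument with $\max$ over $P_A$ in place of $\min$ over $P_B$ handles $\bar T^{\nu}$.

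Finally, for the max-min operator $T$ I would iterate the trick: at each state $s$, pick $\mu^\star$ attaining the outer $\max$ in $(Tv_1)(s)$ and then $\nu^\star$ attaining the inner $\min$ of $(T^{\mu^\star} v_2)(s)$, yielding
\[
(Tv_1)(s) - (Tv_2)(s) \;\le\; (T^{\mu^\star,\nu^\star} v_1)(s) - (T^{\mu^\star,\nu^\star} v_2)(s) \;\le\; \gamma\|v_1-v_2\|_\infty,
\]
and then swap $v_1, v_2$ for the reverse bound. Equivalently, once $T^{\mu}$ is known to be a $\gamma$-contraction, $Tv = \max_{\mu} T^{\mu} v$ inherits contraction from the elementary estimate $|\sup_\mu f(\mu) - \sup_\mu g(\mu)| \le \sup_\mu |f(\mu) - g(\mu)|$. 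The only genuine subtlety throughout is the attainability of the extrema over $P_A$ and $P_B$, which is already secured in the paper via Assumption~\ref{assumptions: MG} and \citet{maitra1970stochastic}[Lemma~2.2]; beyond that the argument is textbook Banach-contraction bookkeeping and I do not anticipate any real obstacle.
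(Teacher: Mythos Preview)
Your proposal is correct and follows essentially the same route as the paper: row-stochasticity of $P^{\mu,\nu}$ for $T^{\mu,\nu}$, then the min/max swap trick to lift the bound to $T^{\mu}$, $\bar T^{\nu}$, and finally $T$. The only cosmetic difference is that you select the attaining $\mu^\star,\nu^\star$ state-by-state while the paper picks a single policy achieving the extremum as a vector equality; both are valid here since the per-state optimizations decouple, and your alternative $|\sup_\mu f - \sup_\mu g|\le \sup_\mu|f-g|$ shortcut for $T$ is a clean equivalent.
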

\begin{proof}
We follow similar technique as in \citet{patek1997stochastic}, adjusted to our setup. Let $v_1,v_2\in \mathbb{R}^{|\mathcal{S}|}$. Then,
\begin{align*}
    T^{\mu,\nu} v_1-T^{\mu,\nu}v_2 = \gamma P^{\mu,\nu}(v_1-v_2)\leq \gamma P^{\mu,\nu}{\bf 1}||v_1-v_2||_\infty = \gamma {\bf 1}||v_1-v_2||_\infty,
\end{align*}
where ${\bf 1}$ is the one vector. The last relation holds since $P^{\mu,\nu}$ is a stochastic matrix and thus $P^{\mu,\nu}{\bf 1}={\bf 1}$. By repeating the same argument for $T^{\mu,\nu} v_2-T^{\mu,\nu}v_1$ and taking the sup-norm we conclude that $|| T^{\mu,\nu} v_1-T^{\mu,\nu}v_2 ||_\infty\leq \gamma ||  v_1-v_2 ||_\infty$.

We now prove similar result on $T^{\mu}$. Let $\nu,\nu'\in P_B$ such that $T^{\mu}v_1 = T^{\mu,\nu}v_1,\ T^{\mu}v_2 = T^{\mu,\nu'}v_2$. Then,
\begin{align*}
    &T^{\mu}v_1-T^{\mu}v_2 \leq T^{\mu,\nu}v_1-T^{\mu,\nu}v_2,\\
    &T^{\mu}v_2-T^{\mu}v_1 \leq T^{\mu,\nu'}v_1-T^{\mu,\nu'}v_2.
\end{align*}
By taking the sup-norm and using the fact $T^{\mu,\nu}$ is a $\gamma$-contraction, we conclude that  $T^{\mu}$ is also a $\gamma$-contraction. Similar argument establishes that $\bar{T}^{\nu}$ is a $\gamma$-contraction. 

Lastly, let $\mu\in P_A$ such that $T v_2 = T^\mu v_2$, and $\nu\in P_B$ such that $T^\mu v_1 = T^{\mu,\nu}v_1$. Then,
\begin{align*}
    T v_1-T v_2 &= T v_1-T^\mu v_2 \\
    &\leq  T^\mu v_1-T^\mu v_2 \\
    &=  T^{\mu,\nu} v_1-T^\mu v_2\\
    &\leq  T^{\mu,\nu} v_1-T^{\mu,\nu} v_2.
\end{align*}
Similar argument leads to $T v_2-T v_1 \leq T^{\mu,\nu} v_2-T^{\mu,\nu} v_1$ for properly defined $\mu,\nu$. Again, by taking the sup norm and using the fact that $T^{\mu,\nu}$ is a $\gamma$-contraction we conclude the proof.
\end{proof}

The following propositions relate the fixed-point of $T_\mu,\bar{T}^\nu$ to the values and policies defined in \ref{def: optimal values on MG}. Furthermore, the last one establishes the fact the zero-sum MG has value.

\begin{proposition}\label{prop supp: fixed policy}The following claims hold.

\begin{itemize}
    \item Let $\mu\in P_A,\nu\in P_B$ be stationary policies. The value $v^{\mu,\nu}$ is the fixed point of the operator $T^{\mu,\nu}$, $v^{\mu,\nu} = T^{\mu,\nu}v^{\mu,\nu}$.
    \item Given a policy $\nu\in P_B$, $v^{\nu} = \sup_{\mu\in P_A}$ is the unique fixed point of $\bar{T}^\nu$. Furthermore, the $\sup$ is attainable in the set $A$.
    \item Given a policy $\mu\in P_A$, $v^{\mu} = \inf_{\nu\in P_B}$ is the unique fixed point of $T_\mu$. Furthermore, the $\inf$ is attainable in the set $B$.    
\end{itemize}
\end{proposition}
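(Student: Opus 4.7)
The three claims are standard consequences of the fact that the operators $T^{\mu,\nu}$, $T^\mu$, $\bar T^\nu$ are $\gamma$-contractions in the sup-norm (Lemma~\ref{lemma: supp everything is gamma contraction}). My plan is to invoke Banach's fixed-point theorem to get existence/uniqueness, to identify the fixed points with the values in Definition~\ref{def: optimal values on MG} via a standard policy-evaluation argument, and to extract attainment of the sup/inf from the selectors already known to exist in the one-step operators.

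For claim (i), I start directly from the definition $v^{\mu,\nu}=\sum_{t=0}^\infty \gamma^t (P^{\mu,\nu})^t r^{\mu,\nu}$. Splitting off the $t=0$ term and factoring gives $v^{\mu,\nu}=r^{\mu,\nu}+\gamma P^{\mu,\nu}\sum_{t=0}^\infty \gamma^t (P^{\mu,\nu})^t r^{\mu,\nu}=r^{\mu,\nu}+\gamma P^{\mu,\nu}v^{\mu,\nu}=T^{\mu,\nu}v^{\mu,\nu}$. Uniqueness of the fixed point is then immediate from Lemma~\ref{lemma: supp everything is gamma contraction} via Banach.

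For claims (ii) and (iii) I treat $\bar T^\nu$ in detail; the argument for $T^\mu$ is symmetric. By Lemma~\ref{lemma: supp everything is gamma contraction}, $\bar T^\nu$ is a $\gamma$-contraction on the complete space $(\mathbb{R}^{|\mathcal{S}|},\|\cdot\|_\infty)$, so it has a unique fixed point, call it $\hat v$. I then identify $\hat v$ with $v^\nu=\sup_{\mu\in P_A} v^{\mu,\nu}$ in two steps. First, for any $\mu\in P_A$, $T^{\mu,\nu}\hat v\le \bar T^\nu \hat v=\hat v$; iterating and using monotonicity of $T^{\mu,\nu}$ together with its contraction property yields $v^{\mu,\nu}=\lim_{n\to\infty}(T^{\mu,\nu})^n\hat v\le \hat v$, so $\sup_\mu v^{\mu,\nu}\le \hat v$. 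Second, by Definition~\ref{def: supp Bellman operators MG} and the continuity/compactness noted after it (via \citet{maitra1970stochastic}, Lemma~2.2), the $\max$ in $\bar T^\nu \hat v$ is attained by some $\mu^*\in P_A$, so $\hat v=T^{\mu^*,\nu}\hat v$. By (i) this forces $\hat v=v^{\mu^*,\nu}\le \sup_\mu v^{\mu,\nu}$. Combining the two inequalities gives $\hat v=v^\nu$ and shows that the $\sup$ is attained at $\mu^*\in P_A$.

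The only non-routine ingredient is the existence of a measurable selector attaining the max in $\bar T^\nu \hat v$, i.e., that $\mu^*$ can be taken as an element of $P_A$ rather than a family of measures indexed by $s$. This is exactly where Assumption~\ref{assumptions: MG} (compactness of $\mathcal{A}$, continuity of $r$ in actions and weak continuity of $P$ in actions) and the measurable-selection observation embedded in \citet{maitra1970stochastic}, Lemma~2.2, do the work; I will appeal to them directly rather than redevelop the selection theorem. The analogous argument applied to $T^\mu$ (with $\min$ in place of $\max$ and $\nu^*\in P_B$) yields claim (iii).
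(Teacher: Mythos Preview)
Your argument is correct, but the paper takes a different, shorter route. For claim (i) it simply refers to \citet{puterman1994markov}, Section~6.1. For claims (ii)--(iii) it does not run the Banach/sandwich argument you give; instead it observes that once one player's strategy is fixed, the remaining problem is a single-agent discounted MDP whose reward and transition kernel inherit the continuity and boundedness of Assumption~\ref{assumptions: MG}, and then invokes \citet{puterman1994markov}, Theorem~6.2.10, which in one stroke gives the fixed-point identification and the existence of an optimal \emph{deterministic} stationary policy. Your route unpacks what that theorem packages: Banach for $\bar T^\nu$, the monotone iteration $T^{\mu,\nu}\hat v\le \hat v\Rightarrow v^{\mu,\nu}\le \hat v$ for the upper bound, and a one-step selector $\mu^*$ with $T^{\mu^*,\nu}\hat v=\hat v$ for the lower bound. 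This buys you independence from the external reference at the cost of length.

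One small point you should close explicitly. The statement asserts attainment ``in the set $A$'' (respectively $B$), i.e., by a \emph{deterministic} selector, whereas your $\mu^*$ a priori lives in $P_A$. The paper gets determinism for free from Puterman's Theorem~6.2.10. In your framework you need one extra line: for each fixed $s$, the map $\mu\mapsto r^{\mu,\nu}(s)+\gamma(P^{\mu,\nu}\hat v)(s)$ is linear in $\mu$ over the compact convex set $P_A$, so its maximum is attained at an extreme point, namely a Dirac measure $\delta_{a^*(s)}$ with $a^*(s)\in\mathcal{A}$; the resulting $s\mapsto a^*(s)$ is the deterministic maximizer.
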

\begin{proof}
    The proof of the first claim is standard, e.g., \citet{puterman1994markov}[Section 6.1].
    By fixing a policy for any of the players the problem amounts for solving a single agent MDP (e.g., \citet{puterman1994markov}). Due to Assumption \ref{assumptions: MG}, the reward and dynamics of the MDP are also continuous and bounded. Since the action set in compact for both player $A$ and $B$, we can use \citet{puterman1994markov}[Theorem 6.2.10] and conclude the proof. 
\end{proof}

\begin{proposition}\label{prop: supp minmax and equilibrium value}
The unique fixed point $v^*=T v^*$ is also the equilibrium value of the zero-sum MG, ${v^* = \sup_{\mu\in P_A} \inf_{\nu\in P_B} v^{\mu,\nu} =  \inf_{\nu\in P_B}\sup_{\mu\in P_A} v^{\mu,\nu} }$, thus, the MG has a well defined value.

Furthermore,  the stationary policies $\mu\in P_A,\nu\in P_B$ for which $v^* =\bar{T}v^* =T v^* = T^{\mu,\nu} v^*$ are in Nash-Equilibrium, and satisfy $v^{\mu',\nu^*} \leq  v^* \leq  v^{\mu^*,\nu}$ for any $\nu'\in P_B,\ \mu'\in P_A$.
\end{proposition}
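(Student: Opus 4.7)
The plan is to exploit the contraction properties from Lemma \ref{lemma: supp everything is gamma contraction} together with the minimax equality of Lemma \ref{lemma: min max is max min}, and then to tie the resulting fixed point $v^*$ to the game value by passing through the single-player operators $T^{\mu}$ and $\bar{T}^{\nu}$ and using Proposition \ref{prop supp: fixed policy}. Because $T$ is a $\gamma$-contraction on $\mathbb{R}^{|\mathcal{S}|}$ equipped with the sup-norm, Banach's fixed point theorem immediately yields existence and uniqueness of some $v^*$ with $Tv^*=v^*$; and since $Tv=\bar{T}v$ for every bounded $v$ by Lemma \ref{lemma: min max is max min}, we also get $\bar{T}v^*=v^*$ for free.

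Next I would extract candidate equilibrium strategies. Since $\mathcal{A}$ is compact and the inner argument is continuous on $P_B$ (by Assumption \ref{assumptions: MG} and the discussion following Definition \ref{def: supp Bellman operators MG}), the max in $Tv^*$ is attained at some stationary $\mu^*\in P_A$; symmetrically, the min in $\bar{T}v^*$ is attained at some $\nu^*\in P_B$. Then $T^{\mu^*}v^*=Tv^*=v^*$ and $\bar{T}^{\nu^*}v^*=\bar{T}v^*=v^*$, i.e., $v^*$ is a fixed point of both $T^{\mu^*}$ and $\bar{T}^{\nu^*}$. By the uniqueness part of Proposition \ref{prop supp: fixed policy}, this forces
\begin{align*}
v^* \;=\; v^{\mu^*} \;=\; \inf_{\nu\in P_B} v^{\mu^*,\nu}, \qquad v^* \;=\; v^{\nu^*} \;=\; \sup_{\mu\in P_A} v^{\mu,\nu^*}.
\end{align*}

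From the first identity, $v^* \leq \sup_{\mu} \inf_{\nu} v^{\mu,\nu}$; from the second, $v^* \geq \inf_{\nu} \sup_{\mu} v^{\mu,\nu}$. Combining with the always-true weak duality $\sup_{\mu} \inf_{\nu} v^{\mu,\nu} \leq \inf_{\nu} \sup_{\mu} v^{\mu,\nu}$ yields the full chain of equalities, so $v^*$ is the value of the zero-sum MG. For the Nash-equilibrium claim, the same two identities give, for every $\mu'\in P_A$ and $\nu'\in P_B$,
\begin{align*}
v^{\mu',\nu^*} \;\leq\; \sup_{\mu\in P_A} v^{\mu,\nu^*} \;=\; v^* \;=\; \inf_{\nu\in P_B} v^{\mu^*,\nu} \;\leq\; v^{\mu^*,\nu'},
\end{align*}
which is exactly the saddle-point inequality.

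The main obstacle is the bookkeeping of existence/attainability in compact-but-continuous action spaces: one must verify that the $\arg\max$ and $\arg\min$ defining $\mu^*$ and $\nu^*$ are indeed attained in $P_A$, $P_B$ (as opposed to approximating sequences only), so that the single-player fixed-point argument via Proposition \ref{prop supp: fixed policy} can be invoked. This is where Assumption \ref{assumptions: MG} together with the Maitra-style compactness/continuity lemma cited in Lemma \ref{lemma: min max is max min} does the real work; once that is in hand, everything else is a clean contraction-plus-uniqueness argument.
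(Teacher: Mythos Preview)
Your argument is correct and is exactly the standard route for this result. The paper itself does not supply a proof here at all---it simply defers to \cite{patek1997stochastic}, Proposition~3.2---so there is no ``paper's proof'' to compare against beyond noting that your Banach-fixed-point plus single-player-operator identification is the classical argument (and essentially what Patek does in the shortest-path setting). One small bookkeeping point: the statement also asserts $T^{\mu^*,\nu^*}v^*=v^*$ for the equilibrium pair, which you do not write down explicitly; it drops out immediately from your two identities $T^{\mu^*}v^*=v^*$ and $\bar{T}^{\nu^*}v^*=v^*$, since the first gives $T^{\mu^*,\nu}v^*\geq v^*$ for all $\nu$ and the second gives $T^{\mu,\nu^*}v^*\leq v^*$ for all $\mu$, hence equality at $(\mu^*,\nu^*)$.
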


\begin{proof}
    See proof \citet{patek1997stochastic}[Proposition 3.2].
\end{proof}

\subsection{Policy Iteration and Soft Policy Iteration for Zero-Sum Markov Games}

\begin{figure*}
\hspace{1cm}
\begin{minipage}[t]{0.4\textwidth}
\centering
 \begin{algorithm}[H]
	\caption{Zero-Sum Markov-Game PI }
	\label{alg supp: general MG PI}
	\begin{algorithmic}
		\STATE {\bf Initialize:} $\nu_0,k=0$
		\WHILE{stopping criterion is not satisfied}
		\STATE $\mu_{k} \in \argmax_{\mu} v^{\mu,\nu_k}$
		\STATE $\nu_{k+1} \in \argmin_{\nu}\bar{T}^{\nu} v^{\mu_k,\nu_k}$
		\STATE $~~ k ~~ ~\gets k+1$
		\ENDWHILE
		\STATE {\bf Return $\pi_{k-1} $}
		\STATE \vspace{0.1cm}
	\end{algorithmic}
\end{algorithm}
\end{minipage}
\hspace{1cm}
\begin{minipage}[t]{0.4\textwidth}
\centering
 \begin{algorithm}[H]
	\caption{Soft Zero-Sum Markov-Game PI}
	\label{alg supp: general MG soft PI}
	\begin{algorithmic}
		\STATE {\bf Initialize:} $\nu_0,k=0,\eta\in(0,1]$
		\WHILE{stopping criterion is not satisfied}
		\STATE $\mu_{k} \in \argmax_{\mu} v^{\mu,\nu_k}$
		\STATE $\nu' \in \argmin_{\nu}\bar{T}^{\nu} v^{\mu_k,\nu_k}$
		\STATE $\nu_{k+1} = (1-\eta)\nu_k + \eta \nu'$
		\STATE $~~ k ~~ ~\gets k+1$
		\ENDWHILE
		\STATE {\bf Return $\pi_{k-1} $}
	\end{algorithmic}
\end{algorithm}
\end{minipage}
\end{figure*}

In this section, we formulate two PI schemes that solve a zero-sum MG. The Zero-Sum MG PI scheme (see Alg. \ref{alg supp: general MG PI}) is a well known one \cite{hoffman1966nonterminating,rao1973algorithms,hansen2013strategy}. 

The Soft Zero-Sum MG PI (see Alg. \ref{alg supp: general MG soft PI}) generalizes the usual PI. Instead of updating with a 1-step greedy policy it updates softly w.r.t. the 1-step greedy policy. Although this generalization has been analyzed extensively for a single-agent PI (e.g., \cite{kakade2002approximately,scherrer2014approximate}), to the best of our knowledge, it was not analyzed in the context of Markov-Games. 

By generalizing arguments from \cite{scherrer2014approximate} to framework of Zero-Sum MG  (defined in Section \ref{supp sec: pre MG}) we prove the following result.

\begin{theorem}\label{theorem: supp general soft PI Markov Games}
The sequence $v_k \eqdef v^{\mu_k,\nu_k}$ contracts toward $v^*$ with rate of $1-\eta+\gamma\eta$, i.e.,
\begin{equation*}
    {||v_k-v^*_\alpha || \leq  (1-\eta+\gamma\eta)||v_{k-1}-v^*_\alpha ||} \enspace .
\end{equation*}
\end{theorem}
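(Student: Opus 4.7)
The plan is to exploit two structural facts about the iterates. First, because $\mu_k \in \argmax_\mu v^{\mu,\nu_k}$, the value $v_k = v^{\mu_k,\nu_k}$ is the best-response value against $\nu_k$, so $v_k$ is the unique fixed point of $\bar T^{\nu_k}$ (Proposition~\ref{prop supp: fixed policy}). Using the Nash inequality $v^{\mu^*,\nu} \ge v^*$ from Proposition~\ref{prop: supp minmax and equilibrium value}, we also get $v_k = \max_\mu v^{\mu,\nu_k} \ge v^{\mu^*,\nu_k} \ge v^*$ componentwise (and likewise $v_{k+1}\ge v^*$). Second, since $\nu'$ is greedy at $v_k$, Lemma~\ref{lemma: min max is max min} gives $\bar T^{\nu'} v_k = \min_\nu \bar T^\nu v_k = T v_k$, and a fortiori $T v_k \le \bar T^{\nu_k} v_k = v_k$.

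The heart of the argument is to control $\bar T^{\nu_{k+1}} v_k$ with $\nu_{k+1} = (1-\eta)\nu_k + \eta\nu'$. Because rewards and transition probabilities are linear in a mixture over adversary strategies in each state,
\begin{equation*}
r^{\mu,\nu_{k+1}} = (1-\eta)\,r^{\mu,\nu_k} + \eta\, r^{\mu,\nu'}, \qquad P^{\mu,\nu_{k+1}} = (1-\eta)\,P^{\mu,\nu_k} + \eta\, P^{\mu,\nu'},
\end{equation*}
and bounding a max of a sum by the sum of the maxes yields the key inequality
\begin{equation*}
\bar T^{\nu_{k+1}} v_k \;\le\; (1-\eta)\,\bar T^{\nu_k} v_k \;+\; \eta\,\bar T^{\nu'} v_k \;=\; (1-\eta)\,v_k + \eta\, T v_k \;\le\; v_k.
\end{equation*}
Now I transfer this one-step estimate to the fixed point $v_{k+1}$ of $\bar T^{\nu_{k+1}}$: since $\bar T^{\nu_{k+1}}$ is monotone (a pointwise maximum over $\mu$ of affine-monotone maps) and a $\gamma$-contraction (Lemma~\ref{lemma: supp everything is gamma contraction}), iterating it starting from $v_k$ produces a decreasing sequence that converges in sup-norm to $v_{k+1}$. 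Hence $v_{k+1} \le \bar T^{\nu_{k+1}} v_k \le (1-\eta)\,v_k + \eta\, T v_k$.

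To conclude I subtract $v^*$. Using $v^* = T v^*$ (Proposition~\ref{prop: supp minmax and equilibrium value}), the $\gamma$-contraction of $T$, the monotonicity of $T$, and $v_k \ge v^*$, I obtain $0 \le T v_k - v^* \le \gamma\|v_k - v^*\|_\infty\,\mathbf{1}$ componentwise. Plugging this together with $v_k - v^* \le \|v_k - v^*\|_\infty\,\mathbf{1}$ into the previous inequality gives
\begin{equation*}
0 \;\le\; v_{k+1} - v^* \;\le\; \bigl((1-\eta) + \eta\gamma\bigr)\,\|v_k - v^*\|_\infty\,\mathbf{1},
\end{equation*}
and taking the sup-norm of both sides delivers the claimed rate $1-\eta+\gamma\eta$. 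The main obstacle, and the reason the argument goes through so cleanly in this setting, is justifying the key inequality $\bar T^{\nu_{k+1}} v_k \le (1-\eta)\,v_k + \eta\, T v_k$: it relies crucially on the exact agent update (which makes $v_k$ the fixed point of $\bar T^{\nu_k}$) and on the linearity of rewards and transitions in the mixing weight $\eta$ — both properties that would have to be re-examined under an approximate agent step or a nonlinear coupling between the players, as in the NR-MDP of Section~\ref{sec: noisy AR}.
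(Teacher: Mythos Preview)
Your proof is correct and follows essentially the same route as the paper's. Both arguments hinge on (i) $v_k$ being the fixed point of $\bar T^{\nu_k}$ because $\mu_k$ is the exact best response, (ii) the linearity of $T^{\mu,\cdot}$ in the adversary's mixture together with $\max$-of-sum $\le$ sum-of-$\max$ to get $\bar T^{\nu_{k+1}} v_k \le (1-\eta)v_k + \eta T v_k$, (iii) the identification $\bar T^{\nu'} v_k = T v_k$ via Lemma~\ref{lemma: min max is max min}, and (iv) the $\gamma$-contraction of $T$ to close the estimate. The only organizational difference is that the paper first isolates $v^* \le v_{k+1} \le v_k$ as a lemma and then works with $v^* - T^{\mu_{k+1},\nu_{k+1}} v_k$, whereas you obtain $v_{k+1} \le \bar T^{\nu_{k+1}} v_k$ directly by iterating the monotone contraction $\bar T^{\nu_{k+1}}$ from $v_k$; this is the same content packaged slightly differently.
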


As a corollary, and by plugging $\eta=1$, we get the convergence rate of Zero-Sum MG PI. Notice that although the action space is continuous the proof follows using standard machinery, since the state space is still finite. We now give the proof of the theorem.

The proof has two steps. We first show $v^*\leq v_{k+1}\leq v_{k}$, where $v_k \eqdef v^{\mu_k,\nu_k}$. Building on this fact, we prove the contraction property by generalizing technique from \cite{scherrer2014approximate}[Theorem 1], to two player game.

\begin{lemma} \label{lemma sup: inequallity between iterations soft}
$v^*\leq v_{k+1}\leq v_{k}$.
\end{lemma}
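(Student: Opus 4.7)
The plan is to prove the two-sided inequality via an operator-theoretic argument that hinges on the convexity of $\bar T^\nu v$ in $\nu$ and on fixed-point characterizations from Proposition~\ref{prop supp: fixed policy}.

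For the upper bound $v_{k+1} \leq v_k$, I would first observe that since $\mu_k$ is a best response to $\nu_k$, Proposition~\ref{prop supp: fixed policy} implies $v_k = v^{\mu_k,\nu_k}$ is the unique fixed point of $\bar T^{\nu_k}$, so $\bar T^{\nu_k} v_k = v_k$. Next, because $\nu' \in \argmin_\nu \bar T^{\nu} v_k$ by construction, $\bar T^{\nu'} v_k \leq \bar T^{\nu_k} v_k = v_k$. The key step is then to show that $\nu \mapsto \bar T^{\nu} v_k$ is convex: for each fixed $\mu$, the quantities $r^{\mu,\nu}$ and $P^{\mu,\nu}$ are affine in the adversary's mixed strategy $\nu$, so $r^{\mu,\nu} + \gamma P^{\mu,\nu} v_k$ is affine in $\nu$, and $\bar T^{\nu}v_k = \max_\mu\{r^{\mu,\nu}+\gamma P^{\mu,\nu} v_k\}$ is a pointwise maximum of affine maps, hence convex in $\nu$. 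Combining convexity with $\nu_{k+1} = (1-\eta)\nu_k + \eta \nu'$ yields
\begin{equation*}
\bar T^{\nu_{k+1}} v_k \;\leq\; (1-\eta)\,\bar T^{\nu_k} v_k + \eta\,\bar T^{\nu'} v_k \;\leq\; (1-\eta) v_k + \eta v_k \;=\; v_k.
\end{equation*}
Finally, since $\bar T^{\nu_{k+1}}$ is monotone and a $\gamma$-contraction (Lemma~\ref{lemma: supp everything is gamma contraction}) whose unique fixed point is $v^{\mu_{k+1},\nu_{k+1}} = v_{k+1}$ (again by Proposition~\ref{prop supp: fixed policy}, using that $\mu_{k+1}$ is a best response to $\nu_{k+1}$), iterating $\bar T^{\nu_{k+1}}$ on $v_k$ produces a non-increasing sequence converging to $v_{k+1}$, giving $v_{k+1} \leq v_k$.

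For the lower bound $v^* \leq v_{k+1}$, I would invoke Proposition~\ref{prop: supp minmax and equilibrium value}: if $\mu^*$ denotes the maximizer's Nash policy, then $v^{\mu^*,\nu} \geq v^*$ for every $\nu \in P_B$. In particular $v^{\mu^*,\nu_{k+1}} \geq v^*$, and since $\mu_{k+1} \in \argmax_\mu v^{\mu,\nu_{k+1}}$, we get $v_{k+1} = v^{\mu_{k+1},\nu_{k+1}} \geq v^{\mu^*,\nu_{k+1}} \geq v^*$.

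The main obstacle is the convexity step: one must be careful that the convex-combination formula $\nu_{k+1}=(1-\eta)\nu_k+\eta\nu'$ in the algorithm is indeed a convex combination of \emph{probability measures} (so that the induced $r^{\mu,\nu_{k+1}}$ and $P^{\mu,\nu_{k+1}}$ are the corresponding convex combinations), which is the precise reason the pointwise-maximum argument produces a convex function of $\nu$. Everything else reduces to monotonicity and contraction of fixed-policy Bellman operators, which were already established in Lemma~\ref{lemma: supp everything is gamma contraction} and Proposition~\ref{prop supp: fixed policy}.
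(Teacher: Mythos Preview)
Your proposal is correct and follows essentially the same approach as the paper. The paper derives $\bar T^{\nu_{k+1}} v_k \le v_k$ by writing out explicitly that a sum of maxima dominates the maximum of a sum, whereas you phrase the identical inequality as convexity of $\nu \mapsto \bar T^{\nu} v_k$ (pointwise maximum of affine maps); the remaining iteration via monotonicity and the lower bound via the best-response/minimax characterization are also the same, with only cosmetic differences in how Proposition~\ref{prop: supp minmax and equilibrium value} is invoked.
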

\begin{proof}
We have that $v_k = v^{\mu_k,\nu_k}$.
\begin{align}
v^{\mu_{k},\nu_{k}} &=  \bar{T}^{\nu_k} v^{\mu_{k},\nu_{k}} 
\nonumber \\
&= (1-\eta)\bar{T}^{\nu_k} v^{\mu_{k},\nu_{k}}+\eta  \bar{T}^{\nu_k} v^{\mu_{k},\nu_{k}} \nonumber \\
&\geq (1-\eta)\bar{T}^{\nu_k} v^{\mu_{k},\nu_{k}}+\min_{\nu\in P_B}\eta  \bar{T}^\nu v^{\mu_{k},\nu_{k}} \nonumber \\
&= (1-\eta)\bar{T}^{\nu_k} v^{\mu_{k},\nu_{k}}+\eta  \bar{T}^{\nu'} v^{\mu_{k},\nu_{k}} \nonumber\\
&=\max_{\mu\in P_A}\left((1-\eta)T^{\mu,\nu_k} v^{\mu_{k},\nu_{k}}\right)+  \max_{\mu\in P_A}\left(\eta\bar{T}^{\mu,\nu'} v^{\mu_{k},\nu_{k}}\right) \nonumber\\
&\geq \max_{\mu\in P_A}\left((1-\eta)T^{\mu,\nu_k} v^{\mu_{k},\nu_{k}}+  \eta\bar{T}^{\mu,\nu'} v^{\mu_{k},\nu_{k}}\right) \nonumber\\
& = \max_{\mu\in P_A}T^{\mu,(1-\eta)\nu_k+\eta \nu'} v^{\mu_{k},\nu_{k}} = \bar{T}^{(1-\eta)\nu_k+\eta \nu'} v^{\mu_{k},\nu_{k}}. \label{eq: supp inequality between iterations}
\end{align}

The first relation holds due to Proposition \ref{prop supp: fixed policy}, the forth relation holds by construction of $\nu'$, $\min_{\nu\in P_B} \bar{T}^\nu v^{\mu_k,\nu_k}=\bar{T}^{\nu'} v^{\mu_k,\nu_k}$, the fifth relation is by Definition \ref{def: supp Bellman operators MG}, the sixth relation holds since sum of maximum elements is bigger than the maximum of a sum, and the seventh relation holds since the fixed-policy Bellman operator satisfies ${T^{\mu,(1-\eta)\nu_1+\eta\nu_2} = (1-\eta)T^{\mu,\nu_1}+\eta T^{\mu,\nu_2}}$.

Due to the monotonicity of $\bar{T}^{(1-\eta)\nu_k+\eta \nu'}$ (e.g, \citet{patek1997stochastic}[Appendix A]), we can repeatedly use \eqref{eq: supp inequality between iterations},
\begin{align*}
v_k \geq \bar{T}^{(1-\eta)\nu_k+\eta \nu'}v_k \geq \cdot\cdot\cdot \geq \lim_{n\rightarrow \infty} (\bar{T}^{(1-\eta)\nu_k+\eta \nu'})^n v_k  = v_{k+1},
\end{align*}

where $ v_{k+1}= v^{\mu_{k+1},\nu_{k+1}}$. Indeed, $\bar{T}^{(1-\eta)\nu_k+\eta \nu'}$ is the optimal Bellman operator given a fixed adversary strategy, $(1-\eta)\nu_k+\eta \nu'$.

Lastly, we show that in each iteration $v^* \leq v_k$. For any adversarial strategy $\nu_k$,
\begin{align*}
v_k &= \max_{\mu\in P_A} v^{\mu,\nu_k} \geq \min_{\nu\in P_B}\max_{\mu\in P_A} v^{\mu,\nu}= v^*.
\end{align*}
Where the third relation holds by Proposition \ref{prop: supp minmax and equilibrium value}.
\end{proof}

We are now ready to prove Theorem \ref{theorem: supp general soft PI Markov Games}.

\begin{proof}
As before, define $v_k \eqdef v^{\mu_k,\nu_k}$. We have that,
\begin{align}
    v^* - v_{k+1}  & = v^* - T^{\mu_{k+1},(1-\eta)\nu_k+\eta\nu'}v_{k+1} \nonumber\\
    &\geq v^* - T^{\mu_{k+1},(1-\eta)\nu_k+\eta\nu'}v_{k} \nonumber\\
    &= (1-\eta)(v^*-  T^{\mu_{k+1},\nu_k}v_{k}) +\eta(v^*-  T^{\mu_{k+1},\nu'}v_{k}) \label{eq: supp contraction 1},
\end{align}
where the first relation holds since $v_{k+1} = v^{\mu_{k+1},(1-\eta)\nu_k+\eta \nu'}$ and the second relation holds since $T^{\mu,\nu}$ is a monotone operator and $v_{k+1}\leq v_k$ by Lemma \ref{lemma sup: inequallity between iterations soft}.

Consider the first term in \eqref{eq: supp contraction 1}.
\begin{align}
    v^*-  T^{\mu_{k+1},\nu_k}v_{k}\geq v^*-  T^{\mu_{k},\nu_k}v_{k} = v_k. \label{eq: supp contraction 2}
\end{align}
The first relation holds since $T^{\mu_{k},\nu_k}v_{k} = \max_{\mu\in P_A}T^{\mu,\nu_k}v_{k}$ and the second relation holds since by definition $v_k = v^{\mu_k,\nu_k} = T^{\mu_{k},\nu_k}v^{\mu_k,\nu_k}$ (due to Proposition \ref{prop supp: fixed policy}).

Remember that $\nu'\in \argmin_{\nu\in P_B} \bar{T}^\nu v_k$ (as in the update of Alg. \ref{alg supp: general MG soft PI}). Thus,
\begin{align}
    \bar{T}^{\nu'}v_k = \min_{\nu \in P_B}\bar{T}^{\nu}v_k =\min_{\nu \in P_B}\max_{\mu \in P_A}T^{\mu,\nu}v_k = \max_{\mu \in P_A}\min_{\nu \in P_B}T^{\mu,\nu} = T v_k, \label{eq: supp contraction intermidiate}
\end{align}
where the third relation is due to Lemma \ref{lemma: min max is max min}.

Now, for the second term in \eqref{eq: supp contraction 1} we have that,
\begin{align}
    v^*-  T^{\mu_{k+1},\nu'}v_{k}& = T v^*-  T^{\mu_{k+1},\nu'}v_{k} \nonumber\\
    &\geq  T^{\mu^*,\nu^*}v^*-  \max_{\mu\in P_A} T^{\mu,\nu'}v_{k} \nonumber\\
    &=  Tv^*-  Tv_{k}. \label{eq: supp contraction 3}
\end{align}

The first relation holds since $v^*$ is the fixed point of $T$, and the third relation holds by \eqref{eq: supp contraction intermidiate}.

Plugging \eqref{eq: supp contraction 2} and \eqref{eq: supp contraction 3} to \eqref{eq: supp contraction 1} yields,
\begin{align*}
    v^* - v_{k+1} \geq(1-\eta)(v^*-v_k) + \eta(Tv^*-  Tv_{k}).
\end{align*}
 Since $ 0 \geq v^* - v_{k+1}$ by Lemma \ref{lemma sup: inequallity between iterations soft}, we can take the max-norm and conclude the proof,
 \begin{align*}
     ||v^* - v_{k+1}||_\infty & \leq (1-\eta)||v^*-v_k ||_\infty +\eta||T v^*-T v_k ||_\infty\\
     & \leq (1-\eta)||v^*-v_k ||_\infty +\eta\gamma||v^*-v_k ||_\infty,
 \end{align*}
 where the first relation holds by the triangle inequality and the second holds since $T$ is a $\gamma$-contraction by Proposition \ref{lemma: supp everything is gamma contraction}.
\end{proof}

\section{Probabilistic Action Robust MDP}\label{supp: PR MDP}
In this section, we focus on PR-MDPs (Section~\ref{sec: probabilistic AR}) 
and map the problem of solving the optimal probabilistic robust policy to solving a Zero-Sum MG. We then continue and provide the proofs of Section~\ref{sec: probabilistic AR}, 
which are mostly corollaries to the results in Section \ref{supp: zero sum MG}. 

For simplicity, we provide the definition of PR-MDPs as given in Section~\ref{sec: probabilistic AR}.
\defPRMDP*

\subsection{Probabilistic Action Robust MDP as a Zero-Sum Markov Game} \label{supp: PR-MDP and MG}
Consider the single agent MDP on which the PR-MDP is defined, $\mathcal{M}=(\mathcal{S},\mathcal{A},P,R,\gamma)$.

\begin{assumption}\label{assumptions: single agent MDP}
\hfill
\begin{itemize}
    \item $\mathcal{A}$ is compact metric space.
    \item For any $s\in \mathcal{S}$ the reward $r$ is continuous and bounded function on $\mathcal{A}$.
    \item  For any $s\in \mathcal{S}$,   whenever $(a_n)\rightarrow(a)$, where $(a_n),(a)\in \mathcal{A}$, then $P(\cdot\mid s,a_n)$ converges weakly to $P(\cdot\mid s,a)$.
\end{itemize}
\end{assumption}

Solving the optimal probabilistic robust policy can be equivalently viewed as solving a Zero-Sum MG $\mathcal{M}_{P,\alpha}$. Let $\mathcal{M}_{P,\alpha}=(\mathcal{S},\mathcal{A},\mathcal{A},P_{P,\alpha},R_{P,\alpha},\gamma)$. Meaning, its state-space is equal to that of the original MDP, the action space of the two players is the action space of the original MDP, and its discount factor is equal to the discount factor of $\mathcal{M}$.  Its reward and dynamics are given as follows,
\begin{align}
    r_{P,\alpha}(s,a,b) = (1-\alpha)r(s,a)+\alpha r(s,b),\  P_{P,\alpha}(s'\mid s,a,b) = (1-\alpha)P(s'\mid s,a)+ \alpha P(s'\mid s,b). \label{eq: supp MG for PR MDP}
\end{align}

By Assumption \ref{assumptions: single agent MDP} on $\mathcal{M}$, Assumption \ref{assumptions: MG} on the MG is satisfied.

It is easy to prove that a value $v^{\pi^{\mathrm{mix}}_{P,\alpha}(\pi_1,\pi_2)}$ defined on $\mathcal{M}$ is equal to the value $v^{\pi_1,\pi_2}$ defined on $\mathcal{M}_{P,\alpha}$. Since there is a one-to-one correspondence between the problems, solving the later is equivalent to solving the first. 

\subsection{Proof of Proposition~\ref{prop:PR-MDP and determinstic optimal policy}}
\label{supp: proof PR-MDP and determinstic optimal policy}

Consider the Zero-Sum MG $\mathcal{M}_{P,\alpha}$, and let $P_A$ be the set of all probability measures on the Borel Sets of $\mathcal{A}$. We see that the Bellman operators of $\mathcal{M}_{P,\alpha}$ (Definition \ref{def: supp Bellman operators MG}) decouples to two terms due to \eqref{eq: supp MG for PR MDP},
\begin{align}
    T v  &= \max_{\mu\in P_A} \min_{\nu\in P_A}  r^{\mu,\nu} + \gamma P^{\mu,\nu}v \nonumber\\
        &= (1-\alpha)\left(\max_{\mu\in P_A} r^{\mu} + P^{\mu}v\right) +\alpha \left(\min_{\nu\in P_A} r^{\mu} + P^{\mu}v \right), \label{eq: supp bellman operator decouples}
\end{align}
and similarly for $T^\mu,\bar{T}^{\nu}$ and $T^{\mu,\nu}$.

According to Proposition \ref{prop: supp minmax and equilibrium value} the the optimal policy for the $\max$-agent $\mu^*$ satisfies $v^* = T v^* = T^{\mu^*} v^*$. Thus, $\mu^*$ should satisfy
\begin{align*}
    &(1-\alpha)\left(\max_{\mu\in P_A} r^{\mu} + P^{\mu}v^*\right) +\alpha \left(\min_{\nu\in P_A} r^{\mu} + P^{\mu}v^* \right) = (1-\alpha)\left(r^{\mu^*} + P^{\mu^*}v^*\right) +\alpha \left(\min_{\nu\in P_A} r^{\mu} + P^{\mu}v^* \right)\\
    \iff & \max_{\mu\in P_A} r^{\mu} + P^{\mu}v^* =  r^{\mu^*} + P^{\mu^*}v^*
\end{align*}

meaning, $\mu^* \in \max_{\mu\in P_A} r^{\mu} + P^{\mu}v^*$ which can always be solved by a deterministic policy.

\subsection{Probabilistic Action Robust and Robust MDPs}\label{supp: PR-MDP and RMDP}

Based on the mapping between a PR-MDP to a corresponding Zero-Sum MG \ref{supp: PR-MDP and MG} the relation to Robust MDPs becomes apparent. Instead for the adversary to pick an action which induces a change in the dynamics and reward \ref{eq: supp MG for PR MDP}, the adversary can directly choose the dynamics and reward. Obviously, the value of such a policy is similar under this equivalent view. We conclude the result since the adversary is defined on the class of stochastic policies $\mathcal{P}(\Pi)$.

%\comY{TODO- write here again a small lemma.}

\subsection{Proof of Proposition~\ref{prop: 1 step greedy gradients}}
\label{supp: proof  1 step greedy gradients}

Repeating the same arguments as in Policy Gradient Theorem \cite{sutton2000policy}[Theorem 1] for continuous action space we have that for any $s\in \mathcal{S}$ and $\pi\in \mathcal{P}(\Pi)$, i.e., any stochastic stationary policy,
\begin{align*}
    \nabla_\pi v^{\pi}(s) = \sum_s d^{\pi}(s)\int_{\action \in \mathcal{A}} \nabla_\pi \pi(s,\action) q^{\pi}(s,\action) d\action
\end{align*}

Notice that we can replace the integration and differentiation order by Leibniz integral rule since $\nabla_\pi v^{\pi}(s)$ exists and is bounded. Let $h(\cdot \mid s)$ be a deterministic probability measure on $A$.  Similarly to \cite{scherrer2014local} for any $s\in \mathcal{S}$,
\begin{align*}
    \langle \nabla_\pi v^{\pi}(s),h \rangle &= \sum_s d^{\pi}(s)\int_{\action \in \mathcal{A}} \langle \nabla_\pi \pi(s,\action) ,h \rangle  q^{\pi}(s,\action) d\action\\
     & = \sum_s d^{\pi}(s)q^{\pi}(s,h(s)).
\end{align*}

To minimize $\langle \nabla_\pi v^{\pi}(s),h \rangle$ we choose for any $s\in \mathcal{S}$, $\action_h \in \argmin_{a} q^{\pi}(\cdot,a) = \argmin_{\pi'} r^{\pi'}+\gamma P^{\pi'} v^\pi$.

\subsection{Proof of Theorem~\ref{theorem: PR-MDP contraction}}
\label{supp: PR-MDP contraction}
The theorem is a corollary of Theorem \ref{theorem: supp general soft PI Markov Games} and Proposition~\ref{prop: 1 step greedy gradients}, 
while using the structure of the defined zero-sum MG for PR-MDP in Section \ref{supp: PR-MDP and MG}, $\mathcal{M}_{P,\alpha}$.

Specifically, the first stage of the general Soft Zero-Sum MG PI \ref{alg supp: general MG soft PI} is similar to the first stage of Soft Probabilistic Robust PI~\ref{alg: soft alpha robust PI}.
Furthermore, for $\mathcal{M}_{P,\alpha}$ it holds for any bounded $v\in \mathbb{R}^{|\mathcal{S}|}$,
\begin{align*}
    \argmin_{\nu\in P_A} \bar{T}^\nu v &=\argmin_{\nu\in P_A} \max_{\mu\in P_A} T^{\mu,\nu}v \\
    &=\argmin_{\nu\in P_A} \max_{\mu\in P_A} (1-\alpha)(r^\mu+\gamma P^\mu v)+\alpha(r^\nu+\gamma P^\nu v)\\
    &=\argmin_{\nu\in P_A} \left(r^\nu+\gamma P^\nu v\right),
\end{align*}
where the first relation holds by definition \ref{def: supp Bellman operators MG}, the second relation holds due to the specific form of the Bellman operators similarly to \eqref{eq: supp bellman operator decouples}, and the third relation holds since the first term does not depend on $\nu$.

By using Proposition~\ref{prop: 1 step greedy gradients}
we get that Soft Probabilistic Robust PI~\ref{alg: soft alpha robust PI}
is an instance of the more general Soft Zero-Sum MG PI \ref{alg supp: general MG soft PI}, and prove the Theorem as a corollary of Theorem \ref{theorem: supp general soft PI Markov Games}.

\section{Noisy Action Robust MDP as a Zero-Sum Markov Game}\label{supp: NR MDP}

We focus on NR-MDPs (Section~\ref{sec: noisy AR}) 
and map the problem of solving the optimal noisy robust policy to solving a Zero-Sum MG. As in previous section, the proofs of Section~\ref{sec: noisy AR}, 
are mostly corollaries to the results in Section \ref{supp: zero sum MG}. 

For simplicity, we provide the definition of NR-MDPs as given in Section~\ref{sec: probabilistic AR}.
\setcounter{defn}{1}

\defNRMDP*

\subsection{Noisy Action Robust MDP as a Zero-Sum Markov Game} \label{supp: NR-MDP and MG}
Consider the single agent MDP on which the NR-MDP is defined, $\mathcal{M}=(\mathcal{S},\mathcal{A},P,R,\gamma)$ and assume it satisfies Assumption \ref{assumptions: single agent MDP}. Solving the optimal probabilistic robust policy can be equivalently viewed as solving a Zero-Sum MG $\mathcal{M}_{N,\alpha}$. Let $\mathcal{M}_{N,\alpha}=(\mathcal{S},\mathcal{A},\mathcal{A},P_{N,\alpha},R_{N,\alpha},\gamma)$. Meaning, its state-space is equal to that of the original MDP, the action space of the two players is the action space of the original MDP, and its discount factor is equal to the discount factor of $\mathcal{M}$.  Its reward and dynamics are given as follows,
\begin{align}
    r_{N,\alpha}(s,a,b) = r(s,(1-\alpha)a+\alpha b),\  P_{P,\alpha}(s'\mid s,a,b) = P(s'\mid s,(1-\alpha)a+\alpha b). \label{eq: supp MG for NR MDP}
\end{align}

Since the single agent MDP satisfies Assumption Assumption \ref{assumptions: single agent MDP}, the MG game defined by $\mathcal{M}_{N,\alpha}$ satisfies \ref{assumptions: MG}.

It is easy to prove that a value $v^{\pi_N^{\mathrm{mix}}(\pi_1,\pi_2)}$ defined on the induced NR-MDP from $\mathcal{M}$ is equal to the value $v^{\pi_1,\pi_2}$ defined on the MG $\mathcal{M}_{N,\alpha}$. Since there is a one-to-one correspondence between the problems, solving the later is equivalent to solving the first. 

\subsection{Proof of Proposition~\ref{prop: noisy policy is stochastic}}
\label{supp: noisy policy is stochastic}
Consider an MDP with a single state a quadratic reward of the form $r(a) = a^2$ where $a\in [-1,1]$. In this case, the solution does not depend on the horizon and an optimal action w.r.t. a single time step will be the solution for the discounted reward. Denote $\mathcal{P}([-1,1])$ as the set of all probability measures on the Borel sets of $[-1,1]$.

If both of the players are only allowed to take deterministic actions, then the min-max and max-min values are not equivalent,
\begin{align*}
    &\max_{a\in [-1,1]}\min_{b\in [-1,1]} ((1-\alpha)a+\alpha b)^2 =  \begin{cases}
    (1-2\alpha)^2,\ \alpha\leq 0.5 \\
    0,\ \alpha>0.5 \\
    \end{cases}\\
    &\min_{b\in [-1,1]} \max_{a\in [-1,1]} ((1-\alpha)a+\alpha b)^2 = (1-\alpha)^2.
\end{align*}

Thus, for this example, strong duality on the sets of deterministic policies does not hold, $$ \max_{a\in [-1,1]}\min_{b\in [-1,1]} ((1-\alpha)a+\alpha b)^2< \min_{b\in [-1,1]} \max_{a\in [-1,1]} ((1-\alpha)a+\alpha b)^2 = (1-\alpha)^2.$$

Furthermore, we now show that considering random policies can increase the value. Let the policy of the $\max$-player be $P(a=-1)=P(a=1)=0.5$, obviously, $P\in \mathcal{P}([-1,1])$. For this policy, we have that,
\begin{align*}
    \min_{b\in [-1,1]} \mathbb{E}_{a\sim P(\cdot)}[(1-\alpha)a+\alpha b)^2] = \min_{b\in [-1,1]} (1-\alpha)^2+\alpha^2 b = (1-\alpha)^2.
\end{align*}

We conclude that for this example $$ \max_{a\in [-1,1]}\min_{b\in [-1,1]} ((1-\alpha)a+\alpha b)^2 < \max_{P\in \mathcal{P}([-1,1])}\min_{b\in [-1,1]} \mathbb{E}_{a\sim P}[((1-\alpha)a+\alpha b)^2].$$

\subsection{Policy Iteration of NR-MDP} \label{supp: PI for NR-MDP}
We can use the Soft Zero-Sum MG PI (see Algorithm \ref{alg supp: general MG soft PI}), or, by fixing $\eta=1$, Zero-Sum MG PI. 

The algorithm repeats two stages of (i) solving an MDP by fixing the adversary policy, (ii) solving a 1-step greedy minimax decision problem on the set of stochastic policies. This comes in contrast to the corresponding PI algorithm that solves PR-MDP. There, stage (ii) involved in solving a \emph{single} agent, 1-step greedy, decision problem. This problem can be more easily solved by function maximization. 

Furthermore, this fact suggest that a simple Frank-Wolfe update \cite{frank1956algorithm}, as was performed in Soft Probabilistic Robust PI (Algorithm~\ref{alg: soft alpha robust PI}) 
would not work, at least not using the analysis we suggested here. Meaning, a relation between the maximal projection on the gradient $\nabla_\pi v^\pi$ and the 1-step greedy minimax decision problem, as shown to hold in Proposition~\ref{prop: 1 step greedy gradients}, 
would not exists.

\section{Actor Gradients Proof}\label{apndx: gradients proof}
\begin{proof}
    Our proof follows the proof of the deterministic policy gradients (DPG) \cite{silver2014deterministic}.
    % http://proceedings.mlr.press/v32/silver14-supp.pdf
    
    In order to retain consistency with \cite{silver2014deterministic}, we denote the deterministic policy $\pi$ by $\mu: S \mapsto A$. The parametrized policies $\mu_\theta$ and $\bar \mu_{\bar \theta}$ are, respectively, the actor and adversary policies. We refer to the $\alpha$-mixture policy $\pi^{\mathrm{mix}}_{N/P,\alpha} (\mu_\theta, \bar \mu_{\bar \theta})$ simply as $\pi^{\mathrm{mix}}_{N/P,\alpha} (\theta, \bar \theta)$, for ease of notation.
    
    \begin{assumption}
        $p(\state' \mid \state, \action), \nabla_{\action} p(\state' \mid \state, \action), \mu_\theta (\state), \nabla_\theta \mu_\theta (\state), \bar \mu_{\bar \theta} (\state), \nabla_{\bar \theta} \bar \mu_{\bar \theta} (\state), r(\state, \action), \nabla_a r (\state, \action), p_1(\state)$ are continuous in all parameters and variables $\state, \action, \state'$ and $x$.
    \end{assumption}
    
    \begin{assumption}
        There exists a $b$ and $L$ such that $\sup_{\state} p_1(\state) < b, \sup_{\action, \state, \state'} p(\state' \mid \state, \action) < b, \sup_{\action, \state} r(\state, \action) < b, \sup_{\action, \state, \state'} || \nabla_{\action} p(\state' \mid \state, \action) || < L,$ and $\sup_{\state, \action} ||\nabla_{\action} r(\state, \action)|| < L$.
    \end{assumption}
    
    % \citet{silver2014deterministic} argued that in continuous action spaces, it is problematic to find the optimal action at each state and it is thus attractive to consider an alternative improvement method - to move the policy in a direction which maximizes the expected $Q$ value, e.g., the gradient.
    
    \paragraph{NR-MDP:}
    \begin{align*}
        \nabla_\theta v^{\pi^{\mathrm{mix}}_{N,\alpha} (\theta, \bar \theta)} &= \nabla_\theta Q^{\pi^{\mathrm{mix}}_{N,\alpha} (\theta, \bar \theta)} (\state, \pi^{\mathrm{mix}}_{N,\alpha} (\theta, \bar \theta) (\state)) \\
        &= \nabla_\theta \left( r(\state, \pi^{\mathrm{mix}}_{N,\alpha} (\theta, \bar \theta) (\state)) + \int_S \gamma p(\state' \mid \state, \pi^{\mathrm{mix}}_{N,\alpha} (\theta, \bar \theta) (\state)) v^{\pi^{\mathrm{mix}}_{N,\alpha} (\theta, \bar \theta)} (\state') \right) d\state' \\
        &= \nabla_\theta \pi^{\mathrm{mix}}_{N,\alpha} (\theta, \bar \theta) (\state) \nabla_{\action} r(\state, \action) \mid_{\action = \pi^{\mathrm{mix}}_{N,\alpha} (\theta, \bar \theta) (\state)} + \nabla_\theta \int_S \gamma p(\state' \mid \state, \pi^{\mathrm{mix}}_{N,\alpha} (\theta, \bar \theta)(\state)) v^{\pi^{\mathrm{mix}}_{N,\alpha} (\theta, \bar \theta)} (\state') d\state' \\
        &= \nabla_\theta \pi^{\mathrm{mix}}_{N,\alpha} (\theta, \bar \theta) (\state) \nabla_{\theta} r(\state, \action) \mid_{\action = \pi^{\mathrm{mix}}_{N,\alpha} (\theta, \bar \theta) (\state)} \\
        &\enspace + \int_S \gamma \left( p(\state' \mid \state, \pi^{\mathrm{mix}}_{N,\alpha} (\theta, \bar \theta) (\state)) \nabla_\theta v^{\pi^{\mathrm{mix}}_{N,\alpha} (\theta, \bar \theta)} (\state') + \nabla_{\theta} \pi^{\mathrm{mix}}_{N,\alpha} (\theta, \bar \theta) (\state) \nabla_{\action} p(\state' \mid \state, \action) \mid_{\action = \pi^{\mathrm{mix}}_{N,\alpha} (\theta, \bar \theta) (\state)} v^{\pi^{\mathrm{mix}}_{N,\alpha} (\theta, \bar \theta)} (\state') \right)d\state' \\
        &= \nabla_\theta \pi^{\mathrm{mix}}_{N,\alpha} (\theta, \bar \theta) (\state) \nabla_{\action} \left( r(\state, \action) + \int_S \gamma p(\state' \mid \state, \action) v^{\pi^{\mathrm{mix}}_{N,\alpha} (\theta, \bar \theta)} (\state') d\state' \right) \mid_{\action = \pi^{\mathrm{mix}}_{N,\alpha} (\theta, \bar \theta) (\state)} \\
        &\enspace + \int_S \gamma p(\state' \mid \state, \pi^{\mathrm{mix}}_{N,\alpha} (\theta, \bar \theta) (\state)) \nabla_\theta v^{\pi^{\mathrm{mix}}_{N,\alpha} (\theta, \bar \theta)} (\state') d\state' \\
        &= \nabla_\theta \pi^{\mathrm{mix}}_{N,\alpha} (\theta, \bar \theta) (\state) \nabla_{\action} Q^{\pi^{\mathrm{mix}}_{N,\alpha} (\theta, \bar \theta)} (\state, \action) \mid_{\action = \pi^{\mathrm{mix}}_{N,\alpha} (\theta, \bar \theta) (\state)} + \int_S \gamma p(\state \rightarrow \state', 1, \pi^{\mathrm{mix}}_{N,\alpha} (\theta, \bar \theta)) \nabla_\theta v^{\pi^{\mathrm{mix}}_{N,\alpha} (\theta, \bar \theta)} (\state') d\state' \enspace .
    \end{align*}
    Where $p(\state \rightarrow \state', t, \pi)$ denotes the density at state $\state'$ after transitioning for $t$ steps from state $\state$. Iterating this formula leads to the following result:
    \begin{align*}
        \nabla_\theta v^{\pi^{\mathrm{mix}}_{N,\alpha} (\theta, \bar \theta)} &= \nabla_\theta \pi^{\mathrm{mix}}_{N,\alpha} (\theta, \bar \theta) (\state) \nabla_{\action} Q^{\pi^{\mathrm{mix}}_{N,\alpha} (\theta, \bar \theta)} (\state, \action) \mid_{\action = \pi^{\mathrm{mix}}_{N,\alpha} (\theta, \bar \theta) (\state)} \\
        &\enspace + \int_S \gamma p(\state \rightarrow \state', 1, \pi^{\mathrm{mix}}_{N,\alpha} (\theta, \bar \theta)) \nabla_\theta \pi^{\mathrm{mix}}_{N,\alpha} (\theta, \bar \theta) (\state') \nabla_{\action} Q^{\pi^{\mathrm{mix}}_{N,\alpha} (\theta, \bar \theta)} (\state', \action) \mid_{\action=\pi^{\mathrm{mix}}_{N,\alpha} (\theta, \bar \theta)(\state')} d\state' \\
        &\enspace + \int_S \gamma p(\state \rightarrow \state', 1, \pi^{\mathrm{mix}}_{N,\alpha} (\theta, \bar \theta)) \int_S \gamma p(\state' \rightarrow \state'', 1, \pi^{\mathrm{mix}}_{N,\alpha} (\theta, \bar \theta)) \nabla_\theta v^{\pi^{\mathrm{mix}}_{N,\alpha} (\theta, \bar \theta)} (\state'') d\state'' d\state' \\
        &= \nabla_\theta \pi^{\mathrm{mix}}_{N,\alpha} (\theta, \bar \theta) (\state) \nabla_{\action} Q^{\pi^{\mathrm{mix}}_{N,\alpha} (\theta, \bar \theta)} (\state, \action) \mid_{\action = \pi^{\mathrm{mix}}_{N,\alpha} (\theta, \bar \theta) (\state)} \\
        &\enspace + \int_S \gamma p(\state \rightarrow \state', 1, \pi^{\mathrm{mix}}_{N,\alpha} (\theta, \bar \theta)) \nabla_\theta \pi^{\mathrm{mix}}_{N,\alpha} (\theta, \bar \theta) (\state') \nabla_{\action} Q^{\pi^{\mathrm{mix}}_{N,\alpha} (\theta, \bar \theta)} (\state', \action) \mid_{\action=\pi^{\mathrm{mix}}_{N,\alpha} (\theta, \bar \theta)(\state')} d\state' \\
        &\enspace + \int_S \gamma^2 p(\state \rightarrow \state', 2, \pi^{\mathrm{mix}}_{N,\alpha} (\theta, \bar \theta)) \nabla_\theta v^{\pi^{\mathrm{mix}}_{N,\alpha} (\theta, \bar \theta)} (\state') d\state' \\
        &= \int_S \sum_{t=0}^\infty \gamma^t p(\state \rightarrow s', t, \pi^{\mathrm{mix}}_{N,\alpha} (\theta, \bar \theta)) \nabla_\theta \pi^{\mathrm{mix}}_{N,\alpha} (\theta, \bar \theta) (\state') \nabla_{\action} Q^{\pi^{\mathrm{mix}}_{N,\alpha} (\theta, \bar \theta)} (\state', \action) \mid_{\action = \pi^{\mathrm{mix}}_{N,\alpha} (\theta, \bar \theta) (\state')} d\state' \enspace .
    \end{align*}
    Taking the expectation over $S_1$:
    \begin{align*}
        \nabla_\theta J(\pi^{\mathrm{mix}}_{N,\alpha} (\theta, \bar \theta)) &= \nabla_\theta \int_S p_1 (\state) v^{\pi^{\mathrm{mix}}_{N,\alpha} (\theta, \bar \theta)} (\state) d\state \\
        &= \int_S p_1 (\state) \nabla_\theta v^{\pi^{\mathrm{mix}}_{N,\alpha} (\theta, \bar \theta)} (\state) d\state \\
        &= \int_S \int_S \sum_{t = 0}^\infty \gamma^t p_1 (\state) p(\state \rightarrow \state', t, \pi^{\mathrm{mix}}_{N,\alpha} (\theta, \bar \theta)) \nabla_\theta \pi^{\mathrm{mix}}_{N,\alpha} (\theta, \bar \theta) (\state') \nabla_{\action} Q^{\pi^{\mathrm{mix}}_{N,\alpha} (\theta, \bar \theta)} (\state', \action) \mid_{\action = \pi^{\mathrm{mix}}_{N,\alpha} (\theta, \bar \theta) (\state')} d\state' d\state \\
        &= \int_S \rho^{\pi^{\mathrm{mix}}_{N,\alpha} (\theta, \bar \theta)} \nabla_\theta \pi^{\mathrm{mix}}_{N,\alpha} (\theta, \bar \theta) (\state) \nabla_{\action} Q^{\pi^{\mathrm{mix}}_{N,\alpha} (\theta, \bar \theta)} (\state, \action) \mid_{\action = \pi^{\mathrm{mix}}_{N,\alpha} (\theta, \bar \theta) (\state)} d\state \\
        &= \int_S \rho^{\pi^{\mathrm{mix}}_{N,\alpha} (\theta, \bar \theta)} \nabla_\theta ((1 - \alpha) \mu_\theta (\state) + \alpha \bar \mu_{\bar \theta} (\state)) \nabla_{\action} Q^{\pi^{\mathrm{mix}}_{N,\alpha} (\theta, \bar \theta)} (\state, \action) \mid_{\action = \pi^{\mathrm{mix}}_{N,\alpha} (\theta, \bar \theta) (\state)} d\state \\
        &= (1 - \alpha) \int_S \rho^{\pi^{\mathrm{mix}}_{N,\alpha} (\theta, \bar \theta)} \nabla_\theta \mu_\theta (\state) \nabla_{\action} Q^{\pi^{\mathrm{mix}}_{N,\alpha} (\theta, \bar \theta)} (\state, \action) \mid_{\action = \pi^{\mathrm{mix}}_{N,\alpha} (\theta, \bar \theta) (\state)} d\state
    \end{align*}
    notice that compared to the standard DPGs \cite{silver2014deterministic}, the gradient is w.r.t. the actor's (adversary's) policy and is weighted by $1 - \alpha$ ($\alpha$). Similar to the DPG, the gradient of the action-value function is taken w.r.t. the action taken (the mixture policy).
    
    \paragraph{PR-MDP:}
    The PR-MDP, constructed by two deterministic policies $\mu_\theta$ and $\bar \mu_{\bar \theta}$ can be defined as follows:
    \begin{equation*}
        \pi^{\mathrm{mix}}_{P,\alpha}(u \mid s; \theta,\bar \theta) = (1-\alpha)\delta(u-\mu_\theta(s))+\alpha\delta(u - \bar \mu_{\bar \theta}(s)) .
    \end{equation*}
    
    \begin{equation*}
        v^{\pi^{\mathrm{mix}}_{P,\alpha} (\theta, \bar \theta)} = \int_A \pi^{\mathrm{mix}}_{P,\alpha}(u \mid s; \theta,\bar \theta) Q^{\pi^{\mathrm{mix}}_{P,\alpha} (\theta, \bar \theta)} (\state, \pi^{\mathrm{mix}}_{P,\alpha} (\theta, \bar \theta) (\state)) du
    \end{equation*}
    
    \begin{align*}
        \nabla_\theta v^{\pi^{\mathrm{mix}}_{P,\alpha} (\theta, \bar \theta)} &= \nabla_\theta \int_A \pi^{\mathrm{mix}}_{P,\alpha}(u \mid s; \theta,\bar \theta) Q^{\pi^{\mathrm{mix}}_{P,\alpha} (\theta, \bar \theta)} (\state, u) du \\
        &= \nabla_\theta [(1 - \alpha) Q^{\pi^{\mathrm{mix}}_{P,\alpha} (\theta, \bar \theta)} (\state, \mu_\theta (\state)) + \alpha Q^{\pi^{\mathrm{mix}}_{P,\alpha} (\theta, \bar \theta)} (\state, \bar \mu_{\bar \theta} (\state))] \\
        &= (1 - \alpha) \nabla_\theta Q^{\pi^{\mathrm{mix}}_{P,\alpha} (\theta, \bar \theta)} (\state, \mu_\theta (\state)) + \alpha \nabla_\theta Q^{\pi^{\mathrm{mix}}_{P,\alpha} (\theta, \bar \theta)} (\state, \bar \mu_{\bar \theta} (\state))
    \end{align*}
    
    we address each element, (1) $\nabla_\theta Q^{\pi^{\mathrm{mix}}_{P,\alpha} (\theta, \bar \theta)} (\state, \mu_\theta (\state))$ and (2) $Q^{\pi^{\mathrm{mix}}_{P,\alpha} (\theta, \bar \theta)} (\state, \bar \mu_{\bar \theta} (\state))$, individually:
    
    (1):
    \begin{align*}
        \nabla_\theta Q^{\pi^{\mathrm{mix}}_{P,\alpha} (\theta, \bar \theta)} (\state, \mu_\theta (\state)) &= \nabla \left( r(\state, \mu_\theta (\state)) + \int_S \gamma p(\state' \mid \state, \mu_\theta (\state)) v^{\pi^{\mathrm{mix}}_{P,\alpha} (\theta, \bar \theta)} (\state') \right) d\state' \\
        &= \nabla_\theta \mu_\theta (\state) \nabla_{\action} r(\state, \action) \mid_{\action = \mu_\theta (\state)} + \nabla_\theta \int_S \gamma p(\state' \mid \state, \mu_\theta (\state)) v^{\pi^{\mathrm{mix}}_{P,\alpha} (\theta, \bar \theta)} (\state') d\state' \\
        &= \nabla_\theta \mu_\theta (\state) \nabla_{\theta} r(\state, \action) \mid_{\action = \mu_\theta (\state)} \\
        &\enspace + \int_S \gamma \left( p(\state' \mid \state, \mu_\theta (\state)) \nabla_\theta v^{\pi^{\mathrm{mix}}_{P,\alpha} (\theta, \bar \theta)} (\state') + \nabla_{\theta} \mu_\theta (\state) \nabla_{\action} p(\state' \mid \state, \action) \mid_{\action = \mu_\theta (\state)} v^{\pi^{\mathrm{mix}}_{P,\alpha} (\theta, \bar \theta)} (\state') \right)d\state' \\
        &= \nabla_\theta \mu_\theta (\state) \nabla_{\action} \left( r(\state, \action) + \int_S \gamma p(\state' \mid \state, \action) v^{\pi^{\mathrm{mix}}_{P,\alpha} (\theta, \bar \theta)} (\state') d\state' \right) \mid_{\action = \mu_\theta (\state)} \\
        &\enspace + \int_S \gamma p(\state' \mid \state, \mu_\theta (\state)) \nabla_\theta v^{\pi^{\mathrm{mix}}_{P,\alpha} (\theta, \bar \theta)} (\state') d\state' \\
        &= \nabla_\theta \mu_\theta (\state) \nabla_{\action} Q^{\pi^{\mathrm{mix}}_{P,\alpha} (\theta, \bar \theta)} (\state, \action) \mid_{\action = \mu_\theta (\state)} + \int_S \gamma p(\state \rightarrow \state', 1, \mu_\theta) \nabla_\theta v^{\pi^{\mathrm{mix}}_{P,\alpha} (\theta, \bar \theta)} (\state') d\state' \enspace .
    \end{align*}
    Where $p(\state \rightarrow \state', t, \pi)$ denotes the density at state $\state'$ after transitioning for $t$ steps from state $\state$.
    
    (2):
    \begin{align*}
        \nabla_\theta Q^{\pi^{\mathrm{mix}}_{P,\alpha} (\theta, \bar \theta)} (\state, \bar\mu_{\bar\theta} (\state)) &= \nabla_\theta \left( r(\state, \bar\mu_{\bar\theta} (\state)) + \int_S \gamma p(\state' \mid \state, \bar\mu_{\bar\theta} (\state)) v^{\pi^{\mathrm{mix}}_{P,\alpha} (\theta, \bar \theta)} (\state') \right) d\state' \\
        &= \nabla_\theta \bar\mu_{\bar\theta} (\state) \nabla_{\action} r(\state, \action) \mid_{\action = \bar\mu_{\bar\theta} (\state)} + \nabla_\theta \int_S \gamma p(\state' \mid \state, \bar\mu_{\bar\theta} (\state)) v^{\pi^{\mathrm{mix}}_{P,\alpha} (\theta, \bar \theta)} (\state') d\state' \\
        &= \int_S \gamma \left( p(\state' \mid \state, \bar\mu_{\bar\theta} (\state)) \nabla_\theta v^{\pi^{\mathrm{mix}}_{P,\alpha} (\theta, \bar \theta)} (\state') + \nabla_{\theta} \bar\mu_{\bar\theta} (\state) \nabla_{\action} p(\state' \mid \state, \action) \mid_{\action = \bar\mu_{\bar\theta} (\state)} v^{\pi^{\mathrm{mix}}_{P,\alpha} (\theta, \bar \theta)} (\state') \right)d\state' \\
        &= \int_S \gamma p(\state' \mid \state, \bar\mu_{\bar\theta} (\state)) \nabla_\theta v^{\pi^{\mathrm{mix}}_{P,\alpha} (\theta, \bar \theta)} (\state') d\state' \enspace .
    \end{align*}

    Hence:
    \begin{align*}
        \nabla_\theta v^{\pi^{\mathrm{mix}}_{P,\alpha} (\theta, \bar \theta)} &= (1 - \alpha) \nabla_\theta Q^{\pi^{\mathrm{mix}}_{P,\alpha} (\theta, \bar \theta)} (\state, \mu_\theta (\state)) + \alpha \nabla_\theta Q^{\pi^{\mathrm{mix}}_{P,\alpha} (\theta, \bar \theta)} (\state, \bar \mu_{\bar \theta} (\state)) \\
        &= (1-\alpha) \nabla_\theta \mu_\theta (\state) \nabla_{\action} Q^{\pi^{\mathrm{mix}}_{P,\alpha} (\theta, \bar \theta)} (\state, \action) \mid_{\action = \mu_\theta (\state)} \\
        &+ (1-\alpha) \int_S \gamma p(\state \rightarrow \state', 1, \mu_\theta) \nabla_\theta v^{\pi^{\mathrm{mix}}_{P,\alpha} (\theta, \bar \theta)} (\state') d\state' + \alpha \int_S \gamma p(\state' \mid \state, \bar\mu_{\bar\theta} (\state)) \nabla_\theta v^{\pi^{\mathrm{mix}}_{P,\alpha} (\theta, \bar \theta)} (\state') d\state' \\
        &= (1-\alpha) \nabla_\theta \mu_\theta (\state) \nabla_{\action} Q^{\pi^{\mathrm{mix}}_{P,\alpha} (\theta, \bar \theta)} (\state, \action) \mid_{\action = \mu_\theta (\state)} + \int_S \gamma p(\state' \mid \state, \pi^{\mathrm{mix}}_{P,\alpha}(\theta, \bar\theta) (\state)) \nabla_\theta v^{\pi^{\mathrm{mix}}_{P,\alpha} (\theta, \bar \theta)} (\state') d\state'
    \end{align*}
    
    Applying this iteratively:
    
    \begin{align*}
        \nabla_\theta v^{\pi^{\mathrm{mix}}_{P,\alpha} (\theta, \bar \theta)} &= (1-\alpha) \nabla_\theta \mu_\theta (\state) \nabla_{\action} Q^{\pi^{\mathrm{mix}}_{P,\alpha} (\theta, \bar \theta)} (\state, \action) \mid_{\action = \mu_\theta (\state)} \\
        &+ \int_S \gamma p(\state' \mid \state, \pi^{\mathrm{mix}}_{P,\alpha}(\theta, \bar\theta) (\state)) \nabla_\theta v^{\pi^{\mathrm{mix}}_{P,\alpha} (\theta, \bar \theta)} (\state') d\state' \\
        &= (1-\alpha) \nabla_\theta \mu_\theta (\state) \nabla_{\action} Q^{\pi^{\mathrm{mix}}_{P,\alpha} (\theta, \bar \theta)} (\state, \action) \mid_{\action = \mu_\theta (\state)} \\
        &\enspace + \int_S \gamma p(\state \rightarrow \state', 1, \pi^{\mathrm{mix}}_{P,\alpha} (\theta, \bar \theta)) \nabla_\theta \mu_\theta (\state') \nabla_{\action} Q^{\pi^{\mathrm{mix}}_{N,\alpha} (\theta, \bar \theta)} (\state', \action) \mid_{\action=\mu_\theta(\state')} d\state' \\
        &\enspace + \int_S \gamma p(\state \rightarrow \state', 1, \pi^{\mathrm{mix}}_{P,\alpha} (\theta, \bar \theta)) \int_S \gamma p(\state' \rightarrow \state'', 1, \pi^{\mathrm{mix}}_{P,\alpha} (\theta, \bar \theta)) \nabla_\theta v^{\pi^{\mathrm{mix}}_{N,\alpha} (\theta, \bar \theta)} (\state'') d\state'' d\state' \\
        &= (1-\alpha) \nabla_\theta \mu_\theta (\state) \nabla_{\action} Q^{\pi^{\mathrm{mix}}_{P,\alpha} (\theta, \bar \theta)} (\state, \action) \mid_{\action = \mu_\theta (\state)} \\
        &\enspace + (1-\alpha)\int_S \gamma p(\state \rightarrow \state', 1, \pi^{\mathrm{mix}}_{P,\alpha} (\theta, \bar \theta)) \nabla_\theta \mu_\theta (\state') \nabla_{\action} Q^{\pi^{\mathrm{mix}}_{P,\alpha} (\theta, \bar \theta)} (\state', \action) \mid_{\action=\mu_\theta(\state')} d\state' \\
        &\enspace + \int_S \gamma^2 p(\state \rightarrow \state', 2, \pi^{\mathrm{mix}}_{P,\alpha} (\theta, \bar \theta)) \nabla_\theta v^{\pi^{\mathrm{mix}}_{P,\alpha} (\theta, \bar \theta)} (\state') d\state' \\
        &= (1-\alpha) \int_S \sum_{t=0}^\infty \gamma^t p(\state \rightarrow s', t, \pi^{\mathrm{mix}}_{P,\alpha} (\theta, \bar \theta)) \nabla_\theta \mu_\theta (\state') \nabla_{\action} Q^{\pi^{\mathrm{mix}}_{P,\alpha} (\theta, \bar \theta)} (\state', \action) \mid_{\action = \mu\theta(\state')} d\state' \enspace .
    \end{align*}
    
    Taking the expectation over $S_1$:
    \begin{align*}
        \nabla_\theta J(\pi^{\mathrm{mix}}_{P,\alpha} (\theta, \bar \theta)) &= \nabla_\theta \int_S p_1 (\state) v^{\pi^{\mathrm{mix}}_{P,\alpha} (\theta, \bar \theta)} (\state) d\state \\
        &= \int_S p_1 (\state) \nabla_\theta v^{\pi^{\mathrm{mix}}_{P,\alpha} (\theta, \bar \theta)} (\state) d\state \\
        &= (1-\alpha) \int_S \int_S \sum_{t = 0}^\infty \gamma^t p_1 (\state) p(\state \rightarrow \state', t, \pi^{\mathrm{mix}}_{P,\alpha} (\theta, \bar \theta)) \nabla_\theta \pi_\theta (\state') \nabla_{\action} Q^{\pi^{\mathrm{mix}}_{P,\alpha} (\theta, \bar \theta)} (\state', \action) \mid_{\action = \pi_\theta (\state')} d\state' d\state \\
        &= (1-\alpha) \int_S \rho^{\pi^{\mathrm{mix}}_{P,\alpha} (\theta, \bar \theta)} \nabla_\theta \pi_\theta (\state) \nabla_{\action} Q^{\pi^{\mathrm{mix}}_{P,\alpha} (\theta, \bar \theta)} (\state, \action) \mid_{\action = \pi_\theta (\state)} d\state
    \end{align*}
    
    the resulting gradient update for the actor does not directly take into consideration the policy of the adversary, thus resulting in a gradient rule similar (weighted by $(1 - \alpha)$ for the actor and $\alpha$ for the adversary) to that seen in \citet{silver2014deterministic}.
    
    Intuitively, as the action is sampled w.p. $(1 - \alpha)$ from the actor and w.p, $\alpha$ from the adversary, each player acts greedily at the immediate step ignoring potential perturbations. The mutual effect of the actor and adversary is attained through the $Q$ value which captures the long term return of the mixture policy.
\end{proof}
\newpage

\begin{algorithm}[H]
\caption{Action-Robust DDPG}\label{alg:robust_ddpg}
\begin{algorithmic}
    \STATE \textbf{Input:} Actor update steps ($N$), uncertainty value $\alpha$ and discount factor $\gamma$
    \STATE Randomly initialize critic network $Q(\state, \action; \phi)$, actor $f(\state; \theta)$ and adversary $\bar f (\state; \bar \theta)$
    \STATE Initialize target networks with weights $\phi^-, \theta^-, \bar \theta^-$
    \STATE Initialize replay buffer $R$
    
    \FOR{episode in $0...M$}
        \STATE Receive initial state $\state_0$
        \FOR{t in $0...T$}
            \STATE Sample action $\action_t = \begin{cases} f(\state; \theta_\pi) \text{ w.p. } (1 - \alpha) \text{ and } \bar f(s; \theta_{\bar \pi}) \text{ otherwise } & \text{, PR-MDP} \\
            (1 - \alpha) f(\state; \theta_\pi) + \alpha \bar f(\state; \bar \theta_{\bar \pi}) & \text{, NR-MDP}
            \end{cases}$
            \STATE $\tilde \action_t = \action_t$ + exploration noise
            \STATE Execute action $\tilde \action_t$ and observe reward $r_t$ and new state $s_{t+1}$
            \STATE Store transition $(\state_t, \tilde \action_t, r_t, \state_{t+1})$ in $R$
            
            \FOR{i in $0...N$}
                \STATE Sample batch from replay buffer
                \STATE Update actor:
                \STATE $\theta \gets \begin{cases} \nabla_\theta (1 - \alpha) Q(\state, f(\state; \theta)) &, \text{PR-MDP} \\
                \nabla_\theta Q(\state, (1 - \alpha) f(\state; \theta) + \alpha \bar f (\state; \bar \theta)) &, \text{NR-MDP}
                \end{cases}$
                \STATE Update critic:
                \STATE $\phi \gets \begin{cases}\nabla_\phi || r + \gamma [(1-\alpha) Q(\state', f(\state'; \theta^-)) + \alpha Q(\state', f(\state'; \bar \theta^-))] ||_2^2  & , \text{PR-MDP}\\
                \nabla_\phi || r + \gamma [Q(\state', (1-\alpha) f(\state'; \theta^-) + \alpha f(\state'; \bar \theta^-))] ||_2^2  & , \text{NR-MDP}
                \end{cases}$
            \ENDFOR
            
            \STATE Sample batch from replay buffer
            \STATE Update adversary:
            \STATE $\bar \theta \gets \begin{cases} \nabla_{\bar \theta} \alpha Q(\state, \bar f(\state; \bar \theta)) &, \text{PR-MDP} \\
                \nabla_{\bar \theta} Q(\state, (1 - \alpha) f(\state; \theta) + \alpha \bar f (\state; \bar \theta)) &, \text{NR-MDP}
                \end{cases}$
            \STATE Update critic
            \STATE Update the target networks:
            \STATE $\enspace\enspace\enspace\enspace\enspace\enspace \theta^- \gets \tau \theta + (1 - \tau) \theta^-$
            \STATE $\enspace\enspace\enspace\enspace\enspace\enspace \bar \theta^- \gets \tau \bar \theta + (1 - \tau) \bar \theta^-$
            \STATE $\enspace\enspace\enspace\enspace\enspace\enspace \phi^- \gets \tau \phi + (1 - \tau) \phi^-$
        \ENDFOR
    \ENDFOR
\end{algorithmic}
\end{algorithm}

Algorithm~\ref{alg:robust_ddpg} presents our Action Robust approach adapted to the DDPG algorithm \cite{lillicrap2015continuous}. The action we play during exploration is based on the exploration scheme selected, OU noise adds noise at the action level whereas in parameter space noise we pertube the parameters $\theta$ and $\bar \theta$.

Notice that the critic update is different, in both scenarios, from the default DDPG update rule. The reason is that the critic is updated based on the expectation over the policy, which in the NR-MDP results in the $\alpha$ mixture policy and in the PR-MDP a convex sum of $Q$ values.

\newpage
Figure~\ref{fig:rddpg_diagram} presents a block diagram of our approach for the NR-MDP scenario:

\tikzstyle{block} = [draw, fill=white, rectangle, 
    minimum height=3em, minimum width=6em]
\tikzstyle{sum} = [draw, fill=white, circle, node distance=1cm]
\tikzstyle{pinstyle} = [pin edge={to-,thin,black}]

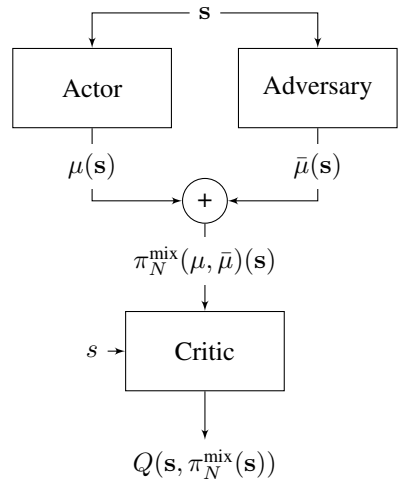
\begin{figure}[H]
\centering
\caption{Action Robust DDPG, NR-MDP}\label{fig:rddpg_diagram}
\begin{tikzpicture}[auto, node distance=2cm,>=latex']
    \node [block] (actor) {Actor};
    \node [sum, below of=actor, node distance=1.5cm, xshift=1.5cm] (sum) {+};
    \node [block, right of=actor, node distance=3cm] (adversary) {Adversary};
    \node [block, below of=actor, node distance=3.5cm, xshift=1.5cm] (critic) {Critic};
    \node [below of=critic, node distance=1.5cm] (out) {$Q(\state, \pi^\text{mix}_N (\state))$};
    \node [left of=critic, node distance=1.5cm] (in_critic) {$s$};
    \node [above of=actor, node distance=1cm, xshift=1.5cm] (in_actor) {$\state$};
    \node [coordinate, above of=actor, node distance=1cm] (input_left) {};
    \node [coordinate, above of=adversary, node distance=1cm] (input_right) {};

    \draw [->] (actor.south)--(actor|-sum.west)node[above, midway, outer sep=5pt, fill=white, yshift=-0.5cm] {$\mu(\state)$}--(sum.west);
    \draw [->] (adversary.south)--(adversary|-sum.east)node[above, midway, outer sep=5pt, fill=white, yshift=-0.5cm] {$\bar \mu(\state)$}--(sum.east);
    \draw [->] (sum)--(critic)node[outer sep=5pt, fill=white, midway, above, yshift=-0.4cm]{$\pi^\text{mix}_N (\mu, \bar \mu)(\state)$};
    \draw [->] (critic)--(out);
    \draw [->] (in_actor.west)--(input_left)--(actor.north);
    \draw [->] (in_actor.east)--(input_right)--(adversary.north);
    \draw [->] (in_critic)--(critic);
\end{tikzpicture}
\end{figure}

We improve the actor (adversary) by taking the gradient of $Q$ w.r.t. $\theta (\bar \theta)$ and performing backpropagation through the critic. Autograd engines \cite{baydin2018automatic} automatically ensure that the gradients propagate directly to the actor (adversary) without affecting the adversary (actor) or the critic. During exploration we simply play $\pi_N^\text{mix}$ a deterministic policy (as it is a convex sum of two deterministic values).

For the PR-MDP the schema is similar to the standard DDPG approach.

\begin{figure}[H]
\centering
\caption{Action Robust DDPG, PR-MDP}\label{fig:pr-mdp rddpg_diagram}
\begin{tikzpicture}[auto, node distance=2cm,>=latex']
    \node [block] (actor) {Actor};
    \node [sum, below of=actor, node distance=1.5cm, xshift=1.5cm] (sum) {+};
    \node [block, right of=actor, node distance=3cm] (adversary) {Adversary};
    \node [above of=actor, node distance=1cm, xshift=1.5cm] (in_actor) {$\state$};
    \node [coordinate, above of=actor, node distance=1cm] (input_left) {};
    \node [coordinate, above of=adversary, node distance=1cm] (input_right) {};

    \draw [->] (actor.south)--(actor|-sum.west)node[above, midway, outer sep=5pt, fill=white, yshift=-0.5cm] {$\mu(\state)$}--(sum.west);
    \draw [->] (adversary.south)--(adversary|-sum.east)node[above, midway, outer sep=5pt, fill=white, yshift=-0.5cm] {$\bar \mu(\state)$}--(sum.east);
    \draw [->] (sum)--(critic)node[outer sep=5pt, fill=white, midway, above, yshift=-0.4cm]{$\pi^\text{mix}_P (\mu, \bar \mu)(\state)$};
    \draw [->] (in_actor.west)--(input_left)--(actor.north);
    \draw [->] (in_actor.east)--(input_right)--(adversary.north);
\end{tikzpicture}
\end{figure}

Figure~\ref{fig:pr-mdp rddpg_diagram} depicts the block diagram during exploration. $\pi^\text{mix}_P$ defines a stochastic policy over $\mu$ and $\bar \mu$. Thus, with probability $1 - \alpha$ we sample action $\mu(\state)$ and otherwise $\bar \mu (\state)$.

\begin{figure}[H]
\centering
\caption{Action Robust DDPG, PR-MDP}\label{fig: pr-mdp critic rddpg_diagram}
\begin{tikzpicture}[auto, node distance=2cm,>=latex']
    \node [block] (actor) {Actor};
    \node [block, below of=actor, node distance=2.5cm] (critic) {Critic};
    \node [below of=critic, node distance=1.5cm] (out) {$Q(\state, \mu (\state))$};
    \node [left of=critic, node distance=1.5cm] (in_critic) {$s$};
    \node [above of=actor, node distance=1cm] (in_actor) {$\state$};
    \node [coordinate, above of=actor, node distance=1cm] (input_left) {};

    \draw [->] (actor.south)--(critic.north)node[above, midway, outer sep=5pt, fill=white, yshift=-0.5cm] {$\mu(\state)$};
    \draw [->] (critic)--(out);
    \draw [->] (input_left)--(actor.north);
    \draw [->] (in_critic)--(critic);
\end{tikzpicture}
\end{figure}

Figure~\ref{fig: pr-mdp critic rddpg_diagram} presents the approach during training. This approach is identical to the standard DDPG approach, except that once taking the gradient $\nabla_\theta Q(s, \mu_\theta (\state))$, we multiply the loss (similar to a change of learning rate) by $1 - \alpha$.

The critic is trained on the expectation over the mixture policies, which in the case of DDPG results in $Q(\state,\action) = r(\state, \action) + \gamma [(1-\alpha) Q(\state', \mu(\state')) + \alpha Q(\state', \bar \mu(\state'))]$.

\newpage
\section{Empirical Results}\label{apndx: empirical results}

\begin{figure}[h]%[!b]
\centering
\begin{tabular}{>{\centering\arraybackslash}m{.25\linewidth} >{\centering\arraybackslash}m{.25\linewidth} >{\centering\arraybackslash}m{.25\linewidth}}
 No Noise & OU Noise & Param Noise \\ 
 \includegraphics[width=40mm]{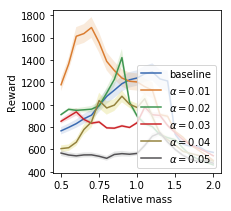} & \includegraphics[width=40mm]{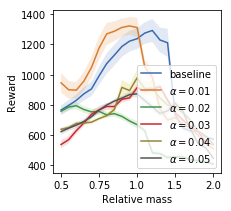} & \includegraphics[width=40mm]{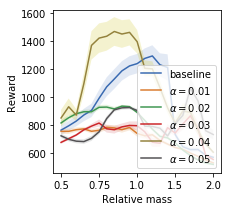} \\
 \includegraphics[width=40mm]{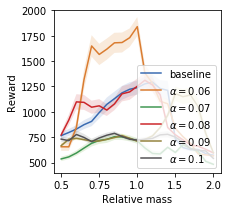} & \includegraphics[width=40mm]{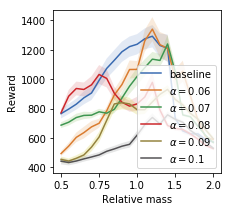} & \includegraphics[width=40mm]{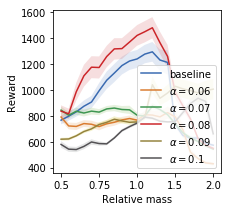} \\
 \includegraphics[width=40mm]{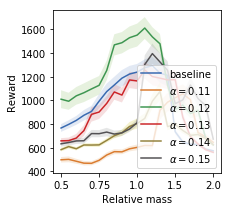} & \includegraphics[width=40mm]{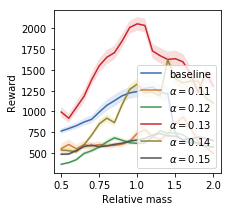} & \includegraphics[width=40mm]{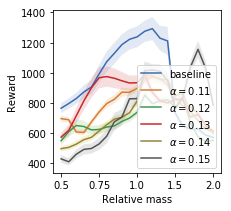} \\
 \includegraphics[width=40mm]{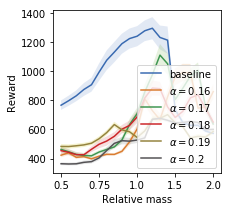} & \includegraphics[width=40mm]{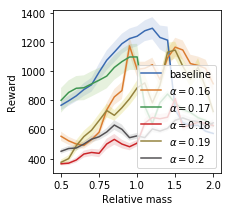} & \includegraphics[width=40mm]{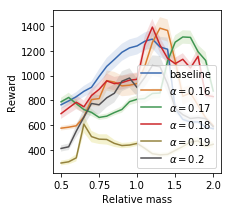}
\end{tabular}
\caption{NR-MDP: exploration and $\alpha$ ablation.}
\end{figure}

\begin{figure}[h!]%[!b]
\centering
\begin{tabular}{>{\centering\arraybackslash}m{.2\linewidth} >{\centering\arraybackslash}m{.2\linewidth} >{\centering\arraybackslash}m{.2\linewidth} >{\centering\arraybackslash}m{.2\linewidth}}
 \includegraphics[width=40mm]{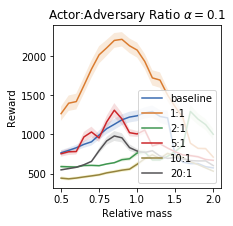} & \includegraphics[width=40mm]{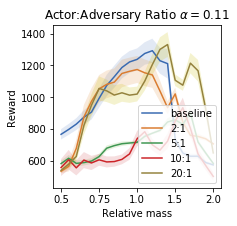} & \includegraphics[width=40mm]{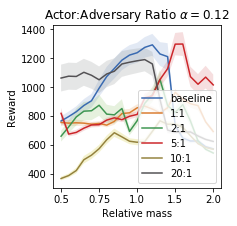} & \includegraphics[width=40mm]{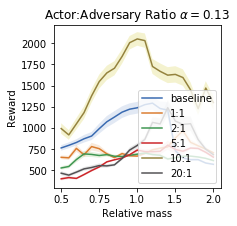}
\end{tabular}
\caption{NR-MDP: $\alpha$ and training ratio ablation.}
\end{figure}

\begin{figure}[h!]%[!b]
\centering
\begin{tabular}{>{\centering\arraybackslash}m{.25\linewidth} >{\centering\arraybackslash}m{.25\linewidth} >{\centering\arraybackslash}m{.25\linewidth}}
 No Noise & OU Noise & Param Noise \\ 
 \includegraphics[width=40mm]{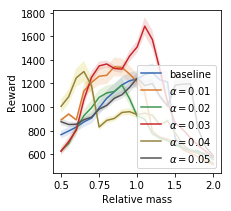} & \includegraphics[width=40mm]{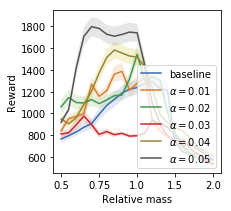} & \includegraphics[width=40mm]{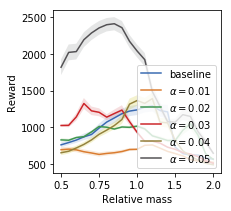} \\
 \includegraphics[width=40mm]{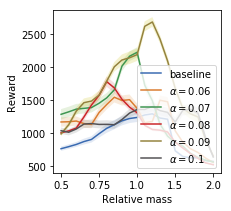} & \includegraphics[width=40mm]{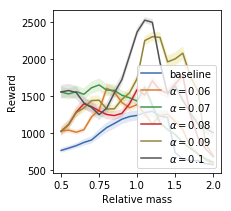} & \includegraphics[width=40mm]{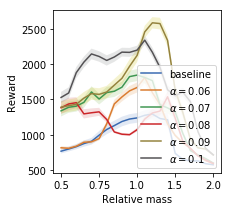} \\
 \includegraphics[width=40mm]{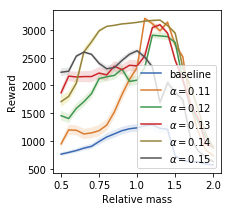} & \includegraphics[width=40mm]{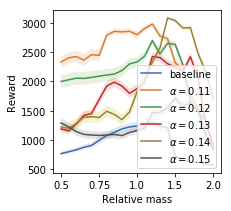} & \includegraphics[width=40mm]{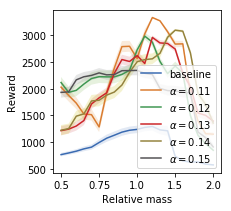} \\
 \includegraphics[width=40mm]{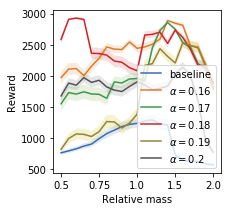} & \includegraphics[width=40mm]{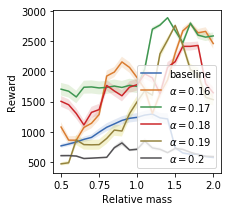} & \includegraphics[width=40mm]{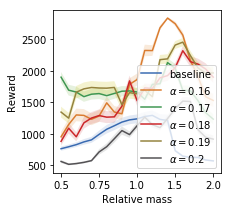}
\end{tabular}
\caption{PR-MDP: exploration and $\alpha$ ablation.}
\end{figure}

\begin{figure}[h!]%[!b]
\centering
\begin{tabular}{>{\centering\arraybackslash}m{.2\linewidth} >{\centering\arraybackslash}m{.2\linewidth} >{\centering\arraybackslash}m{.2\linewidth} >{\centering\arraybackslash}m{.2\linewidth}}
 \includegraphics[width=40mm]{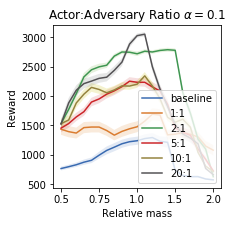} & \includegraphics[width=40mm]{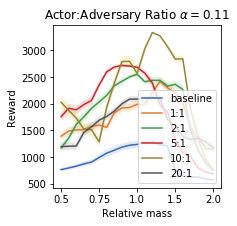} & \includegraphics[width=40mm]{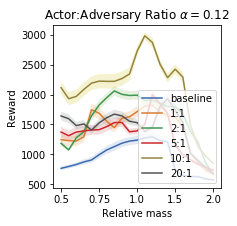} & \includegraphics[width=40mm]{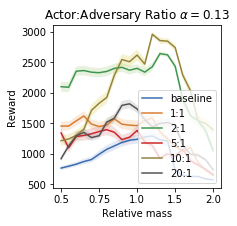}
\end{tabular}
\caption{PR-MDP: $\alpha$ and training ratio ablation.}
\end{figure}

\begin{figure}[h]%[!b]
\centering
\begin{tabular}{>{\centering\arraybackslash}m{.15\linewidth} >{\centering\arraybackslash}m{.25\linewidth} >{\centering\arraybackslash}m{.25\linewidth} >{\centering\arraybackslash}m{.25\linewidth}}
  & Baseline & NR-MDP & PR-MDP \\ 
 Hopper & \includegraphics[width=40mm]{figures/hopper_baseline} & \includegraphics[width=40mm]{figures/hopper_nr} & \includegraphics[width=40mm]{figures/hopper_pr} \\
 Walker2d & \includegraphics[width=40mm]{figures/walker_baseline} & \includegraphics[width=40mm]{figures/walker_nr} & \includegraphics[width=40mm]{figures/walker_pr} \\
 Humanoid & \includegraphics[width=40mm]{figures/humanoid_baseline} & \includegraphics[width=40mm]{figures/humanoid_nr} & \includegraphics[width=40mm]{figures/humanoid_pr} \\
 InvertedPendulum & \includegraphics[width=40mm]{figures/inverted_baseline} & \includegraphics[width=40mm]{figures/inverted_nr} & \includegraphics[width=40mm]{figures/inverted_pr}
\end{tabular}
\caption{Robustness to model uncertainty. Noise probability denotes the probability of a randomly sampled noise being played instead of the selected action.}
\end{figure}

\begin{figure}[h!]%[!b]
\centering
\begin{tabular}{>{\centering\arraybackslash}m{.15\linewidth} >{\centering\arraybackslash}m{.25\linewidth} >{\centering\arraybackslash}m{.25\linewidth} >{\centering\arraybackslash}m{.25\linewidth}}
  & Baseline & NR-MDP & PR-MDP \\
 Swimmer & \includegraphics[width=40mm]{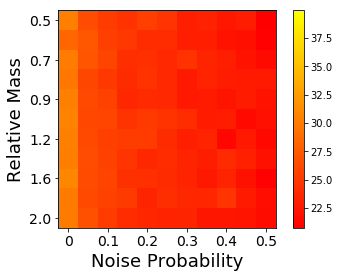} & \includegraphics[width=40mm]{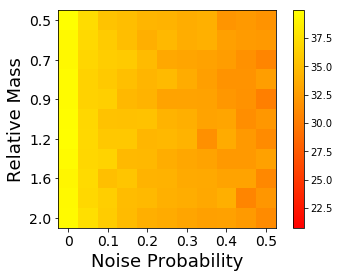} & \includegraphics[width=40mm]{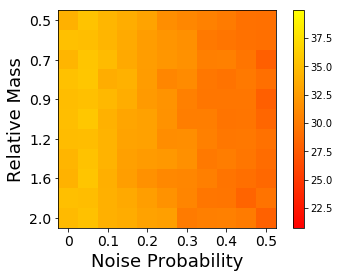} \\
 HalfCheetah & \includegraphics[width=40mm]{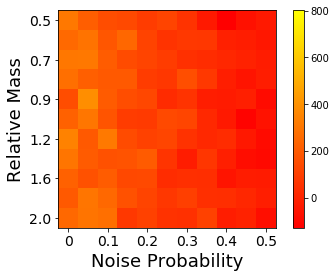} & \includegraphics[width=40mm]{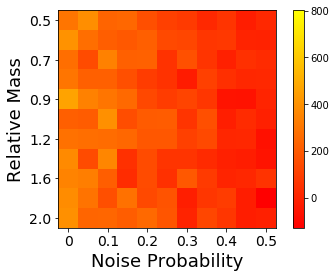} & \includegraphics[width=40mm]{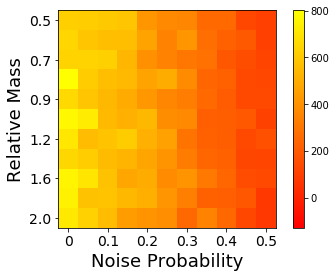} \\
 Ant & \includegraphics[width=40mm]{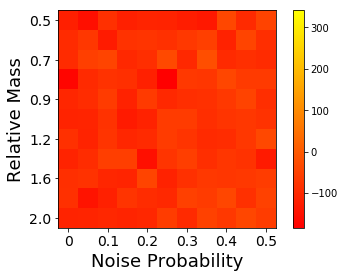} & \includegraphics[width=40mm]{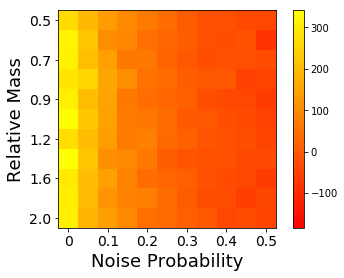} & \includegraphics[width=40mm]{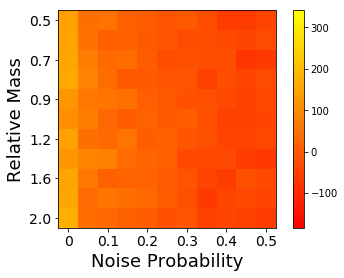}
\end{tabular}
\caption{Robustness to model uncertainty continued. Noise probability denotes the probability of a randomly sampled noise being played instead of the selected action.}
\end{figure}

\begin{figure}[h!]%[!b]
\centering
\begin{tabular}{>{\centering\arraybackslash}m{.25\linewidth} >{\centering\arraybackslash}m{.25\linewidth} >{\centering\arraybackslash}m{.25\linewidth}}
 \includegraphics[width=40mm]{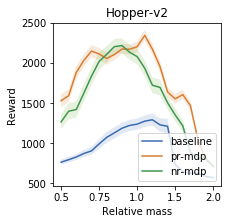} & \includegraphics[width=40mm]{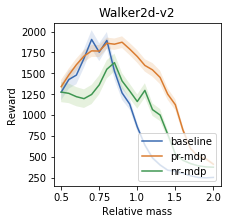} & \includegraphics[width=40mm]{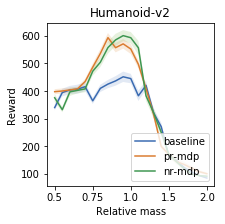} \\
 \includegraphics[width=40mm]{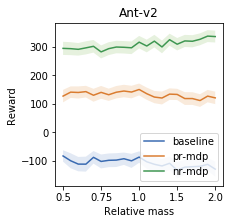} & \includegraphics[width=40mm]{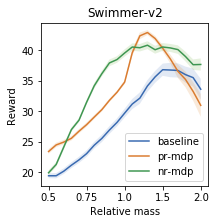} & \includegraphics[width=40mm]{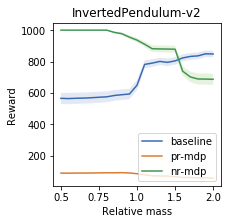} \\
 \includegraphics[width=40mm]{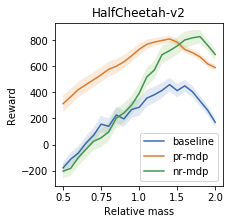} & & \\
\end{tabular}
\caption{Robustness to mass uncertainty.}
\end{figure}

% ~
% \newpage
% ~
% \newpage
% ~
% \newpage
% ~
% \newpage
% \putbib
% \end{bibunit}

\end{document}